\documentclass[11pt,english]{article}
\usepackage{mathpazo}
\usepackage[T1]{fontenc}
\usepackage[latin9]{inputenc}
\usepackage{geometry}
\geometry{verbose,tmargin=1in,bmargin=1in,lmargin=1in,rmargin=1in}
\usepackage{babel}

\usepackage{float}
\usepackage{calc}
\usepackage{mathtools}
\usepackage{amsmath}
\usepackage{amsthm}
\usepackage{amssymb}
\usepackage{color}
\usepackage{enumitem}
\usepackage{xargs}[2008/03/08]
\usepackage[colorlinks,
    linkcolor=red,
    anchorcolor=blue,
    citecolor=blue
    ]{hyperref}
\usepackage{appendix}
\allowdisplaybreaks

\makeatletter

\floatstyle{ruled}
\newfloat{algorithm}{tbp}{loa}
\providecommand{\algorithmname}{Algorithm}
\floatname{algorithm}{\protect\algorithmname}

\theoremstyle{plain}
\newtheorem{thm}{\protect\theoremname}
\theoremstyle{remark}

\theoremstyle{plain}
\newtheorem{cor}{Corollary}
\theoremstyle{plain}

\theoremstyle{plain}
\newtheorem{lem}{Lemma}
\theoremstyle{plain}
\newtheorem{fact}{Fact}

 \newcommand{\citep}{\cite}
 \newcommand{\citet}{\cite}

\usepackage{caption}
\usepackage{subcaption}

\usepackage{algorithm}
\usepackage{algorithmicx}
\usepackage{algpseudocode}
\usepackage{cleveref}

\newcommand{\algmargin}{\the\ALG@thistlm}

\newlength{\whilewidth}
\settowidth{\whilewidth}{\algorithmicwhile\ }
\algdef{SE}[parWHILE]{parWhile}{EndparWhile}[1]
{\parbox[t]{\dimexpr\linewidth-\algmargin}{%
		\hangindent\whilewidth\strut\algorithmicwhile\ #1\ \algorithmicdo\strut}}{\algorithmicend\ \algorithmicwhile}%
\algnewcommand{\parState}[1]{\State%
	\parbox[t]{\dimexpr\linewidth-\algmargin}{\strut #1\strut}}

\makeatother

\providecommand{\theoremname}{Theorem}

\title{Risk-Sensitive Reinforcement Learning: Near-Optimal Risk-Sample Tradeoff in Regret}

\author{%
    \normalsize Yingjie Fei\thanks{School of Operations Research and Information Engineering, Cornell University; \texttt{yf275@cornell.edu}}
    \qquad
    \normalsize Zhuoran Yang\thanks{Department of Operations Research and Financial Engineering, Princeton University; \texttt{zy6@princeton.edu}} 
    \qquad
    \normalsize Yudong Chen\thanks{School of Operations Research and Information Engineering, Cornell University; \texttt{yudong.chen@cornell.edu}}
    \qquad
    \normalsize Zhaoran Wang\thanks{Department of Industrial Engineering and Management Sciences, Northwestern University; \texttt{zhaoranwang@gmail.com}}
    \qquad
    \normalsize Qiaomin Xie\thanks{School of Operations Research and Information Engineering, Cornell University; \texttt{qiaomin.xie@cornell.edu}}
}
\date{}

\begin{document}
\global\long\def\cC{{\cal C}}%
\global\long\def\cM{{\cal M}}%
\global\long\def\cN{{\cal N}}%
\global\long\def\cV{{\cal V}}%
\global\long\def\cF{{\cal F}}%
\global\long\def\cR{{\cal R}}%
\global\long\def\cP{{\cal P}}%
\global\long\def\cG{{\cal G}}%
\global\long\def\cB{{\cal B}}%
\global\long\def\cD{{\cal D}}%
\global\long\def\cX{{\cal X}}%
\global\long\def\cS{{\cal S}}%
\global\long\def\cA{{\cal A}}%
\global\long\def\cY{{\cal Y}}%
\global\long\def\cL{{\cal L}}%
\global\long\def\cQ{{\cal Q}}%
\global\long\def\cU{{\cal U}}%
\global\long\def\real{\mathbb{R}}%
\global\long\def\E{\mathbb{E}}%
\global\long\def\P{\mathbb{P}}%
\global\long\def\pol{\pi}%
\global\long\def\indic{\mathbb{I}}%
\global\long\def\Ps{P^{\star}}%
\global\long\def\Php{\hat{P}^{\pol}}%
\global\long\def\Pp{P^{\pol}}%
\global\long\def\Vs{V^{\star}}%
\global\long\def\Vp{V^{\pol}}%
\global\long\def\Qs{Q^{\star}}%
\global\long\def\Qp{Q^{\pol}}%
\global\long\def\vhat{\hat{v}}%
\global\long\def\zhat{\hat{z}}%
\global\long\def\e{\mathbf{e}}%
\global\long\def\g{\mathbf{g}}%
\global\long\def\w{w}%
\global\long\def\v{\mathbf{v}}%
\global\long\def\fmap{\phi}%
\global\long\def\pmap{\mu}%
\global\long\def\IdMat{I}%
\global\long\def\bP{\mathbf{P}}%
\global\long\def\A{\mathbf{A}}%
\global\long\def\ucbrate{\lambda_{\text{UCB}}}%
\global\long\def\tmp{\beta}%
\global\long\def\diag{\mathop{\text{diag}}}%
\global\long\def\argmin{\mathop{\text{argmin}}}%
\global\long\def\argmax{\mathop{\text{argmax}}}%
\newcommandx\norm[2][usedefault, addprefix=\global, 1=\#1]{\ensuremath{\left\Vert #1\right\Vert {}_{#2}}}%
\global\long\def\conf{\text{conf}}%
\global\long\def\lse{\mathsf{lse}}%
\renewcommandx\norm[2][usedefault, addprefix=\global, 1=\#1]{\Vert#1\|_{#2}}%
\global\long\def\reg{\textup{Regret}}%
\global\long\def\Var{\textup{Var}}%

\maketitle

\begin{abstract}
  We study risk-sensitive reinforcement learning in episodic Markov
  decision processes with unknown transition kernels, where the goal is to optimize the total reward
  under the risk measure of exponential utility. We propose two provably
  efficient model-free algorithms, Risk-Sensitive Value Iteration (RSVI) and Risk-Sensitive Q-learning (RSQ). 
  These algorithms implement
  a form of risk-sensitive optimism in the face of uncertainty, which
  adapts to both risk-seeking and risk-averse modes of exploration.
  We prove that RSVI attains an $\ensuremath{\tilde{O}\big(\lambda(|\beta| H^2) \cdot \sqrt{H^{3} S^{2}AT} \big)}$
  regret, while RSQ attains an $\ensuremath{\tilde{O}\big(\lambda(|\beta| H^2) \cdot \sqrt{H^{4} SAT} \big)}$
  regret, where $\lambda(u) = (e^{3u}-1)/u$ for $u>0$.
  In the above, $\beta$ is the risk parameter of the exponential
  utility function, $S$ the number of states, $A$ the number of actions,  $T$ the total number of timesteps, and $H$ the episode length.
  On the flip side, we establish a regret lower bound showing that the exponential
  dependence on $|\beta|$ and $H$ is unavoidable for
  any algorithm with an $\tilde{O}(\sqrt{T})$ regret (even when the risk objective is on the same scale as the original reward), thus
  certifying the near-optimality of the proposed algorithms. Our results
  demonstrate that incorporating risk awareness into reinforcement learning necessitates an exponential cost in $|\beta|$ and $H$,
  which quantifies the fundamental tradeoff between risk sensitivity
  (related to aleatoric uncertainty) and sample efficiency (related
  to epistemic uncertainty). To the best of our knowledge, this is the  first regret analysis of risk-sensitive reinforcement learning with
  the exponential utility.
\end{abstract}

\section{Introduction}

Risk-sensitive reinforcement learning (RL) concerns
learning to act in a dynamic environment while taking into account
risks that arise during the learning process. Effective management
of risks in RL is critical to many real-world applications
such as autonomous driving \citep{garcia2015comprehensive}, real-time
strategy games \citep{vinyals2019grandmaster}, financial investment
\citep{markowitz1952portfolio}, etc. In neuroscience, risk-sensitive RL has been applied to model human behaviors in decision making \citep{niv2012neural, shen2014risk}.

In this paper, we consider risk-sensitive RL with the exponential
utility \citep{howard1972risk} under episodic Markov decision processes (MDPs) with unknown transition kernels. Informally,
the agent aims to maximize a risk-sensitive objective function of the form 
\begin{equation}
V=\frac{1}{\beta}\log\left\{ \E e^{\beta R}\right\} ,\label{eq:risk_sen_obj_informal}
\end{equation}
where $R$ is the total reward the agent receives, and $\beta\ne0$ is a real-valued parameter that controls risk preference of the agent; see Equation \eqref{eq:value_func} for a formal
definition of $V$. The objective $V$ admits the Taylor expansion
$V = \E[R]+\frac{\beta}{2}\Var(R)+O(\beta^{2}).$ It can be seen
that for $\beta>0$ the agent is risk-seeking (favoring high uncertainty
in $R$), for $\beta<0$ the agent is risk-averse (favoring low
uncertainty in $R$), and a larger $|\beta|$ implies
higher risk-sensitivity. When $\beta\to0$, the agent
tends to be risk-neutral and the objective reduces to the expected reward objective $ V=\E[R] $ standard in RL. Therefore, the risk-sensitive objective
in~\eqref{eq:risk_sen_obj_informal} covers the entire spectrum of risk sensitivity by varying $\beta$. In addition, the formulation
\eqref{eq:risk_sen_obj_informal} is closely related to RL with constraints.
For example, a negative risk parameter $\beta$ controls
the tail of a risk distribution so as to mitigate the chance of receiving
a total reward $R$ that is excessively low. We refer to \citet[Section 2.1]{maillard2013robust}
for an in-depth discussion of this connection. 

The challenge of risk-sensitive RL lies both in the non-linearity of the objective function and in designing a risk-aware exploration mechanism.
In particular, as we elaborate in Section \ref{sec:bellman}, the non-linear objective
function \eqref{eq:risk_sen_obj_informal} induces a non-linear Bellman equation.
Classical RL algorithms are inappropriate in this setting,
as their design crucially relies on the linearity of Bellman equations.
On the other hand, effective exploration has been well known to be crucial to RL algorithm design, yet it is not clear how to design an algorithm
that efficiently explores  uncertain environments while at the same time adapting to 
the risk-sensitive
objective \eqref{eq:risk_sen_obj_informal} of agents with different risk parameter $\beta$.

To address these difficulties, we propose two model-free algorithms, 
Risk-Sensitive Value Iteration (RSVI) and Risk-Sensitive Q-learning (RSQ). Specifically, RSVI is a batch algorithm and RSQ is an online algorithm; both families of batch and online algorithms see broad applications in practice.
We demonstrate in Section \ref{sec:algorithm} that our proposed algorithms implement a form of risk-sensitive optimism for exploration. 
Importantly, the exact implementation of optimism depends
on both the magnitude and the sign of the risk parameter, and therefore
applies to both risk-seeking and risk-averse modes of learning.  
Letting $\lambda(u) = (e^{3u}-1)/u$ for $u>0$, we prove that RSVI attains an 
$\ensuremath{\tilde{O}\big(\lambda(|\beta| H^2) \cdot \sqrt{H^{3} S^{2}AT} \big)}$
regret, and RSQ achieves an 
$\ensuremath{\tilde{O}\big(\lambda(|\beta| H^2) \cdot \sqrt{H^{4} SAT} \big)}$
 regret. Here, $S$ and $A$ are the numbers of states and actions, respectively,
$T$ is the total number of timesteps, and $H$ is the length of each episode.
These regret bounds interpolate across different regimes of risk sensitivity and subsume  existing results under the risk-neutral setting. Compared with risk-neutral RL (corresponding to $\beta \rightarrow  0$), our general regret bounds feature an exponential dependency on $|\beta|$ and $H$, even though the risk-sensitive objective \eqref{eq:risk_sen_obj_informal} is on the same scale as the total reward; see Figure \ref{fig:exp_factor} for a plot of the exponential factor $\lambda(|\beta| H^2)$.
Complementarily, we prove a lower bound showing that such an exponential dependency is inevitable for any algorithm and thus certifies the near-optimality of the proposed algorithms. To the best of our knowledge, our work provides the first regret analysis of risk-sensitive RL with the exponential utility. 

Our upper and lower bounds demonstrate the fundamental tradeoff between risk sensitivity and sample efficiency in RL.\footnote{By standard arguments, regret can be translated into sample complexity bounds and vice versa; see \cite{jin2018q}.} 
Broadly speaking, risk sensitivity is associated with \emph{aleatoric} uncertainty, which originates from the inherent randomness of state transition, actions and rewards, whereas sample efficiency is associated with \emph{epistemic} uncertainty, which arises from 
imperfect knowledge of the environment/system
and can be reduced by more exploration \citep{depeweg2017decomposition, clements2019estimating}. These two notions of uncertainty are usually decoupled in the regret analysis of risk-neutral RL---in particular, using the expected reward as the objective effectively suppresses the aleatoric uncertainty. In risk-sensitive RL, we establish that there is a fundamental connection and tradeoff between these two forms of uncertainty: the risk-seeking and risk-averse regimes both incur an exponential cost in $|\beta|$ and $H$ on the regret, whereas the regret is polynomial in $ H $ in the risk-neutral regime.

\begin{figure}
  \centering
  \includegraphics[scale=.5]{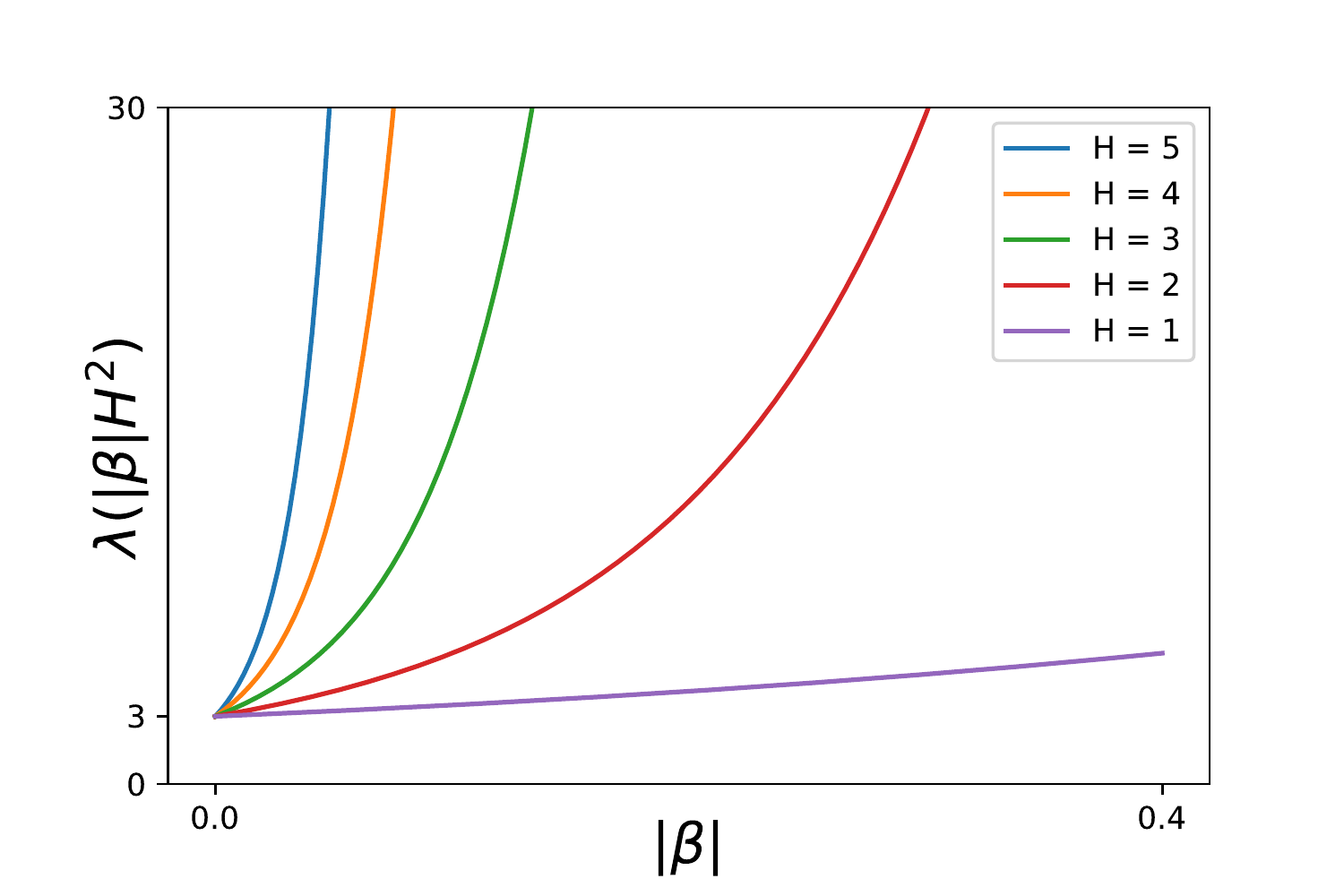}
  \caption{Scaling of $\lambda(|\beta| H^2)$ in risk sensitivity $|\beta|$ for different values of episode length $H$.}
  \label{fig:exp_factor}
\end{figure}

\paragraph*{Our contributions.}
The contributions of our work can be summarized as follows:
\begin{itemize}
	\item We consider the problem of risk-sensitive RL with the exponential utility.  
	We propose two provably efficient model-free algorithms, namely RSVI and RSQ, that implement risk-sensitive optimism in the face of uncertainty;
	\item 
	We provide regret analysis for both algorithms over the entire spectrum of risk parameter $\beta$.
	As $\beta\to 0$, we show that our results recover 
	the existing regret bounds in the risk-neutral setting;
	\item We provide a lower bound result that certifies the near-optimality of our upper bounds and reveals a fundamental tradeoff between risk sensitivity and sample complexity.
\end{itemize}

\paragraph*{Related work.}


RL with risk-sensitive utility functions have been studied in several work. The work \citet{mihatsch2002risk} proposes TD(0) and Q-learning-style algorithms
that transform temporal differences instead of cumulative rewards, and proves their convergence.
Risk-sensitive RL with a general family of utility functions is studied
in \citet{shen2014risk}, which also proposes a Q-learning algorithm with convergence guarantees. The work of \citet{eriksson2019epistemic}
studies a risk-sensitive policy gradient algorithm, though with no theoretical guarantees. We remark that while substantial work has been devoted to designing risk-sensitive RL algorithms and proving their convergence, the issues of exploration, sample efficiency and regret bounds have rarely been studied. Our work narrows this gap in the literature by studying regret bounds of model-free algorithms for risk-sensitive RL.

The exponential utility has also been been investigated
in the more classical setting of MDPs.  
Following the seminal work of \citet{howard1972risk},
this line of work includes \citet{bauerle2014more,borkar2001sensitivity,borkar2002q,borkar2002risk,cavazos2000vanishing,coraluppi1999risk,di1999risk,fernandez1997risk,fleming1995risk,hernandez1996risk,marcus1997risk,osogami2012robustness,shen2013risk,whittle1990risk}. Note that these papers impose more restrictive assumptions and study different types of results than ours. Specifically, they assume known transition kernels or access to simulators, and they do not conduct finite-time or finite-sample analysis.
Another related direction to ours is RL with risk/safety
constraints studied by \citet{achiam2017constrained,altman1999constrained,chow2014algorithms,chow2015risk,chow2017risk,chow2019lyapunov,ding2020provably,efroni2020exploration,qiu2020upper,tamar2015policy,xie2018block,zheng2020constrained}, and readers are also referred to \citet{fu2018risk} for an excellent
survey on this topic. Compared to our work, that line of work focuses on constrained RL problems with  different risk criteria. Other related problems include  risk-sensitive games  \citet{basu2012zero,basu2014zero,bauerle2017zero,cavazos2019vanishing,huang2019model,jaskiewicz2014stationary,klompstra2000nash,wei2019nonzero}, and risk-sensitive bandits \citet{cassel2018general,denardo2007risk,denardo2013multi,maillard2013robust,sani2012risk,sun2017safety,vakili2016risk,yu2013sample,zimin2014generalized}. Bandit problems are special cases of the RL problem that we investigate, with both the number of states and episode length being equal to one. As such, both our settings and results are more general than those obtained in bandit problems.

\paragraph*{Notations.}

For a positive integer $n$, let $[n]\coloneqq\{1,2,\ldots,n\}$.
For two non-negative sequences $\{a_{i}\}$ and $\{b_{i}\}$, we
write $a_{i}\lesssim b_{i}$ if there exists a universal constant
$C>0$ such that $a_{i}\le Cb_{i}$ for all $i$. We write $a_{i}\asymp b_{i}$
if $a_{i}\lesssim b_{i}$ and $b_{i}\lesssim a_{i}$. We use $\tilde{O}(\cdot)$
to denote $O(\cdot)$ while hiding logarithmic factors.

\section{Problem setup\label{sec:setup}}
\subsection{Episodic MDPs and risk-sensitive objective}
We consider the setting of episodic MDPs, denoted
by $\text{MDP}(\cS,\cA,H,\cP,\cR)$, where $\cS$ is the set of possible
states, $\cA$ is the set of possible actions, $H$
is the length of each episode, and $\cP=\{P_{h}\}_{h\in[H]}$ and
$\cR=\{r_{h}\}_{h\in[H]}$ are the sets of state transition kernels and reward functions, respectively. In particular, for each $h\in[H]$, $P_{h}(\cdot\,|\,s,a)$
is the distribution of the next state if action $a$ is taken
in state $s$ at step $h$. We assume that $\cS$ and $\cA$ are finite discrete spaces, and let $S=\left|\cS\right|$ and $A=\left|\cA\right|$ denote their cardinalities. We assume that the agent does not have access to $ \{P_{h}\} $ and  that each $ r_{h}:\cS\times\cA\to[0,1]  $
is a deterministic function.

An agent interacts with an episodic MDP as follows. At the beginning
of each episode, an initial state $s_{1}$ is chosen arbitrarily by
the environment. In each step $h\in[H]$, the agent observes
a state $s_{h}\in\cS$, chooses an action $a_{h}\in\cA$, and receives
a reward $r_{h}(s_{h},a_{h})$. The MDP then transitions into
a new state $s_{h+1} \sim P_{h}(\cdot\,|\,s_{h},a_{h})$. We use the convention that the episode terminates when a state $s_{H+1}$
at step $H+1$ is reached, at which the agent does not take an action and receives no reward.

A policy $\pi=\{\pi_{h}\}_{h\in[H]}$ of an agent is a sequence of functions $\pi_{h}:\cS\to\cA$, where $\pi_{h}(s)$
is the action that the agent takes in state $s$ at step $h$ of an
episode. For each $h\in[H]$, we define the value function $V_{h}^{\pi}:\cS\to\real$
of a policy $\pi$ as the expected value of cumulative rewards the
agent receives under a risk measure of exponential utility by executing
policy $\pi$ starting from an arbitrary state at step $h$. Specifically,
we have 
\begin{equation}
V_{h}^{\pi}(s)\coloneqq\frac{1}{\tmp}\log\left\{ \E\left[\exp\left(\tmp\sum_{h=1}^{H}r_{h}(s_{h},\pi_{h}(s_{h}))\right)\  \Bigg|\  s_h=s \right]\right\} ,\label{eq:value_func}
\end{equation}
for each $(h,s)\in[H]\times\cS$. Here $\beta\ne0$ is the risk parameter
of the exponential utility: $\beta>0$
corresponds to a risk-seeking value function, $\beta<0$ corresponds
to a risk-averse value function, and as $\beta\to0$ the agent tends
to be risk-neutral and we recover the classical value function $ V_{h}^{\pi}(s) = \E[ \sum_{h=1}^{H}r_{h}(s_{h},\pi_{h}(s_{h})) \ |\  s_h=s ] $ in
RL.
The goal of the agent is to find a policy $\pi$ such that $V_{1}^{\pi}(s)$
is maximized for all state $s\in\cS$. 
Note the logarithm and rescaling by $ 1/\beta $ in the above definition, which puts the objective $V_{1}^{\pi}(s)$ on the same scale as the total reward; this scaling property is made formal in Lemma~\ref{lem:simple_bound} below.

\subsection{Bellman equations and regret} \label{sec:bellman}

We further define the action-value function $Q_{h}^{\pi}:\cS\times\cA\to\real$,
which gives the expected value of the risk measured by the exponential utility when the agent starts from an arbitrary state-action
pair at step $h$ and follows policy $\pi$ afterwards; that is, 
\[
Q_{h}^{\pi}(s,a)\coloneqq\frac{1}{\tmp}\log\left\{ \exp(\tmp\cdot r_{h}(s,a))\E\left[\exp\left(\tmp\sum_{h'=h+1}^{H}r_{h'}(s_{h'},a_{h'})\right)\Bigg|\ s_{h}=s,a_{h}=a\right]\right\} ,
\]
for all $(h,s,a)\in[H]\times\cS\times\cA$. The Bellman equation associated with policy $\pi$ is given by
\begin{equation}
\label{eq:bellman_primal}
\begin{aligned}
Q_{h}^{\pi}(s,a) & =r_{h}(s,a)+\frac{1}{\tmp}\log\left\{ \E_{s'\sim P_{h}(\cdot\,|\,s,a)}\left[\exp\left(\beta\cdot V_{h+1}^{\pi}(s')\right)\right]\right\} ,\\
V_{h}^{\pi}(s) & =Q_{h}^{\pi}(s,\pol_h(s)),\qquad V_{H+1}^{\pi}(s)=0, 
\end{aligned}
\end{equation}
which holds for all $(s,a)\in\cS\times\cA$. 

Under some mild regularity conditions,
there always exists an optimal policy $\pi^{*}$ which gives the optimal
value $V_{h}^{*}(s)=\sup_{\pi}V_{h}^{\pi}(s)$ for all $(h,s)\in[H]\times\cS$
\citep{bauerle2014more}. The Bellman optimality equation is given by
\begin{equation}
\label{eq:bellman_optimal_primal}
\begin{aligned}
Q_{h}^{*}(s,a) & =r_{h}(s,a)+\frac{1}{\tmp}\log\left\{ \E_{s'\sim P_{h}(\cdot\,|\,s,a)}\left[\exp\left(\beta\cdot V_{h+1}^{*}(s')\right)\right]\right\} ,\\
V_{h}^{*}(s) & =\max_{a\in\cA}Q_{h}^{*}(s,a),\qquad V_{H+1}^{*}(s)=0.
\end{aligned}
\end{equation}
This equation implies that the optimal policy $\pi^{*}$ is the greedy policy
with respect to the optimal action-value function $\{Q_{h}^{*}\}_{h\in[H]}$.
Hence, to find the optimal policy $\pi^{*}$, it suffices to estimate
the optimal action-value function. We note that both Bellman equations
\eqref{eq:bellman_primal} and \eqref{eq:bellman_optimal_primal}
are non-linear in the value and action-value functions due to non-linearity
of the exponential utility. This is in contrast with their linear risk-neutral
counterparts. 

Under the episodic MDP setting, the agent aims to learn the
optimal policy by interacting with the environment throughout a set
of episodes. For each $k\ge1$, let us denote by $s_{1}^{k}$ the
initial state chosen by the environment and $\pi^{k}$ the policy
chosen simultaneously by the agent at the beginning of episode $k$.
The difference in values between $V_{1}^{\pi^{k}}(s_{1}^{k})$ and
$V_{1}^{*}(s_{1}^{k})$ measures the expected regret or the sub-optimality
of the agent in episode $k$. 
After $K$ episodes, the
total regret for the agent is 
\begin{align}
\reg(K) & \coloneqq\sum_{k\in[K]}\left[V_{1}^{*}(s_{1}^{k})-V_{1}^{\pi^{k}}(s_{1}^{k})\right].\label{eq:regret}
\end{align}

We record the following simple worst-case upper bounds on the value functions and regret.
\begin{lem}
\label{lem:simple_bound}
	For any $ (h,s,a) \in \cS \times \cA\times [H]$, policy $ \pi $ and risk parameter $ \beta \neq 0 $, we have
	\begin{equation}\label{eq:simple_bound_value}
	0 \le V_h^\pi(s)  \le H
	\quad\text{and}\quad
	0 \le Q_h^\pi(s,a) \le H.
	\end{equation} 
	Consequently, for each $ K \ge 1 $, all policy sequences $ \pi^1,\ldots,\pi^K $ and any $ \beta \neq 0 $, we have
	\begin{equation}\label{eq:simple_bound_regret}
	0 \le \textup{Regret}(K) \le KH.
	\end{equation}
\end{lem}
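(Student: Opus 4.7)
The core observation is that the (conditional) cumulative reward $R \coloneqq \sum_{h'=h}^{H} r_{h'}(s_{h'},\pi_{h'}(s_{h'}))$ is bounded: since each $r_{h'}$ takes values in $[0,1]$ and there are at most $H-h+1 \le H$ remaining steps, we have $R \in [0,H]$ almost surely, regardless of the trajectory. The plan is to push this deterministic envelope through the exponential transform, through the expectation, and finally through the $(1/\beta)\log(\cdot)$ scaling. The only subtlety is that the map $u \mapsto (1/\beta)\log u$ is increasing for $\beta>0$ but decreasing for $\beta<0$, so I would do a short case split on the sign of $\beta$.

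For the upper bound on $V_h^\pi(s)$, I would argue as follows. When $\beta>0$, monotonicity of the exponential gives $e^{\beta R} \in [1, e^{\beta H}]$ almost surely, and the same interval then contains $\mathbb{E}[e^{\beta R}\,|\,s_h=s]$; taking logs yields $\log \mathbb{E}[e^{\beta R}\,|\,s_h=s] \in [0,\beta H]$, and dividing by the positive number $\beta$ gives $V_h^\pi(s) \in [0,H]$. When $\beta<0$, the analogous chain produces $e^{\beta R} \in [e^{\beta H}, 1]$, hence $\log \mathbb{E}[e^{\beta R}\,|\,s_h=s] \in [\beta H, 0]$; dividing by the negative number $\beta$ flips the interval and again yields $V_h^\pi(s) \in [0,H]$. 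The bound on $Q_h^\pi(s,a)$ is identical: pulling the deterministic factor $\exp(\beta r_h(s,a))$ into the expectation lets me write $Q_h^\pi(s,a) = (1/\beta)\log \mathbb{E}[\exp(\beta\sum_{h'=h}^{H} r_{h'})\,|\,s_h=s, a_h=a]$, and the same two-sided argument applies verbatim since the enclosed sum still lies in $[0,H]$.

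For the regret bound \eqref{eq:simple_bound_regret}, I would use the value-function bound term by term. The lower bound is immediate from the definition $V_1^{*}(s) = \sup_\pi V_1^\pi(s)$, which gives $V_1^{*}(s_1^k) - V_1^{\pi^k}(s_1^k) \ge 0$ for every episode $k$, and summing preserves nonnegativity. For the upper bound, I combine $V_1^{*}(s_1^k) \le H$ and $V_1^{\pi^k}(s_1^k) \ge 0$, established in the previous step, to bound each summand by $H$; summing over $K$ episodes gives $\textup{Regret}(K) \le KH$.

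I do not expect any real obstacle: the argument is a sign-aware bookkeeping exercise rather than a substantive estimate. The one spot that deserves care is the $\beta<0$ case, where it is easy to drop a sign when inverting the $(1/\beta)\log$ map; stating the interval endpoints and then explicitly dividing by a negative number to flip them is the cleanest way to avoid mistakes. No appeal to Jensen's inequality or to the Bellman equations \eqref{eq:bellman_primal}--\eqref{eq:bellman_optimal_primal} is needed—the bounds follow directly from the defining formula \eqref{eq:value_func} and the range of $R$.
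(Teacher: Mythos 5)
Your proof is correct and follows essentially the same route as the paper's: bound the cumulative reward deterministically by $[0,H]$, push this through the exponential, the expectation, and the $(1/\beta)\log$ rescaling. The paper compresses this into a single line, whereas you make the sign-of-$\beta$ case split explicit; that extra care is harmless and arguably clearer, but it is the same argument.
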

\begin{proof}
	Recall the assumption that the reward functions $ \{r_h\} $ are bounded in $ [0,1] $. The lower bounds are immediate by definition. For the upper bound, we have 
	$
	V_h^\pi(s) \le \frac{1}{\tmp}\log\left\{ \E\left[\exp\left(\tmp H \right) \right]\right\} = H.
	$
	Upper bounds for $ Q_h^\pi $ and the regret follow similarly.
\end{proof}

While straightforward, the above lemma highlights an important point: the risk and regret are on the same scale as the reward. In particular, the upper bounds above are \emph{independent} of $\beta$ and \emph{linear} in the horizon length $ H $---the same as in the standard MDP setting---because the $ \log $ and $ \exp $ functions in the definition of the objective  function \eqref{eq:value_func} cancel with each other in the worst case. Therefore, the exponential dependence of the regret on $|\beta|$ and $H$, which we establish below in Section \ref{sec:main}, is not merely a consequence of scaling but rather is inherent in the risk-sensitive setting.

\section{Algorithms \label{sec:algorithm}}

The non-linearity of the Bellman equations, discussed in Section \ref{sec:bellman},
creates challenges in algorithmic design. In particular,
standard model-free algorithms such as least-squares value iteration
(LSVI) and Q-learning are no longer appropriate since they specialize
to the risk-neutral setting with linear Bellman equations. In this
section, we 
present risk-sensitive LSVI and Q-learning algorithms
that adapt to both the non-linear Bellman equations and any valid risk parameter $\beta$.

\subsection{Risk-Sensitive Value Iteration}

We first present Risk-Sensitive Value Iteration (RSVI)  in Algorithm \ref{alg:alg_lsvi}. Algorithm \ref{alg:alg_lsvi} is inspired by LSVI-UCB of \citet{jin2019provably},
which is in turn motivated by the idea of LSVI \citep{bradtke1996linear,osband2014generalization}
and the classical value-iteration algorithm. Like LSVI-UCB, Algorithm
\ref{alg:alg_lsvi} applies the Upper Confidence Bound (UCB) by incorporating a bonus term to value estimates of state-action pairs, which therefore implements the principle of Optimism in the Face of Uncertainty (OFU) \citep{jaksch2010near}. 

\paragraph{Mechanism of Algorithm \ref{alg:alg_lsvi}.}

The algorithm mainly consists of the value estimation step (Line \ref{line:LSVI_estim_value_begin}--\ref{line:LSVI_estim_value_end})
and the policy execution step (Line \ref{line:LSVI_exec_policy_begin}--\ref{line:LSVI_exec_policy_end}).
In Line \ref{line:LSVI_interm_value}, the algorithm computes the
intermediate value $w_{h}$ by a least-squares update
\begin{equation}
w_{h}\leftarrow\argmin_{w\in\real^{SA}}\sum_{\tau\in[k-1]}\left[e^{\beta[r_{h}(s_{h}^{\tau},a_{h}^{\tau})+V_{h+1}(s_{h+1}^{\tau})]}-w^{\top}\fmap(s_{h}^{\tau},a_{h}^{\tau})\right]^{2}.\label{eq:LSVI_LS_step}
\end{equation}
Here, $\{(s_{h}^{\tau},a_{h}^{\tau},s_{h+1}^{\tau})\}_{\tau\in[k-1]}$
are accessed from the dataset $\cD_{h}$ for each $h\in[H]$,
and $\fmap(\cdot,\cdot)$ denotes the canonical basis in $\real^{SA}$.
Line \ref{line:LSVI_interm_value} can be efficiently implemented
by computing sample means of $e^{\beta[r_{h}(s,a)+V_{h+1}(s')]}$
over those state-action pairs that the algorithm has visited. 
Therefore, it can also be interpreted as estimating the sample means
of exponentiated $Q$-values under visitation measures induced by
the transition kernels $\{P_{h}\}$. 
This is a typical feature of the family of batch algorithms, to which Algorithm \ref{alg:alg_lsvi} belongs. 
Then, in Line \ref{line:lsvi_Q_update},
the algorithm uses the intermediate value $w_{h}$ to compute the
estimate $Q_h$, by adding/subtracting bonus $b_{h}$ and thresholding
the sum/difference at $e^{\beta(H-h+1)}$, depending on the sign of
$\beta$. It is not hard to see that the logarithmic-exponential transformation
in Line \ref{line:lsvi_Q_update} conforms and adapts to the non-linearity
in Bellman equations \eqref{eq:bellman_primal} and \eqref{eq:bellman_optimal_primal}.
In addition, the thresholding operator ensures that the estimated action-value
function $Q_{h}$ of step $h$ stays in the range $[0,H-h+1]$ and
so does the estimated value function $V_{h}$ in Line \ref{line:LSVI_value}.
This is to enforce the estimates $Q_{h}$ and $V_{h}$ to be on the
same scale as the optimal $Q_{h}^{*}$ and $V_{h}^{*}$.

Besides the logarithmic-exponential transformation, another distinctive
feature of Algorithm \ref{alg:alg_lsvi} is the way the bonus term
$b_{h} >0$ is incorporated in Line \ref{line:lsvi_Q_update}. At first
sight, it might appear counter-intuitive to \emph{subtract} $b_{h}$ from
$w_{h}$ when $\beta<0$. We demonstrate next that subtracting bonus when $\beta<0$
in fact implements the idea of OFU in a risk-sensitive fashion. 

\begin{algorithm}[t]
	\begin{algorithmic}[1]
		
		\Require number of episodes $K\in\mathbb{Z}_{>0}$, confidence level
		$\delta\in(0,1]$, and risk parameter $\beta\ne0$
		
		\State $Q_{h}(s,a)\leftarrow H-h+1$ and $N_h(s,a) \leftarrow 0$ for all $(h,s,a)\in[H]\times\cS\times\cA$
		
		\State $Q_{H+1}(s,a)\leftarrow0$ for all $(s,a)\in\cS\times\cA$
		
		\State Initialize datasets $\{\cD_{h}\}$ as empty
		
		
		\For{episode $k=1,\ldots,K$}
		
		\State $V_{H+1}(s)\leftarrow0$ for each $s\in\cS$
		
		\For{step $h=H,\ldots,1$}\Comment{\emph{value estimation}}\label{line:LSVI_estim_value_begin}
		
		\State Update $\w_{h}$ via Equation \eqref{eq:LSVI_LS_step} \label{line:LSVI_interm_value}
		
		\For{ $(s,a)\in\cS\times\cA$ such that $N_{h}(s,a)\ge1$}
		
		
		\State \label{line:lsvi_bonus_def}$b_{h}(s,a)\leftarrow c_{\gamma}\left|e^{\beta H}-1\right|\sqrt{\frac{S\log(2SAT/\delta)}{N_{h}(s,a)}}$	for some universal constant $c_{\gamma}>0$
		
		
		\State$Q_{h}(s,a)\leftarrow\begin{cases}
		\frac{1}{\beta}\log\left[\min\{e^{\beta(H-h+1)},\w_{h}(s,a)+b_{h}(s,a)\}\right], & \text{if }\beta>0;\\
		\frac{1}{\beta}\log\left[\max\{e^{\beta(H-h+1)},\w_{h}(s,a)-b_{h}(s,a)\}\right], & \text{if }\beta<0
		\end{cases}$\label{line:lsvi_Q_update}
		
		\State $V_{h}(s)\leftarrow\max_{a'\in\cA}Q_{h}(s,a')$ \label{line:LSVI_value}
		
		\EndFor 
		
		\EndFor \label{line:LSVI_estim_value_end}
		
		\For{ step $h=1,\ldots,H$}\Comment{\emph{policy execution}}\label{line:LSVI_exec_policy_begin}
		
		
		\State Take action $a_{h}\leftarrow\argmax_{a\in\cA}Q_{h}(s_{h},a)$
			and observe $r_{h}(s_{h},a_{h})$ and $s_{h+1}$
		
		\State $N_{h}(s_{h},a_{h})\leftarrow N_{h}(s_{h},a_{h})+1$
		
		\State Insert $(s_{h},a_{h},s_{h+1})$ into $\cD_{h}$

		
		
		
		\EndFor \label{line:LSVI_exec_policy_end}
		
		\EndFor
		
	\end{algorithmic}
	
	\caption{RSVI
	\label{alg:alg_lsvi}}
\end{algorithm}

\paragraph{Risk-Sensitive Upper Confidence Bound.}

For the purpose of illustration, let us consider a ``promising''
state $s^{+}\in\cS$ at step $h$ that allows us to transition to
states \{$s'\}$ in the next step with high values $\{V_{h+1}(s')\}$
regardless of actions taken. This means that the intermediate value
$w_{h}(s^{+},\cdot)\propto\sum_{s'}e^{\beta\cdot V_{h+1}(s')}$ tends
to be \emph{small, }given that $\beta<0$ and $\{V_{h+1}(s')\}$ are
large. By subtracting a positive $b_{h}$ from $w_{h}$, we obtain
an even smaller quantity $w_{h}(s^{+},\cdot)-b_{h}(s^{+},\cdot)$. We can then
deduce that $Q_{h}(s^{+},\cdot)\approx\frac{1}{\beta}\log[w_{h}(s^{+},\cdot)-b_{h}(s^{+},\cdot)]$
is \emph{larger} compared to $\frac{1}{\beta}\log[w_{h}(s^{+},\cdot)]$
which does not incorporate bonus, since the logarithmic function is
monotonic and again $\beta<0$ (we ignore thresholding for the moment).
Therefore, subtracting bonus serves as a UCB for $\beta<0$ . Since
the exact form of the UCB depends on both the magnitude and sign of
$\beta$ (as shown in Lines \ref{line:lsvi_bonus_def} and \ref{line:lsvi_Q_update}),
we name it Risk-Sensitive Upper Confidence Bound (RS-UCB) and this
results in what we call Risk-Sensitive Optimism in the Face of Uncertainty
(RS-OFU). 

\subsection{Risk-Sensitive Q-learning}

Although Algorithm \ref{alg:alg_lsvi} is model-free, it requires
storage of historical data $\{\cD_{h}\}$ and computation over them
(Line \ref{line:LSVI_interm_value}). A more efficient class of algorithms
is Q-learning algorithms, which update Q values in an online fashion
as each state-action pair is encountered. We therefore propose 
Risk-Sensitive Q-learning (RSQ)
and  formally describe it in Algorithm \ref{alg:Q_learn}.
\begin{algorithm}[t]
	\begin{algorithmic}[1]
		
		\Require number of episodes $K\in\mathbb{Z}_{>0}$, confidence level
		$\delta\in(0,1]$, learning rates $\{\alpha_{t}$\} and risk parameter
		$\beta\ne0$
		
		\State $Q_{h}(s,a), V_{h}(s,a) \leftarrow H-h+1$ and $N_{h}(s,a)\leftarrow0$
		for all $(h,s,a)\in[H]\times\cS\times\cA$
		
		
		\State $Q_{H+1}(s,a), V_{H+1}(s,a) \leftarrow0$ for all $(s,a)\in\cS\times \cA$
		
		\For{episode $k=1,\ldots,K$}
		
		\State Receive the initial state $s_{1}$
		
		\For{step $h=1,\ldots,H$} \label{line:qlearn_estim_value_begin}
		
		\State Take action $a_{h}\leftarrow\argmax_{a'\in\cA}Q_{h}(s_{h},a')$,
		and observe $r_{h}(s_{h},a_{h})$ and $s_{h+1}$\label{line:qlearn_exec_policy}
		
		\State $t=N_{h}(s_{h},a_{h})\leftarrow N_{h}(s_{h},a_{h})+1$
		
		\State $b_{t}\leftarrow c\left|e^{\beta H}-1\right|\sqrt{\frac{H\log(SAT/\delta)}{t}}$
		for some sufficiently large universal constant $c>0$ \label{line:qlearn_bonus_def}
		
		\State $w_{h}(s_{h},a_{h})\leftarrow(1-\alpha_{t})e^{\beta\cdot Q_{h}(s_{h},a_{h})}+\alpha_{t}e^{\beta[r_{h}(s_{h},a_{h})+V_{h+1}(s_{h+1})]}$
		\label{line:qlearn_interm_value}
		
		\State $Q_{h}(s_{h},a_{h})\leftarrow\begin{cases}
		\frac{1}{\beta}\log\left[\min\{e^{\beta(H-h+1)},\w_{h}(s_{h},a_{h})+\alpha_{t}b_{t}\}\right], & \text{if }\beta>0;\\
		\frac{1}{\beta}\log\left[\max\{e^{\beta(H-h+1)},\w_{h}(s_{h},a_{h})-\alpha_{t}b_{t}\}\right], & \text{if }\beta<0
		\end{cases}$\label{line:qlearn_Q_update}
		
		\State $V_{h}(s_{h})\leftarrow\max_{a'\in\cA}Q_{h}(s_{h},a')$\label{line:qlearn_value}
		
		\EndFor \label{line:qlearn_estim_value_end}
		
		\EndFor
		
	\end{algorithmic}
	
	\caption{RSQ\label{alg:Q_learn}}
\end{algorithm}

\paragraph{Mechanism of Algorithm \ref{alg:Q_learn}.}

Algorithm \ref{alg:Q_learn} is based on Q-learning with UCB studied in the
work of \citet{jin2018q} and we use the same learning rates therein
\begin{equation}
\alpha_{t}\coloneqq\frac{H+1}{H+t}\label{eq:learn_rate}
\end{equation}
for every integer $t\ge1$.  Similar to Algorithm \ref{alg:alg_lsvi},
Algorithm \ref{alg:Q_learn} consists of the policy execution step
(Line \ref{line:qlearn_exec_policy}) and value estimation step (Lines
\ref{line:qlearn_interm_value}--\ref{line:qlearn_value}). Line
\ref{line:qlearn_interm_value} updates the intermediate value $w_{h}$
in an online fashion, in constrast with the batch update in Line \ref{line:LSVI_interm_value}
of Algorithm \ref{alg:alg_lsvi}, and Algorithm \ref{alg:Q_learn} can thus be seen as an online algorithm. Line \ref{line:qlearn_Q_update} then applies the same
logarithmic-exponential transform to the intermediate value and bonu 
as in  Algorithm \ref{alg:alg_lsvi}. Note
the similar way we use the bonus term $b_{t}$ in estimating $Q$-values
in Line \ref{line:qlearn_Q_update} of Algorithm \ref{alg:Q_learn} as in Line \ref{line:lsvi_Q_update}
of Algorithm \ref{alg:alg_lsvi}. Algorithm \ref{alg:Q_learn} therefore
also implements RS-UCB and follows the principle of RS-OFU.

\paragraph{Comparisons of Algorithms \ref{alg:alg_lsvi} and \ref{alg:Q_learn}.}

It is interesting to compare the bonuses used in Algorithms \ref{alg:alg_lsvi}
and \ref{alg:Q_learn}. The bonuses in both algorithms depend on the
risk parameter $\beta$ through a common factor $\left|e^{\beta H}-1\right|$.
A careful analysis (see our proofs in appendices) on the bonuses and the value estimation
steps reveals that the effective bonuses added to the estimated value function
is proportional to  $\frac{e^{\left|\beta\right|H}-1}{\left|\beta\right|}$.
This means that the more risk-seeking/averse an agent is (or the larger
$\left|\beta\right|$ is), the larger bonus it needs to compensate
for its uncertainty over the environment. Such risk sensitivity of
the bonus is also reflected in the regret bounds; see Theorems \ref{thm:regret_LSVI}
and \ref{thm:regret_Q_learn} below. 
Also, it is not hard to see that both algorithms have polynomial
time and space complexities in $S$, $A$, $K$ and $H$. Moreover, thanks to its online update procedure,
Algorithm \ref{alg:Q_learn} is more efficient than Algorithms \ref{alg:alg_lsvi}
in both time and space complexities, since it does not require storing
historical data (in particular, $\{\cD_{h}\}$ of Algorithm \ref{alg:alg_lsvi})
nor computing statistics based on them for value estimation.

\section{Main results\label{sec:main}}

In this section, we first present regret bounds for Algorithms \ref{alg:alg_lsvi}
and \ref{alg:Q_learn},
and then we complement the
results with a lower bound on regret that any algorithm has to incur.


\subsection{Regret upper bounds}

The following theorem gives an upper bound for regret incurred by Algorithm \ref{alg:alg_lsvi}.
Let  $ T \coloneqq KH $ be the total number of timesteps for which an algorithm is run, and recall the function $\lambda(u) \coloneqq (e^{3u}-1)/u$.

\begin{thm}
	\label{thm:regret_LSVI} For any $\delta\in(0,1]$, with probability
	at least $1-\delta$, the regret of Algorithm \ref{alg:alg_lsvi}
	is bounded by 
	\[
	\reg(K)\lesssim \lambda(|\beta| H^2) \cdot\sqrt{H^{3}S^{2}AT\log^{2}(2SAT/\delta)}.
	\]
\end{thm}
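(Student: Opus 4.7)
The plan is to exploit the observation that after exponentiation, the non-linear Bellman equation \eqref{eq:bellman_primal} becomes \emph{linear}: setting $\tilde V_h := e^{\beta V_h}$, we have $e^{\beta Q_h^\pi(s,a)} = e^{\beta r_h(s,a)}\cdot P_h \tilde V_{h+1}^\pi(s,a)$. The intermediate quantity in line~\ref{line:LSVI_interm_value} of Algorithm~\ref{alg:alg_lsvi} can then be viewed as the empirical estimator $w_h(s,a) = e^{\beta r_h(s,a)}\,\hat P_h^k e^{\beta V_{h+1}^k}(s,a)$, and the sign-dependent rule in line~\ref{line:lsvi_Q_update} has a uniform interpretation: $w_h \pm b_h$ is an upper (resp.\ lower) confidence bound on $e^{\beta r_h}P_h e^{\beta V_{h+1}^k}$ when $\beta > 0$ (resp.\ $\beta < 0$). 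The first step is to show that, with probability at least $1 - \delta/2$ and uniformly over $(k,h,s,a)$, $|w_h(s,a) - e^{\beta r_h(s,a)} P_h e^{\beta V_{h+1}^k}(s,a)| \le b_h(s,a)$; the $\sqrt{S}$ factor inside $b_h$ arises from the $L_1$-concentration $\|\hat P_h^k(\cdot\,|\,s,a) - P_h(\cdot\,|\,s,a)\|_1 \lesssim \sqrt{S\log(SAT/\delta)/N_h^k(s,a)}$ (which sidesteps the data-dependence of $V_{h+1}^k$), and $|e^{\beta H}-1|$ captures the range of $e^{\beta V_{h+1}^k}$. A backward induction on $h$, using the sign-aware thresholding in line~\ref{line:lsvi_Q_update}, then yields optimism: $Q_h^k(s,a) \ge Q_h^*(s,a)$ and $V_h^k(s) \ge V_h^*(s)$ for all $(k,h,s,a)$, so that $\reg(K) \le \sum_k \delta_1^k(s_1^k,a_1^k)$ where $\delta_h^k := Q_h^k - Q_h^{\pi^k} \ge 0$.

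The heart of the proof is a per-step recursion on $\delta_h^k$ obtained by applying the mean value theorem twice. Writing
$\delta_h^k(s,a) = \tfrac{1}{\beta}\log\!\Big[\big(\hat P_h^k e^{\beta V_{h+1}^k}(s,a) \pm b_h(s,a)/e^{\beta r_h(s,a)}\big)\big/\big(P_h e^{\beta V_{h+1}^{\pi^k}}(s,a)\big)\Big],$
we apply the MVT to the outer logarithm (producing a factor $1/(\beta\xi)$) and to $e^{\beta V_{h+1}^k} - e^{\beta V_{h+1}^{\pi^k}} = \beta\eta (V_{h+1}^k - V_{h+1}^{\pi^k})$ (producing a factor $\beta\eta$). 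A sign-sensitive analysis shows that the bounds on $1/|\xi|$ and $|\eta|$ are complementary across the two sign regimes---for $\beta>0$, $1/|\xi|\le 1$ and $|\eta|\le e^{|\beta|H}$; for $\beta<0$, the reverse---so $|\eta/\xi| \le e^{|\beta|H}$ in both cases, giving a combined per-step factor of $e^{|\beta|H}$ (rather than $e^{2|\beta|H}$). Since $\pi^k$ is greedy for $Q_{h+1}^k$, we have $V_{h+1}^k(s) - V_{h+1}^{\pi^k}(s) = \delta_{h+1}^k(s, \pi^k(s))$; combining with the concentration bound and replacing $P_h$ by the realized next state plus a martingale difference $\zeta_h^k$ then yields the recursion
$\delta_h^k(s_h^k, a_h^k) \;\lesssim\; \tfrac{e^{|\beta|H}}{|\beta|}\,b_h(s_h^k, a_h^k) + e^{|\beta|H}\,\delta_{h+1}^k(s_{h+1}^k, a_{h+1}^k) + e^{|\beta|H}\,\zeta_h^k.$
Unrolling from $h=1$ to $H$ with $\delta_{H+1}^k\equiv 0$ multiplies $e^{|\beta|H}$ up to $H$ times, producing the characteristic inflation $\delta_1^k(s_1^k, a_1^k) \lesssim \tfrac{e^{|\beta|H^2}}{|\beta|}\sum_{h=1}^H b_h(s_h^k, a_h^k) + \tfrac{e^{|\beta|H^2}}{|\beta|}\sum_{h=1}^H \zeta_h^k.$

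Finally we sum over $k \in [K]$. The pigeonhole bound $\sum_{k,h}1/\sqrt{N_h^k(s_h^k,a_h^k)} \le 2\sqrt{HSAT}$ bounds the total bonus contribution by $\tilde O\big(|e^{\beta H}-1|\sqrt{HS^2 AT}\big)$, and Azuma--Hoeffding on the martingale contributes a lower-order $\tilde O(e^{|\beta|H^2}\sqrt T)$. Collecting, $\reg(K) \lesssim \tfrac{e^{|\beta|H^2}\,|e^{\beta H}-1|}{|\beta|}\sqrt{HS^2 AT\,\log^2(2SAT/\delta)},$ and the elementary inequality $e^{|\beta|H^2}(e^{|\beta|H}-1)/|\beta| \lesssim H^2\,\lambda(|\beta|H^2)$ (verified separately for small and large $|\beta|H^2$; the constant $3$ inside $\lambda$ arises from loosely bounding $|\beta|H \le |\beta|H^2$) converts this into the stated bound $\lambda(|\beta|H^2)\sqrt{H^3 S^2 AT\,\log^2(2SAT/\delta)}$. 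The main obstacle we anticipate is the sign-correct execution of the recursion: the two MVT factors $1/\xi$ and $\eta$ must be bounded \emph{jointly} (their product $\le e^{|\beta|H}$) rather than separately (which would yield $e^{2|\beta|H}$ per step and thus an exponent $2|\beta|H^2$ that is no longer sharp), and this requires carefully tracking their ranges in each regime $\beta>0$ and $\beta<0$. A secondary obstacle is verifying that the $\min/\max$ thresholding in line~\ref{line:lsvi_Q_update} preserves optimism and does not disrupt the recursion in either sign regime.
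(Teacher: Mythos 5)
Your proposal follows essentially the same route as the paper's proof: exponentiating to linearize the Bellman equation, an $L_1$-type uniform concentration bound producing the $\sqrt{S}$ and $|e^{\beta H}-1|$ factors in the bonus, optimism by backward induction, a sign-dependent pair of local Lipschitz bounds for the outer $\log$ and inner $\exp$ whose \emph{product} is $e^{|\beta|H}$ per step (the paper's Facts \ref{fact:log_lip} and \ref{fact:exp_lip}, applied in its Lemma \ref{lem:G1_G2_bound}), unrolling to $e^{|\beta|H^2}$, and pigeonhole plus Azuma--Hoeffding. The only quibble is the final bookkeeping: to land on $\sqrt{H^3}$ rather than $\sqrt{H^5}$ you need $e^{|\beta|H^2}(e^{|\beta|H}-1)/|\beta|\lesssim H\,\lambda(|\beta|H^2)$, not the weaker $H^2\,\lambda(|\beta|H^2)$ you state, and this stronger inequality does hold --- it is exactly the paper's Fact~\ref{fact:exp_factor}.
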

The proof is given in Appendix \ref{sec:proof_regret_lsvi}.
We see that the result of Theorem \ref{thm:regret_LSVI} adapts to
both risk-seeking ($\beta>0$) and risk-averse ($\beta<0$) settings
through a common factor of
$\lambda(|\beta| H^2)$.

As $\beta\to0$, the setting of risk-sensitive RL tends
to that of standard and risk-neutral RL, and we have an immediate
corollary to Theorem \ref{thm:regret_LSVI} as a precise characterization.
\begin{cor}
	\label{cor:regret_lsvi_beta_0} Under the setting of Theorem \ref{thm:regret_LSVI}
	and when $\beta\to0$, with probability at least $1-\delta$, the
	regret of Algorithm \ref{alg:alg_lsvi} is bounded by 
	\begin{align*}
	\reg(K) & \lesssim\sqrt{H^{3}S^{2}AT\log^{2}(2SAT/\delta)}.
	\end{align*}
\end{cor}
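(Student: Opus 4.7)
The plan is to derive the corollary as an immediate consequence of Theorem~\ref{thm:regret_LSVI} by taking the $\beta \to 0$ limit of the risk-sensitive factor $\lambda(|\beta| H^2)$ in the bound. Since $T$, $H$, $S$, $A$, and $\delta$ are held fixed while $\beta$ tends to zero, the only $\beta$-dependent quantity in the regret bound of Theorem~\ref{thm:regret_LSVI} is the prefactor $\lambda(|\beta| H^2) = (e^{3|\beta| H^2} - 1)/(|\beta| H^2)$, so it suffices to show that this prefactor converges to a universal constant as $\beta \to 0$.

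To that end, I would use the Taylor expansion of the exponential function to write, for any $u > 0$,
\[
\lambda(u) \;=\; \frac{e^{3u} - 1}{u} \;=\; \frac{1}{u}\sum_{k=1}^{\infty} \frac{(3u)^k}{k!} \;=\; 3 + \frac{9u}{2} + O(u^2).
\]
Setting $u = |\beta| H^2$ and letting $\beta \to 0$ with $H$ fixed gives $\lim_{\beta \to 0} \lambda(|\beta| H^2) = 3$. In particular, for $|\beta|$ sufficiently small we have $\lambda(|\beta| H^2) \le 4$, a universal constant that can be absorbed into the $\lesssim$ notation.

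Substituting this bound into the conclusion of Theorem~\ref{thm:regret_LSVI}, we obtain, with probability at least $1-\delta$,
\[
\reg(K) \;\lesssim\; \lambda(|\beta| H^2) \cdot \sqrt{H^{3} S^{2} A T \log^{2}(2SAT/\delta)} \;\lesssim\; \sqrt{H^{3} S^{2} A T \log^{2}(2SAT/\delta)},
\]
which is exactly the claimed bound. There is no real obstacle here: the entire content of the corollary is the observation that $\lambda$ has a finite positive limit at $0$, so that the risk-sensitive upper bound degenerates continuously into the standard risk-neutral LSVI-UCB regret bound. The one minor point worth flagging is that Algorithm~\ref{alg:alg_lsvi} is not literally defined at $\beta = 0$ (the formulas divide by $\beta$), so the corollary should be interpreted as a statement about the limiting behavior of the regret bound as $\beta \to 0$ rather than about running the algorithm at $\beta = 0$; this is consistent with the discussion in Section~\ref{sec:setup} where the risk-neutral objective is recovered as a limit.
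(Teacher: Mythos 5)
Your proposal is correct and matches the paper's own proof, which likewise derives the corollary directly from Theorem \ref{thm:regret_LSVI} together with the fact that $\lim_{\beta\to 0}\lambda(|\beta|H^2)=3$. The Taylor-expansion justification and the remark about the algorithm not being literally defined at $\beta=0$ are harmless elaborations of the same one-line argument.
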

\begin{proof}
	The result follows from Theorem \ref{thm:regret_LSVI} and the fact
	that $\lim_{\beta\to 0} \lambda(|\beta|H^2) = 3$.
\end{proof}
The result in Corollary \ref{cor:regret_lsvi_beta_0}  recovers the
regret bound of \citet[Theorem 2]{bai2020provable} under the standard
RL setting and is nearly optimal compared to the minimax rates presented
in \citet[Theorems 1 and 2]{azar2017minimax}. 
Corollary \ref{cor:regret_lsvi_beta_0}
also reveals that Theorem \ref{thm:regret_LSVI} 
interpolates between the risk-sensitive and risk-neutral settings.

Next, we give a regret upper bound for Algorithm \ref{alg:Q_learn}
in the following theorem.
\begin{thm}
	\label{thm:regret_Q_learn}For any $\delta\in(0,1]$, with probability
	at least $1-\delta$ and when $T$ is sufficiently large, the regret
	of Algorithm \ref{alg:Q_learn} is bounded by 
	\begin{align*}
	\reg(K) & \lesssim\ \lambda(|\beta| H^2) \cdot\sqrt{H^{4} SAT\log(SAT/\delta)}.
	\end{align*}
\end{thm}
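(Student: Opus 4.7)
The plan is to adapt the Q-learning-with-UCB analysis of \citet{jin2018q} to the risk-sensitive setting. The central observation is that in the \emph{exponentiated} value space, the risk-sensitive Bellman operator $w \mapsto e^{\beta r}\,\E[e^{\beta V'}]$ is linear in the one-step transition distribution, exactly the structure exploited by the classical analysis. I will therefore track errors in exponentiated $Q$-values and convert them back to the $Q$-scale only at the end.

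The first step is to unroll the online update in Lines \ref{line:qlearn_interm_value}--\ref{line:qlearn_Q_update}. Denoting by $k_1<\cdots<k_t$ the episodes in which $(s,a)$ was visited at step $h$ before episode $k$, and letting $\alpha_t^i := \alpha_i\prod_{j=i+1}^t(1-\alpha_j)$ be the composite weights from \citet{jin2018q} (satisfying $\alpha_t^0+\sum_{i=1}^t\alpha_t^i=1$ and $\sum_{i=1}^t(\alpha_t^i)^2\le 2H/t$), the iterate prior to truncation takes the form
\[
w_h^k(s,a)=\alpha_t^0\, e^{\beta(H-h+1)}+\sum_{i=1}^t \alpha_t^i\, e^{\beta[r_h(s,a)+V_{h+1}^{k_i}(s_{h+1}^{k_i})]}\;\pm\;\sum_{i=1}^t \alpha_t^i b_i,
\]
whereas the Bellman optimality equation \eqref{eq:bellman_optimal_primal} gives $e^{\beta Q_h^*(s,a)}=e^{\beta r_h(s,a)}\,\E_{s'\sim P_h(\cdot\,|\,s,a)}[e^{\beta V_{h+1}^*(s')}]$. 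A weighted Azuma--Hoeffding inequality applied to $\sum_i \alpha_t^i\bigl(e^{\beta V_{h+1}^{k_i}(s_{h+1}^{k_i})}-\E_{s'}e^{\beta V_{h+1}^{k_i}(s')}\bigr)$---whose increments lie in an interval of length $|e^{\beta H}-1|$---together with a union bound over $(h,s,a,t)$ gives a deviation of order $|e^{\beta H}-1|\sqrt{H\log(SAT/\delta)/t}$, precisely matching $\alpha_t b_t$ in Line \ref{line:qlearn_bonus_def}. Combined with the sign-aware truncation in Line \ref{line:qlearn_Q_update}, a backward induction on $h$ yields $Q_h^k\ge Q_h^*$ for every $(k,h)$ with probability at least $1-\delta$.

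Given optimism, the regret is bounded as $\reg(K)\le \sum_k[V_1^k(s_1^k)-V_1^{\pi^k}(s_1^k)]$, and I unfold this along the trajectory using the exponentiated recursion. The new ingredient relative to the risk-neutral analysis is converting exponentiated-space errors back to the value scale via $|\tfrac{1}{\beta}\log x-\tfrac{1}{\beta}\log y|\le \tfrac{|x-y|}{|\beta|\min(x,y)}$; since both $e^{\beta Q_h^k}$ and $e^{\beta Q_h^{\pi^k}}$ lie in $[e^{-|\beta|H},e^{|\beta|H}]$, the Lipschitz constant of $\tfrac{1}{\beta}\log(\cdot)$ on this range is $e^{|\beta|H}/|\beta|$. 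Combining this conversion with the telescoping identity $\sum_k[(1-\alpha_{n_k})\Delta_{h+1}^{k_\mathrm{prev}}+\alpha_{n_k}\Delta_{h+1}^k]\le (1+1/H)\sum_k \Delta_{h+1}^k+\text{l.o.t.}$ of \citet{jin2018q}, the pigeonhole bound $\sum_{k,h}\sqrt{H/N_h^k(s_h^k,a_h^k)}\lesssim\sqrt{H^2 SAT}$, and the estimate $|e^{\beta H}-1|\lesssim |\beta|H\, e^{|\beta|H}$, the accumulated regret scales as $\tfrac{e^{c|\beta|H^2}-1}{|\beta|H^2}\sqrt{H^4 SAT\log(SAT/\delta)}$ for some absolute constant $c$, matching the target form $\lambda(|\beta|H^2)\sqrt{H^4 SAT\log(SAT/\delta)}$.

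The main obstacle will be the third paragraph: correctly bookkeeping the exponential blow-up from the $\tfrac{1}{\beta}\log$ conversion so that the constant $c$ equals $3$ as stated in the definition of $\lambda$, while preserving the recursion structure required for Jin et al.'s telescoping step. Handling the sign-dependent truncation in Line \ref{line:qlearn_Q_update} cleanly---and ensuring that the low-order terms absorbed into the ``$T$ sufficiently large'' caveat are genuinely dominated by the leading $\sqrt{T}$ rate---are the remaining technical points.
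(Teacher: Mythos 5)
Your proposal follows essentially the same route as the paper's proof: unrolling the update with the composite weights $\alpha_t^i$, a weighted Azuma--Hoeffding bound on the exponentiated values whose deviation is matched by the bonus, optimism of $Q_h^k$ over $Q_h^*$ by induction, a local-Lipschitz conversion of the $\tfrac{1}{\beta}\log$/$\exp$ pair contributing a per-step factor $e^{|\beta|H}$ that compounds to $e^{O(|\beta|H^2)}$ across the horizon, and the telescoping plus pigeonhole steps of \citet{jin2018q}. The bookkeeping issue you flag is resolved in the paper exactly as you anticipate (the product $e^{|\beta|(H^2+H)}\cdot\frac{e^{|\beta|H}-1}{|\beta|}\cdot H$ is absorbed into $\frac{e^{3|\beta|H^2}-1}{|\beta|}$), so the plan is sound.
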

The proof is given in Appendix \ref{sec:proof_regret_Q_learn}. 
Similarly to Theorem \ref{thm:regret_LSVI}, Theorem \ref{thm:regret_Q_learn} also covers both risk-seeking and risk-averse settings via the same factor $\lambda(|\beta| H^2)$,
which gives the risk-neutral bound when $\beta\to0$ as shown in the following.

\begin{cor}
	\label{cor:regret_Q_learn_beta_0}Under the setting of Theorem \ref{thm:regret_Q_learn}
	and when $\beta\to0$, with probability at least $1-\delta$, the
	regret of Algorithm \ref{alg:Q_learn} is bounded by 
	\begin{align*}
	\reg(K) & \lesssim\sqrt{H^{4}SAT\log(SAT/\delta)}.
	\end{align*}
\end{cor}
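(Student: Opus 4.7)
The plan is to derive the corollary as a direct specialization of Theorem \ref{thm:regret_Q_learn}: since the theorem bounds the regret by a prefactor $\lambda(|\beta|H^2)$ times $\sqrt{H^{4}SAT\log(SAT/\delta)}$, and the remainder of the bound does not depend on $\beta$, it suffices to show that the prefactor $\lambda(|\beta|H^2)$ remains bounded by an absolute constant as $\beta \to 0$. Once that is established, the constant is absorbed into the $\lesssim$ notation and the claimed risk-neutral bound follows immediately.

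To evaluate the limit, I would recall $\lambda(u) = (e^{3u}-1)/u$ for $u>0$, set $u = |\beta|H^2$, and use the Taylor expansion $e^{3u} = 1 + 3u + \tfrac{9}{2}u^2 + O(u^3)$, which gives
\[
\lambda(u) \;=\; \frac{3u + \tfrac{9}{2}u^2 + O(u^3)}{u} \;=\; 3 + \tfrac{9}{2}u + O(u^2),
\]
so $\lim_{u\to 0^+}\lambda(u) = 3$. Equivalently, one can apply L'Hôpital's rule to the $0/0$ form $(e^{3u}-1)/u$. Thus $\lambda(|\beta|H^2) \to 3$ as $\beta \to 0$, and in particular $\lambda(|\beta|H^2) \le 4$ for all sufficiently small $|\beta|$.

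Plugging this into the bound of Theorem \ref{thm:regret_Q_learn} yields
\[
\reg(K) \;\lesssim\; \sqrt{H^{4}SAT\log(SAT/\delta)},
\]
which is exactly the stated claim. There is no real obstacle here: the proof is essentially a one-line invocation of Theorem \ref{thm:regret_Q_learn} combined with the elementary observation that $\lambda$ extends continuously to $u=0$ with value $3$. The structure parallels Corollary \ref{cor:regret_lsvi_beta_0}, which handles the analogous limit for the RSVI algorithm, so the same template can be reused verbatim.
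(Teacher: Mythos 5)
Your proposal is correct and matches the paper's own argument: the paper proves this corollary by the same reasoning as Corollary \ref{cor:regret_lsvi_beta_0}, namely invoking Theorem \ref{thm:regret_Q_learn} together with the fact that $\lim_{\beta\to 0}\lambda(|\beta|H^{2})=3$, which is exactly what you do (with the limit computation spelled out via Taylor expansion).
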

The proof follows the same reasoning as in that of Corollary \ref{cor:regret_lsvi_beta_0}.
According to Corollary \ref{cor:regret_Q_learn_beta_0}, the regret
upper bound for Algorithm \ref{alg:Q_learn} matches the nearly optimal result in
\citet[Theorem 2]{jin2018q} under the risk-neutral setting. 
As such, Theorems \ref{thm:regret_LSVI} and \ref{thm:regret_Q_learn} strictly generalizes the existing nearly optimal regret bounds (up to polynomial factors).


The crux of the proofs of both Theorems \ref{thm:regret_LSVI} and \ref{thm:regret_Q_learn}
lies in a local linearization argument for the non-linear Bellman equations
and non-linear updates of the algorithms, in which action-value and value functions
are related by a logarithmic-exponential transformation. Although
logarithmic and exponential functions are not Lipschitz globally,
we show that they are locally Lipschitz in the domain of our interest,
and their combined local Lipschitz factors turn out to be the exponential
factors in the theorems. Once the Bellman equations and algorithm estimates are linearized,
we can apply standard techniques in RL to obtain the final regret. 
It is noteworthy
that, as suggested by \citet{jin2018q}, the regret bounds in Theorems
\ref{thm:regret_LSVI} and \ref{thm:regret_Q_learn} can automatically
be translated into sample complexity bounds in the probably approximately
correct (PAC) setting, which did not previously exist even given access
to a simulator.

In the risk-sensitive setting where $ \beta  $ is bounded away from $ 0 $, our regret bounds of Theorems
\ref{thm:regret_LSVI} and \ref{thm:regret_Q_learn} depend exponentially in the horizon length $ H $ and the risk sensitivity $ |\beta| $. In what follows, we argue that such exponential dependence is unavoidable. 


\subsection{Regret lower bound}

We now  present a fundamental lower bound on the regret, which complements the upper bounds in Theorems 
\ref{thm:regret_LSVI} and \ref{thm:regret_Q_learn}.
\begin{thm}
	\label{thm:regret_lower_bound}\emph{ }For sufficiently large $K$
	and $H$, the regret of any algorithm obeys 
	\begin{align*}
	\E\left[\reg(K)\right] & \gtrsim\frac{e^{\left|\beta\right|H/2}-1}{\left|\beta\right|}\sqrt{T\log T}.
	\end{align*}
\end{thm}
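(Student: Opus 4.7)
The plan is to reduce risk-sensitive RL to a risk-sensitive two-armed bandit problem, construct a family of hard instances whose value gap is exponentially amplified by the log-exponential objective, and then invoke a classical information-theoretic lower bound. As a first step, I would construct a family of MDPs indexed by $p\in(0,1)$ with a single nontrivial initial state and two available actions $\{a_1,a_2\}$. Each action launches a deterministic sequence of length $H-1$ whose rewards collapse into a Bernoulli outcome: with probability $p_i$ the agent collects the maximum total reward $H$, and with probability $1-p_i$ it collects $0$. This makes the distribution of $R$ a two-point mass and reduces the RL task to identifying the better of two Bernoulli parameters $p_1 = p_0+\epsilon$ and $p_2 = p_0-\epsilon$.

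Next I would compute the value explicitly. For arm $p$ one obtains
\[
V(p) \;=\; \frac{1}{\beta}\log\!\bigl(p\,e^{\beta H} + (1-p)\bigr),
\qquad
V'(p) \;=\; \frac{1}{\beta}\cdot\frac{e^{\beta H}-1}{p\,e^{\beta H}+(1-p)}.
\]
The amplification is then controlled by the denominator. For the risk-seeking case $\beta>0$, I would choose the hard point $p_0 \asymp e^{-\beta H}$, so that $p_0 e^{\beta H} + (1-p_0)\asymp 1$ and $|V'(p_0)|\asymp (e^{\beta H}-1)/\beta$. For the risk-averse case $\beta<0$, the construction is the mirror image: set $p_0 \asymp 1-e^{-|\beta|H}$ so that the rare catastrophic outcome $R=0$ dominates $\E[e^{\beta R}]$ and $|V'(p_0)|\asymp (e^{|\beta|H}-1)/|\beta|$. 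A second-order Taylor remainder shows the value gap between $p_1$ and $p_2$ is $\Theta\!\bigl(\epsilon\cdot (e^{|\beta|H/2}-1)/|\beta|\bigr)$ after one takes the right operating point; the halving of the exponent comes from pairing the amplification $(e^{|\beta|H}-1)/|\beta|$ with the standard deviation $\sqrt{p_0(1-p_0)}\asymp e^{-|\beta|H/2}$ appearing in the information constraint below.

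The third step is an information-theoretic lower bound via Le Cam's two-point method. Let $\mathbb{P}_i$ denote the joint law of the $T$-step interaction under the instance with parameter $p_i$. Since only Bernoulli samples are revealed per episode, Pinsker's inequality gives $\mathrm{TV}(\mathbb{P}_1,\mathbb{P}_2)^2 \lesssim T\cdot \mathrm{KL}\!\bigl(\mathrm{Ber}(p_1)\,\|\,\mathrm{Ber}(p_2)\bigr)\asymp T\epsilon^2/[p_0(1-p_0)]$. Any algorithm therefore misidentifies the better arm with constant probability whenever $\epsilon \lesssim \sqrt{p_0(1-p_0)\log T / T}$, the $\log T$ arising from the standard refinement that also controls the worst-case per-episode choice with high probability. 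Substituting the chosen $p_0$ (so that $p_0(1-p_0)\asymp e^{-|\beta|H}$) gives $\epsilon\asymp e^{-|\beta|H/2}\sqrt{\log T/T}$, and therefore
\[
\E[\reg(K)] \;\gtrsim\; K\cdot \epsilon\cdot \frac{e^{|\beta|H/2}-1}{|\beta|}\cdot e^{-|\beta|H/2}\cdot e^{|\beta|H/2} \;\gtrsim\; \frac{e^{|\beta|H/2}-1}{|\beta|}\sqrt{T\log T},
\]
after using $T = KH$ and absorbing constants, exactly as claimed.

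The main obstacle will be the risk-averse side: a naïve Bernoulli construction with $\beta<0$ has $|e^{\beta H}-1|\le 1$ and so gives no amplification at all. The resolution, as above, is to place the hard point near the boundary $p_0\to 1$ so that the \emph{rare} zero-reward outcome, amplified by $e^{-\beta H}$ under the risk-averse objective, dominates the value. Getting the sharp exponent $|\beta|H/2$ (rather than $|\beta|H$ or $|\beta|H/4$) requires carefully balancing the amplification $|V'(p_0)|$ against $\sqrt{p_0(1-p_0)}$ in the KL constraint; the optimum occurs when these two factors conspire to give $e^{|\beta|H/2}$, which I expect to be the delicate calculation in the actual proof.
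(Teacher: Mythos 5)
Your proposal follows essentially the same route as the paper's proof: a two-armed bandit with rewards in $\{0,H\}$, the rare outcome placed at probability $\asymp e^{-|\beta|H}$ so that the log-exponential value gap per unit of $\epsilon$ is amplified by $(e^{|\beta|H}-1)/|\beta|$ while the Bernoulli variance $p_0(1-p_0)\asymp e^{-|\beta|H}$ caps the detectable $\epsilon$ at $\asymp e^{-|\beta|H/2}\sqrt{\log T/T}$, the two effects combining to give the $e^{|\beta|H/2}$ exponent in both the risk-seeking and risk-averse cases (your placement of the catastrophic outcome near $p\to 1$ for $\beta<0$ is exactly the paper's construction). The only differences are cosmetic---the paper lower-bounds the expected number of suboptimal pulls via Lemma~\ref{lem:subopt_plays} rather than Le~Cam/Pinsker, and your final display's normalization of $\epsilon$ by $T$ rather than $K$ costs a spurious polynomial factor in $H$ of the same flavor that the paper itself absorbs in its last step---so this is the paper's argument in all essentials.
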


The proof is given in Appendix \ref{sec:proof_regret_lower_bound}. In the proof,
we construct a bandit model that can be seen as a special case of our episodic fixed-horizon MDP problem, and 
then we show that any bandit algorithm has to incur an expected regret, in terms of the logarithmic-exponential objective, that grows as predicted in Theorem~\ref{thm:regret_lower_bound}.

Theorem \ref{thm:regret_lower_bound} shows that the exponential dependence on the $ |\beta| $ and $ H $ in Theorems 
\ref{thm:regret_LSVI} and \ref{thm:regret_Q_learn} is essentially indispensable. In addition, it features a sub-linear dependence on $ T $ through the $\tilde{O}(\sqrt{T})$ factor. In view of Theorem \ref{thm:regret_lower_bound}, therefore, both 
Theorems \ref{thm:regret_LSVI} and \ref{thm:regret_Q_learn} are nearly optimal in their dependence on $ \beta $, $H$ and $T$.
One should contrast Theorem~\ref{thm:regret_lower_bound} with Lemma~\ref{lem:simple_bound}, which shows that the worst-case regret is linear in $ H $ and $ T $. Such a linear regret can be attained by any trivial algorithm that does not learn at all. In sharp contrast, in order to achieve the optimal $ \sqrt{T}  $ scaling (which by standard arguments implies a finite sample-complexity bound), an algorithm must incur a regret that is exponential in $ H $. Therefore, our results show a (perhaps surprising) tradeoff between risk sensitivity and sample efficiency.

\section*{Acknowledgement}

Y. Fei and Y. Chen were supported in part by National Science Foundation Grant CCF-1704828.

\nocite{}
\bibliographystyle{plain}
\bibliography{references}

\newpage

\appendix
\appendixpage

\section{Preliminaries}

We set some notations and shorthands before the proofs. For both Algorithms
\ref{alg:alg_lsvi} and \ref{alg:Q_learn}, we let $s_h^k$, $a_h^k$, $w_{h}^{k}$, $Q_{h}^{k}$
and $V_{h}^{k}$ denote the values of $s_h$, $a_h$, $w_{h}$, $Q_{h}$ and $V_{h}$
in episode $k$, and we denote by $N_{h}^{k}$ the value of $N_{h}$
at the end of episode $k-1$. For Algorithm \ref{alg:alg_lsvi}, we
let $\cD_{h}^{k}$ be the value of $\cD_{h}$ at the end of episode
$k-1$. Next, we introduce a simple yet powerful result.
\begin{fact}
	Consider $x,y,b\in\real$ such that $x\ge y$.
	\begin{enumerate}[label={(\alph*)},ref={\thefact(\alph*)}]
		\item \label{fact:log_lip}if $y\ge g$ for some $g>0$, then $\log(x)-\log(y)\le\frac{1}{g}(x-y)$;
		\item \label{fact:exp_lip}Assume further that $y\ge0$. If $b\ge0$ and
		$x\le u$ for some $u>0$, then $e^{bx}-e^{by}\le be^{bu}(x-y)$;
		if $b<0$, then $e^{by}-e^{bx}\le(-b)(x-y)$. 
	\end{enumerate}
\end{fact}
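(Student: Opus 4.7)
The plan is to derive all three inequalities from a single mechanism: the mean value theorem applied to the appropriate function on the interval $[y,x]$, together with the monotonicity of the derivative to evaluate the intermediate point $\xi$ at the right endpoint. The fact is really a local Lipschitz statement for $\log$ and for $t \mapsto e^{bt}$ on domains where their derivatives are controlled, and the stated constants come from evaluating the derivative at the worst case of $\xi \in [y,x]$.

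For part (a), where $x \ge y \ge g > 0$, I would apply MVT to $\log$ and obtain $\log(x) - \log(y) = (x-y)/\xi$ for some $\xi \in [y,x]$. The derivative $1/\xi$ is decreasing in $\xi$, so its largest value on $[y,x]$ occurs at the left endpoint, and $\xi \ge y \ge g$ gives $1/\xi \le 1/g$, yielding the claim. For part (b), I would apply MVT to $t \mapsto e^{bt}$ on $[y,x]$ to get $e^{bx} - e^{by} = b\, e^{b\xi}(x-y)$ for some $\xi \in [y,x]$. When $b \ge 0$, the derivative $b\, e^{b\xi}$ is nondecreasing in $\xi$, so its maximum on $[y,x]$ occurs at $\xi = x \le u$, producing the bound $b\, e^{bu}(x-y)$. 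When $b < 0$, I would rewrite the MVT identity as $e^{by} - e^{bx} = (-b)\, e^{b\xi}(x-y)$; the factor $e^{b\xi}$ is now decreasing in $\xi$, so its maximum on $[y,x]$ occurs at $\xi = y$, and the hypothesis $y \ge 0$ together with $b < 0$ gives $e^{b\xi} \le e^{by} \le e^{0} = 1$, which yields the claimed $(-b)(x-y)$.

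The statement is elementary, so I do not anticipate a substantive obstacle; the only point requiring care is the sign bookkeeping in the $b<0$ case and the choice of which endpoint of $[y,x]$ the intermediate value $\xi$ must be compared against, which is in each case dictated by the direction of monotonicity of the derivative. It is worth noting that this fact, while simple, is precisely the workhorse that drives the local linearization argument mentioned after Theorem~\ref{thm:regret_Q_learn}: part (a) controls how errors in the exponentiated $Q$-values translate back to errors in $Q$ after taking $\tfrac{1}{\beta}\log(\cdot)$, while part (b) controls the reverse direction, and the constants $1/g$, $b\,e^{bu}$, and $-b$ are exactly what produce the $\lambda(|\beta|H^2)$ factor in the regret bounds once $g$ and $u$ are instantiated by the thresholding in Line~\ref{line:lsvi_Q_update} of Algorithm~\ref{alg:alg_lsvi}.
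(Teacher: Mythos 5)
Your proof is correct and takes essentially the same route as the paper, which simply invokes the Lipschitz continuity of $\log$ and $t\mapsto e^{bt}$ on the relevant domains; your mean-value-theorem argument with the derivative evaluated at the worst-case endpoint is exactly the formalization of that one-line justification. No issues.
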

\begin{proof}
	The results follow from Lipschitz continuity of the functions $x\mapsto\log(x)$
	and $x\mapsto e^{bx}$.
\end{proof}
We record a simple fact about exponential factors.
\begin{fact}
	\label{fact:exp_factor}Define $\lambda_{0}\coloneqq\frac{e^{\left|\beta\right|H}-1}{\left|\beta\right|}$
	and $\lambda_{2}\coloneqq e^{\left|\beta\right|(H^{2}+H)}$. Then
	we have $\lambda_{0}\lambda_{2}H\le\frac{e^{3\left|\beta\right|H^{2}}-1}{\left|\beta\right|}$.
\end{fact}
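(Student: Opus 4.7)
The plan is to introduce the shorthands $u \coloneqq |\beta|H$ and $v \coloneqq |\beta|H^2$, then reduce the target inequality to a short, elementary chain of bounds relating $u$ and $v$. Since $H$ is a positive integer and $\beta \ne 0$, we have $0 < u \le v$, and a direct expansion gives
\[
\lambda_{0}\lambda_{2}H \;=\; \frac{e^{u}-1}{|\beta|}\cdot e^{v+u}\cdot H \;=\; \frac{1}{|\beta|}\cdot\frac{e^{u}-1}{u}\cdot v\cdot e^{v+u},
\]
where the second equality uses the identity $uH = |\beta|H^{2} = v$. This rewriting is the only nontrivial idea: it converts the stray $H$ factor, which has no clean bound in terms of $e^{v}$, into the ratio $(e^{u}-1)/u$ multiplied by $v$.

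Next, I would invoke two monotonicity facts. First, the map $x \mapsto (e^{x}-1)/x$ is increasing on $(0,\infty)$; this can be seen immediately from its power-series expansion $\sum_{k\ge 0} x^{k}/(k+1)!$. Applied with $u \le v$, this yields $(e^{u}-1)/u \le (e^{v}-1)/v$, so the factor $\frac{e^{u}-1}{u}\cdot v$ is at most $e^{v}-1$. Second, the trivial bound $e^{v+u} \le e^{2v}$ again follows from $u \le v$. Chaining these,
\[
\lambda_{0}\lambda_{2}H \;\le\; \frac{(e^{v}-1)\,e^{2v}}{|\beta|} \;=\; \frac{e^{3v}-e^{2v}}{|\beta|} \;\le\; \frac{e^{3v}-1}{|\beta|},
\]
where the final step uses $e^{2v} \ge 1$ for $v \ge 0$. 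Substituting $v = |\beta|H^{2}$ recovers the claim.

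There is no real obstacle in this argument; the statement is essentially a compact repackaging of the monotonicity of $(e^{x}-1)/x$ together with the relation $H \le H^{2}$. The only point requiring a moment of care is recognizing the algebraic identity $uH = v$, which is what enables the clean application of monotonicity in the first step.
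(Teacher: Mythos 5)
Your proof is correct: the rewriting $\lambda_0\lambda_2 H = \frac{1}{|\beta|}\cdot\frac{e^u-1}{u}\cdot v\cdot e^{v+u}$ with $u=|\beta|H$, $v=|\beta|H^2$, the monotonicity of $x\mapsto(e^x-1)/x$, and the bounds $e^{v+u}\le e^{2v}$ and $e^{2v}\ge 1$ all check out, using only $H\ge 1$ so that $u\le v$. The paper states this fact without proof, so your argument in fact supplies a verification the paper omits, and it is the natural one.
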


\section{Proof warmup for Theorem \ref{thm:regret_LSVI}}

First, we set some notations and definitions. Define $d\coloneqq SA$,
$\iota\coloneqq\log(2dT/\delta)$ for a given $\delta\in(0,1]$, and
$\IdMat$ to be the $d\times d$ identity matrix. To streamline some
parts of the proof, we define $\fmap(s,a)$ to be a vector in $\real^{d}$
whose $(s,a)$-th entry is equal to one and other entries equal to
zero (so $\phi(s,a)$ is a canonical basis of $\real^{SA}$). Also let $\Lambda_{h}^{k}$ be a diagonal matrix in $\real^{d\times d}$
with each $(s,a)$-th diagonal entry equal to $\max\{N_{h}^{k-1}(s,a),1\}$.
It can be seen that $\Lambda_{h}^{k}$ is positive definite. We adopt
the shorthands $\fmap_{h}^{\tau}\coloneqq\fmap(s_{h}^{\tau},a_{h}^{\tau})$
and  $r_{h}^{\tau}\coloneqq r_{h}(s_{h}^{\tau},a_{h}^{\tau})$ for
$(\tau,h)\in[K]\times[H]$.

From now on, we fix a tuple $(k,h)\in[K]\times[H]$ and then fix $(s,a)\in\cS\times\cA$
such that $N_{h}^{k-1}(s,a)\ge1$. We also fix a policy $\pi$. We
set 
\begin{equation}
\w_{h}^{\pi}=e^{\beta\cdot Q_{h}^{\pi}(\cdot,\cdot)}.\label{eq:w^pi_h_def}
\end{equation}
It can be verified that by the definition of $\fmap(s,a)$, we have
\begin{align}
Q_{h}^{\pi}(s,a) & =\frac{1}{\beta}\log\left(e^{\beta\cdot Q_{h}^{\pi}(s,a)}\right)\nonumber \\
& =\frac{1}{\beta}\log\left(\left\langle \fmap(s,a),e^{\beta\cdot Q_{h}^{\pi}(\cdot,\cdot)}\right\rangle \right)\nonumber \\
& =\frac{1}{\beta}\log\left(\left\langle \fmap(s,a),\w_{h}^{\pi}\right\rangle \right),\label{eq:lsvi_Q^pi_equiv}
\end{align}
as well as 
\begin{align}
\w_{h}^{\pi}(s,a) & =e^{\beta\cdot Q_{h}^{\pi}(s,a)}=\left\langle \fmap(s,a),(\Lambda_{h}^{k})^{-1}\sum_{\tau\in[k-1]}\fmap_{h}^{\tau}\left[e^{\beta\cdot Q_{h}^{\pi}(s_{h}^{\tau},a_{h}^{\tau})}\right]\right\rangle ,\label{eq:lsvi_w^pi_h_equiv}
\end{align}
where the last step follows from the definition of $\Lambda_{h}^{k}$.

Let us define 
\begin{align*}
q_{1}^{+} & \coloneqq\begin{cases}
\left\langle \fmap(s,a),\w_{h}^{k}\right\rangle +b_{h}^{k}(s,a), & \text{if }\beta>0,\\
\left\langle \fmap(s,a),\w_{h}^{k}\right\rangle -b_{h}^{k}(s,a), & \text{if }\beta<0,
\end{cases}\\
q_{1} & \coloneqq\begin{cases}
\min\{e^{\beta(H-h+1)},q_{1}^{+}\}, & \text{if }\beta>0,\\
\max\{e^{\beta(H-h+1)},q_{1}^{+}\}, & \text{if }\beta<0.
\end{cases}
\end{align*}
By the definition of $\Lambda_{h}^{k}$ and $\fmap_{h}^{k}$, observe
that 
\begin{equation}
w_{h}^{k}(s,a)=\left\langle \fmap(s,a),\w_{h}^{k}\right\rangle =\left\langle \fmap(s,a),(\Lambda_{h}^{k})^{-1}\sum_{\tau\in[k-1]}\fmap_{h}^{\tau}\left[e^{\beta[r_{h}^{\tau}+V_{h+1}^{k}(s_{h+1}^{\tau})]}\right]\right\rangle .\label{eq:lsvi_weights_equiv}
\end{equation}
Define 
\begin{equation}
G_{0}\coloneqq(Q_{h}^{k}-Q_{h}^{\pi})(s,a)=\frac{1}{\beta}\log\left\{ q_{1}\right\} -\frac{1}{\beta}\log\left\{ \left\langle \fmap(s,a),\w_{h}^{\pi}\right\rangle \right\} ,\label{eq:G0_def}
\end{equation}
and our goal is to derive lower and upper bounds for $G_{0}$. From
Equation \eqref{eq:G0_def}, we have 
\begin{align*}
G_{0} & =\frac{1}{\beta}\log\left\{ q_{1}\right\} -\frac{1}{\beta}\log\left\{ \left\langle \fmap(s,a),(\Lambda_{h}^{k})^{-1}\sum_{\tau\in[k-1]}\fmap_{h}^{\tau}\left[e^{\beta\cdot Q_{h}^{\pi}(s_{h}^{\tau},a_{h}^{\tau})}\right]\right\rangle \right\} \\
& =\frac{1}{\beta}\log\left\{ q_{1}\right\} -\frac{1}{\beta}\log\left\{ \left\langle \fmap(s,a),(\Lambda_{h}^{k})^{-1}\sum_{\tau\in[k-1]}\fmap_{h}^{\tau}\left[\E_{s'\sim P_{h}(\cdot\,|\,s_{h}^{\tau},a_{h}^{\tau})}e^{\beta[r_{h}^{\tau}+V_{h+1}^{\pi}(s')]}\right]\right\rangle \right\} \\
& \eqqcolon\frac{1}{\beta}\log\{q_{1}\}-\frac{1}{\beta}\log\{q_{3}\}.
\end{align*}
The first step above holds by Equation \eqref{eq:lsvi_w^pi_h_equiv},
and the second step follows from Equation \eqref{eq:bellman_primal}.
In order to control $G_{0}$, we define an intermediate quantity
\[
q_{2}\coloneqq\left\langle \fmap(s,a),(\Lambda_{h}^{k})^{-1}\sum_{\tau\in[k-1]}\fmap_{h}^{\tau}\left[\E_{s'\sim P_{h}(\cdot\,|\,s_{h}^{\tau},a_{h}^{\tau})}e^{\beta[r_{h}^{\tau}+V_{h+1}^{k}(s')]}\right]\right\rangle ;
\]
in words, $q_{2}$ replaces the quantity $V_{h+1}^{\pi}$ in $q_{3}$
by $V_{h+1}^{k}$. It can be seen that  
\begin{equation}
G_{0}=G_{1}+G_{2},\label{eq:G0_decomp}
\end{equation}
where 
\begin{equation}
\begin{aligned}G_{1} & \coloneqq\frac{1}{\beta}\log\{q_{1}\}-\frac{1}{\beta}\log\{q_{2}\},\\
G_{2} & \coloneqq\frac{1}{\beta}\log\{q_{2}\}-\frac{1}{\beta}\log\{q_{3}\}.
\end{aligned}
\label{eq:G1_G2_def}
\end{equation}
Note that $G_{0}$, $G_{1}$ and $G_{2}$ are all well-defined, according
to the following result.
\begin{lem}
	\label{lem:q_lower_bound}We have $q_{i}\in[\min\{1,e^{\beta(H-h+1)}\},\max\{1,e^{\beta(H-h+1)}\}]$
	for $i\in[3]$. 
\end{lem}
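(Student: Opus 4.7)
\textbf{Proof plan for Lemma~\ref{lem:q_lower_bound}.}

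The proof is a mostly mechanical bookkeeping based on a single monotonicity observation together with a convexity argument. Let $I \coloneqq [\min\{1, e^{\beta(H-h+1)}\}, \max\{1, e^{\beta(H-h+1)}\}]$ be the target interval, and write $M\coloneqq H-h+1$. The key observation is: because $r_h^{\tau}\in[0,1]$ by assumption and because the value estimates satisfy $V_{h+1}^{k}\in[0,H-h]$ (by construction: the thresholding in Line~\ref{line:lsvi_Q_update} followed by Line~\ref{line:LSVI_value}) and $V_{h+1}^{\pi}\in[0,H-h]$ (by Lemma~\ref{lem:simple_bound}), the exponent $\beta[r_{h}^{\tau}+V_{h+1}(s')]$ ranges over $[\beta M, 0]$ if $\beta<0$ and over $[0, \beta M]$ if $\beta>0$. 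Monotonicity of $x\mapsto e^{\beta x}$ then gives $e^{\beta[r_{h}^{\tau}+V_{h+1}(s')]}\in I$ for every sample, for both choices of $V_{h+1}\in\{V_{h+1}^{k},V_{h+1}^{\pi}\}$.

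Next I would verify the assertion for $q_{2}$ and $q_{3}$. Since we have fixed $(s,a)$ with $N_{h}^{k-1}(s,a)\ge1$, the diagonal matrix $\Lambda_{h}^{k}$ has $(s,a)$-th entry equal to $N_{h}^{k-1}(s,a)$, and the $(s,a)$-th coordinate of $\sum_{\tau\in[k-1]}\fmap_{h}^{\tau}[\,\cdot\,]$ is exactly a sum of $N_{h}^{k-1}(s,a)$ terms. Consequently both $q_{2}$ and $q_{3}$ equal a plain arithmetic (hence convex) average of quantities already shown to lie in $I$. Since $I$ is an interval, the averages lie in $I$.

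Finally, for $q_{1}$, the same convexity argument applied to Equation~\eqref{eq:lsvi_weights_equiv} gives $w_{h}^{k}(s,a)=\langle\fmap(s,a),\w_{h}^{k}\rangle\in I$. Now split on the sign of $\beta$. If $\beta>0$, then $I=[1,e^{\beta M}]$ and $b_{h}^{k}(s,a)\ge0$, so $q_{1}^{+}=w_{h}^{k}(s,a)+b_{h}^{k}(s,a)\ge1$; taking the min with $e^{\beta M}$ keeps the result at most $e^{\beta M}$ while preserving the lower bound $1$, since both arguments of the min are $\ge1$. If $\beta<0$, then $I=[e^{\beta M},1]$ and $q_{1}^{+}=w_{h}^{k}(s,a)-b_{h}^{k}(s,a)\le1$; taking the max with $e^{\beta M}$ preserves the upper bound $1$ (both arguments are $\le 1$) while enforcing the lower bound $e^{\beta M}$. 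Either way $q_{1}\in I$, completing the proof.

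I do not anticipate a serious obstacle: the only subtle point is recognizing that the explicit thresholding in Line~\ref{line:lsvi_Q_update} is precisely what prevents the added bonus from pushing $q_{1}$ outside $I$, and that once we are at a visited state-action pair, $(\Lambda_{h}^{k})^{-1}\sum_{\tau}\fmap_{h}^{\tau}[\cdot]$ genuinely acts as a convex-average operator rather than something more delicate.
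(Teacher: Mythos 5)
Your proof is correct and follows essentially the same route as the paper's: each exponentiated term $e^{\beta[r_h^\tau+V_{h+1}(s')]}$ lies in the target interval, the operator $(\Lambda_h^k)^{-1}\sum_\tau \fmap_h^\tau[\cdot]$ acts as a convex average at a visited $(s,a)$, and the thresholding in Line \ref{line:lsvi_Q_update} absorbs the bonus for $q_1$. The paper states this more tersely (focusing on $q_1$ and leaving $q_2,q_3$ implicit), but the argument is identical.
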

\begin{proof}
	We prove the result by focusing on $q_{1}$.  By the definitions
	of $\Lambda_{h}^{k}$ and $\fmap$, the $(s,a)$-th entry of the vector
	$(\Lambda_{h}^{k})^{-1}\sum_{\tau\in[k-1]}\fmap_{h}^{\tau}\cdot u_{h}^{\tau}$
	equals $\frac{1}{N_{h}^{k-1}(s,a)}\sum_{\tau\in[k-1]}u_{h}^{\tau}\cdot\indic\{(s_{h}^{\tau},a_{h}^{\tau})=(s,a)\}$
	for any sequence $\{u_{h}^{\tau}\}_{\tau\in[k-1]}$. Then, the result
	follows from the fact that $e^{\beta[r_{h}^{\tau}+V_{h+1}^{k}(s')]}\in[\min\{1,e^{\beta(H-h)}\},\max\{1,e^{\beta(H-h)}\}]$
	for $(\tau,s')\in[K]\times\cS$ and the definition of $q_{1}$. 
\end{proof}
Therefore, we have the following equivalent form of Equation \eqref{eq:G0_def}:
\begin{equation}
(Q_{h}^{k}-Q_{h}^{\pi})(s,a)=G_{1}+G_{2}.\label{eq:lsvi_Q_diff_equiv}
\end{equation}
Thanks to the identity
\eqref{eq:lsvi_Q_diff_equiv}, our goal is now to control $G_{1}$ and $G_{2}$,  which is done in the following lemma.
\begin{lem}
	\label{lem:G1_G2_bound}For all $(k,h,s,a)\in[K]\times[H]\times\cS\times\cA$
	that satisifies $N_{h}^{k-1}(s,a)\ge1$, there exist universal constants
	$c_{1},c_{\gamma}>0$ (where $c_{\gamma}$ is used in Line \ref{line:lsvi_bonus_def}
	of Algorithm \ref{alg:alg_lsvi}) such that 
	\[
	0\le G_{1}\le c_{1}\cdot\frac{e^{\left|\beta\right|H}-1}{\left|\beta\right|}\cdot d\sqrt{\iota}\sqrt{\fmap(s,a)^{\top}(\Lambda_{h}^{k})^{-1}\fmap(s,a)}
	\]
	with probability at least $1-\delta/2$. Furthermore, if $V_{h+1}^{k}(s')\ge V_{h+1}^{\pi}(s')$
	for all $s'\in\cS$, then we have 
	\[
	0\le G_{2}\le e^{\left|\beta\right|H}\cdot\E_{s'\sim P_{h}(\cdot\,|\,s,a)}[V_{h+1}^{k}(s')-V_{h+1}^{\pi}(s')].
	\]
\end{lem}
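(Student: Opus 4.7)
The plan is to handle $G_1$ and $G_2$ separately. For each, I will first establish non-negativity by an optimism/monotonicity argument, and then derive the upper bound by linearizing the $\log$ and $\exp$ via Facts~\ref{fact:log_lip} and~\ref{fact:exp_lip} and controlling the resulting linear quantities, splitting on the sign of $\beta$.

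For $G_1$, the key input is a high-probability concentration bound $|\langle \fmap(s,a),\w_h^k\rangle - q_2| \le b_h^k(s,a)$. Expanding using the definition of $\fmap$ as a canonical basis, $\langle \fmap(s,a),\w_h^k\rangle$ is the empirical average $\frac{1}{N_h^{k-1}(s,a)}\sum_{\tau} e^{\beta[r_h(s,a) + V_{h+1}^k(s_{h+1}^\tau)]}$ taken over visits to $(s,a)$ at step $h$, while $q_2$ is the corresponding conditional expectation. Each summand lies in an interval of length $|e^{\beta H}-1|$, so after covering the value function $V_{h+1}^k: \cS \to [0,H]$ by an $\epsilon$-net of size $(O(H/\epsilon))^S$, applying Azuma--Hoeffding for each fixed element of the net, and union-bounding across the net and across $(h,s,a,k)$, we obtain concentration with the $\sqrt{S\iota}$ factor built into $b_h^k$; this succeeds with probability at least $1-\delta/2$ for a sufficiently large universal constant $c_\gamma$. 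On this event, if $\beta>0$, then $w_h^k(s,a)+b_h^k(s,a) \ge q_2$ by concentration and $e^{\beta(H-h+1)} \ge q_2$ by Lemma~\ref{lem:q_lower_bound}, so $q_1 \ge q_2$ and $G_1 \ge 0$; if $\beta<0$, the analogous argument yields $q_1 \le q_2$, and $G_1 \ge 0$ follows from $1/\beta < 0$. For the upper bound, I invoke Fact~\ref{fact:log_lip}: when $\beta>0$, $q_2 \ge 1$ gives $G_1 \le \tfrac{1}{\beta}(q_1 - q_2) \le \tfrac{2 b_h^k(s,a)}{\beta}$; when $\beta<0$, $q_1 \ge e^{\beta(H-h+1)} \ge e^{-|\beta|H}$ gives $G_1 \le \tfrac{e^{|\beta|H}}{|\beta|}(q_2 - q_1) \le \tfrac{2 b_h^k(s,a)\, e^{|\beta|H}}{|\beta|}$. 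Using $|e^{\beta H}-1|\cdot \tfrac{e^{|\beta|H}}{|\beta|} = \tfrac{e^{|\beta|H}-1}{|\beta|}$ for $\beta < 0$ and the analogous identity for $\beta > 0$, both cases collapse to $O\bigl(\tfrac{e^{|\beta|H}-1}{|\beta|}\bigr)\sqrt{S\iota/N_h^{k-1}(s,a)}$, and the claim follows from $\sqrt{\fmap^\top (\Lambda_h^k)^{-1}\fmap} = 1/\sqrt{N_h^{k-1}(s,a)}$ and $\sqrt{S}\le d$.

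For $G_2$, exploiting the canonical-basis structure again, the terms collapse to $q_2 - q_3 = e^{\beta r_h(s,a)}\,\E_{s'\sim P_h(\cdot\,|\,s,a)}[e^{\beta V_{h+1}^k(s')} - e^{\beta V_{h+1}^\pi(s')}]$. The hypothesis $V_{h+1}^k \ge V_{h+1}^\pi$ and the monotonicity of $t\mapsto e^{\beta t}$ (combined with the sign flip induced by $1/\beta$ when $\beta<0$) immediately give $G_2 \ge 0$ in both regimes. For the upper bound, apply Fact~\ref{fact:exp_lip} pointwise in $s'$: when $\beta>0$, $r_h(s,a) + V_{h+1}^k(s') \le H$ yields $e^{\beta[r_h+V_{h+1}^k(s')]} - e^{\beta[r_h+V_{h+1}^\pi(s')]} \le \beta e^{\beta H}(V_{h+1}^k(s')-V_{h+1}^\pi(s'))$, so after taking expectation and dividing by $\beta q_3 \ge \beta$ (via Fact~\ref{fact:log_lip} with $q_3 \ge 1$), one obtains $G_2 \le e^{\beta H}\,\E[V_{h+1}^k - V_{h+1}^\pi]$. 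When $\beta<0$, the second clause of Fact~\ref{fact:exp_lip} gives $e^{\beta V_{h+1}^\pi(s')} - e^{\beta V_{h+1}^k(s')} \le |\beta|(V_{h+1}^k(s')-V_{h+1}^\pi(s'))$; combining with $e^{\beta r_h}\le 1$ and Fact~\ref{fact:log_lip} applied with the lower bound $q_2 \ge e^{-|\beta|H}$ yields $G_2 \le e^{|\beta|H}\,\E[V_{h+1}^k - V_{h+1}^\pi]$, identical to the risk-seeking case.

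The main obstacle is the concentration step for $G_1$: the data-dependence of $V_{h+1}^k$ on the transitions $\{s_{h+1}^\tau\}$ used to form $w_h^k$ rules out a direct Hoeffding bound, and handling it via covering $[0,H]^S$ is precisely what introduces the $\sqrt{S}$ factor into the bonus. The remaining work is a careful case analysis on $\mathop{\text{sgn}}(\beta)$, and it is worth noting that the seemingly counterintuitive choice of \emph{subtracting} the bonus when $\beta<0$ is correct because the multiplicative factor $e^{|\beta|H}$ arising from the lower bound on $q_1$ combines with the bonus's $|e^{\beta H}-1| = 1 - e^{-|\beta|H}$ to recover the symmetric rate $(e^{|\beta|H}-1)/|\beta|$ on both sides of the risk spectrum.
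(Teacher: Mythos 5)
Your proposal is correct and follows essentially the same route as the paper: the same $G_1/G_2$ split, non-negativity via the bonus-induced optimism and monotonicity of $V_{h+1}^k \ge V_{h+1}^\pi$, linearization through Facts~\ref{fact:log_lip} and~\ref{fact:exp_lip} with the case analysis on $\mathop{\text{sgn}}(\beta)$, and the same concentration input with the $\sqrt{S\iota}$ factor (which the paper delegates to Lemma~\ref{lem:LSVI_concen} rather than proving via a covering argument inline). Your explicit simplification of $q_2-q_3$ to $e^{\beta r_h(s,a)}\,\E_{s'}[e^{\beta V_{h+1}^k(s')}-e^{\beta V_{h+1}^\pi(s')}]$ and the observation that the factor $e^{|\beta|H}$ from the lower bound on $q_1$ recombines with $|e^{\beta H}-1|$ to give the symmetric rate $(e^{|\beta|H}-1)/|\beta|$ are both accurate and match the paper's computations.
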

\begin{proof}
	\textbf{Case $\beta>0$. }To control $G_{1}$, we note that $N_{h}^{k-1}(s,a)=\fmap(s,a)^{\top}(\Lambda_{h}^{k})^{-1}\fmap(s,a)$
	and by Equation \eqref{eq:lsvi_weights_equiv} we can compute 
	\begin{align*}
	& \quad\left|q_{1}^{+}-q_{2}-b_{h}^{k}(s,a)\right|\\
	& =\left|\left\langle \fmap(s,a),(\Lambda_{h}^{k})^{-1}\sum_{\tau\in[k-1]}\fmap_{h}^{\tau}\left[e^{\beta[r_{h}^{\tau}+V_{h+1}^{k}(s_{h+1}^{\tau})]}-\E_{s'\sim P_{h}(\cdot\,|\,s_{h}^{\tau},a_{h}^{\tau})}e^{\beta[r_{h}^{\tau}+V_{h+1}^{k}(s')]}\right]\right\rangle \right|\\
	& =\left|\frac{1}{N_{h}^{k-1}(s,a)}\sum_{(s,a,s^{+})\in\cD_{h}^{k-1}}e^{\beta[r_{h}(s,a)+V_{h+1}^{k}(s^{+})]}-\E_{s'\sim P_{h}(\cdot\,|\,s,a)}e^{\beta[r_{h}(s,a)+V_{h+1}^{k}(s')]}\right|\\
	& \le\frac{1}{N_{h}^{k-1}(s,a)}\sum_{(s,a,s^{+})\in\cD_{h}^{k-1}}\left|e^{\beta[r_{h}(s,a)+V_{h+1}^{k}(s^{+})]}-\E_{s'\sim P_{h}(\cdot\,|\,s,a)}e^{\beta[r_{h}(s,a)+V_{h+1}^{k}(s')]}\right|\\
	& \le\frac{1}{N_{h}^{k-1}(s,a)}\sum_{t\in\left[N_{h}^{k-1}(s,a)\right]}c'\left|e^{\beta H}-1\right|\sqrt{\frac{S\iota}{t}}\\
	& \le\frac{1}{N_{h}^{k-1}(s,a)}\int_{t\in\left[0,N_{h}^{k-1}(s,a)\right]}c'\left|e^{\beta H}-1\right|\sqrt{\frac{S\iota}{t}}\mathrm{d}t\\
	& =\frac{1}{N_{h}^{k-1}(s,a)}\cdot c\left|e^{\beta H}-1\right|\sqrt{S\iota\cdot N_{h}^{k-1}(s,a)}\\
	& =c\left|e^{\beta H}-1\right|\sqrt{S\iota}\cdot\sqrt{\fmap(s,a)^{\top}(\Lambda_{h}^{k})^{-1}\fmap(s,a)},
	\end{align*}
	where the fourth step holds by Lemma \ref{lem:LSVI_concen}, and the
	last step holds by the definition of $\Lambda_{h}^{k}$; in the above,
	$c'>0$ is a universal constant and $c=2c'$.  If we choose $c_{\gamma}=c$
	in the definition of $b_{h}^{k}(s,a)$ in Line \ref{line:lsvi_bonus_def}
	of Algorithm \ref{alg:alg_lsvi}, we have 
	\[
	0\le q_{1}^{+}-q_{2}\le2c\cdot\left|e^{\beta H}-1\right|\sqrt{S\iota}\cdot\sqrt{\fmap(s,a)^{\top}(\Lambda_{h}^{k})^{-1}\fmap(s,a)}.
	\]
	Therefore, we have $q_{1}\ge q_{2}$, and thus $G_{1}\ge0$, by the
	first inequality above, the definition of $q_{1}$ and Lemma \ref{lem:q_lower_bound}
	(in particular, $q_{2}\le e^{\beta(H-h+1)}$). By Lemma \ref{lem:q_lower_bound}
	and Fact \ref{fact:log_lip} (with $g=1$, $x=q_{1}$ and $y=q_{2}$),
	we have 
	\[
	G_{1}\le\frac{1}{\beta}(q_{1}-q_{2})\le\frac{1}{\beta}(q_{1}^{+}-q_{2}),
	\]
	which together with the second inequality displayed above implies
	the desired upper bound on $G_{1}$.
	
	Now we control the term $G_{2}$. For $\beta>0$, it is not hard to
	see that the assumption $V_{h+1}^{k}(s')\ge V_{h+1}^{\pi}(s')$ for
	all $s'\in\cS$ implies that $q_{2}\ge q_{3}$ and therefore $G_{2}\ge0$.
	We also have 
	\begin{align*}
	G_{2} & \le\frac{1}{\beta}(q_{2}-q_{3})\\
	& \le e^{\beta H}\left\langle \fmap(s,a),(\Lambda_{h}^{k})^{-1}\sum_{\tau\in[k-1]}\fmap_{h}^{\tau}\left[\E_{s'\sim P_{h}(\cdot\,|\,s_{h}^{\tau},a_{h}^{\tau})}[V_{h+1}^{k}(s')-V_{h+1}^{\pi}(s')]\right]\right\rangle \\
	& =e^{\left|\beta\right|H}\E_{s'\sim P_{h}(\cdot\,|\,s,a)}[V_{h+1}^{k}(s')-V_{h+1}^{\pi}(s')],
	\end{align*}
	where the first step holds by Fact \ref{fact:log_lip} (with $g=1$,
	$x=q_{2}$, and $y=q_{3}$) and the fact that $q_{2}\ge q_{3}\ge1$
	(with the last inequality suggested by Lemma \ref{lem:q_lower_bound}),
	and the second step holds by Fact \ref{fact:exp_lip} (with $b=\beta$,
	$x=r_{h}^{\tau}+V_{h+1}^{k}(s)$, and $y=r_{h}^{\tau}+V_{h+1}^{\pi}(s)$)
	and $H\ge r_{h}^{\tau}+V_{h+1}^{k}(s)\ge r_{h}^{\tau}+V_{h+1}^{\pi}(s)\ge0$. 
	
	\textbf{Case $\beta<0$. }Similar to the case of $\beta>0$, we have
	\begin{align*}
	& \quad\left|q_{1}^{+}-q_{2}+b_{h}^{k}(s,a)\right|\\
	& \le c\cdot\left|e^{\beta H}-1\right|\sqrt{S\iota}\cdot\sqrt{\fmap(s,a)^{\top}(\Lambda_{h}^{k})^{-1}\fmap(s,a)}.
	\end{align*}
	If we choose $c_{\gamma}=c$ in the definition of $b_{h}^{k}(s,a)$
	in Line \ref{line:lsvi_bonus_def} of Algorithm \ref{alg:alg_lsvi},
	the above equation implies 
	\[
	0\le q_{2}-q_{1}^{+}\le2c\cdot\left|e^{\beta H}-1\right|\sqrt{S\iota}\cdot\sqrt{\fmap(s,a)^{\top}(\Lambda_{h}^{k})^{-1}\fmap(s,a)}.
	\]
	Therefore, we have $q_{1}\le q_{2}$, and thus $G_{1}\ge0$, by the
	first inequality displayed above, the definition of $q_{1}$ and Lemma
	\ref{lem:q_lower_bound} (in particular, $q_{2}\ge e^{\beta(H-h+1)}$).
	By Lemma \ref{lem:q_lower_bound} and Fact \ref{fact:log_lip} (with
	$g=e^{\beta H}$, $x=q_{2}$ and $y=q_{1})$, we further have 
	\begin{align*}
	G_{1} & =\frac{1}{(-\beta)}\left(\log\{q_{2}\}-\log\{q_{1}\}\right)\\
	& \le\frac{e^{-\beta H}}{\left|\beta\right|}(q_{2}-q_{1})\\
	& \le\frac{e^{-\beta H}}{\left|\beta\right|}(q_{2}-q_{1}^{+}),
	\end{align*}
	which together with the second inequality displayed above and the
	fact that $\left|e^{\beta H}-1\right|=1-e^{\beta H}$ implies the
	desired upper bound on $G_{1}$. 
	
	Next we control $G_{2}$. The assumption $V_{h+1}^{k}(s')\ge V_{h+1}^{\pi}(s')$
	for all $s'\in\cS$ implies that $q_{2}\le q_{3}$ and therefore $G_{2}\ge0$.
	We also have 
	\begin{align*}
	G_{2} & =\frac{1}{(-\beta)}\left(\log\{q_{3}\}-\log\{q_{2}\}\right)\\
	& \le\frac{e^{-\beta H}}{(-\beta)}(q_{3}-q_{2})\\
	& \le e^{\left|\beta\right|H}\left\langle \fmap(s,a),(\Lambda_{h}^{k})^{-1}\sum_{\tau\in[k-1]}\fmap_{h}^{\tau}\left[\E_{s'\sim P_{h}(\cdot\,|\,s_{h}^{\tau},a_{h}^{\tau})}[V_{h+1}^{k}(s')-V_{h+1}^{\pi}(s')]\right]\right\rangle \\
	& =e^{\left|\beta\right|H}\E_{s'\sim P_{h}(\cdot\,|\,s,a)}[V_{h+1}^{k}(s')-V_{h+1}^{\pi}(s')],
	\end{align*}
	where the second step holds by Fact \ref{fact:log_lip} (with $g=e^{\beta H}$,
	$x=q_{3}$, and $y=q_{2}$) and the fact that $q_{3}\ge q_{2}\ge e^{\beta H}$
	(with the last inequality suggested by Lemma \ref{lem:q_lower_bound}),
	and the third step holds by Fact \ref{fact:exp_lip} (with $b=\beta$,
	$x=r_{h}^{\tau}+V_{h+1}^{k}(s)$, and $y=r_{h}^{\tau}+V_{h+1}^{\pi}(s)$)
	and $r_{h}^{\tau}+V_{h+1}^{k}(s)\ge r_{h}^{\tau}+V_{h+1}^{\pi}(s)\ge0$. 
	
	The proof is hence completed.
\end{proof}

The next lemma establishes the dominance of $Q^k_h$ over $Q^*_h$.
\begin{lem}
	\label{lem:lsvi_Q^k >=00003D Q^pi}On the event of Lemma \ref{lem:G1_G2_bound},
	we have $Q_{h}^{k}(s,a)\ge Q_{h}^{\pi}(s,a)$ for all $(k,h,s,a)\in[K]\times[H]\times\cS\times\cA$. 
\end{lem}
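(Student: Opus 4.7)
I would prove the lemma by backward induction on $h$ from $h=H+1$ down to $h=1$, with the inductive statement being that $Q_h^k(s,a) \ge Q_h^\pi(s,a)$ holds for every $(s,a) \in \cS \times \cA$ (with $k$ and $\pi$ fixed throughout). The base case $h=H+1$ is immediate: the terminal condition in \eqref{eq:bellman_primal} gives $Q_{H+1}^\pi \equiv 0$, and the initialization step in Algorithm \ref{alg:alg_lsvi} sets $Q_{H+1}^k \equiv 0$ as well.

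For the inductive step, my first move is to upgrade the $Q$-level hypothesis at step $h+1$ into a pointwise $V$-level comparison. Since $V_{h+1}^k(s') = \max_{a'} Q_{h+1}^k(s',a')$ by Line \ref{line:LSVI_value} and $V_{h+1}^\pi(s') = Q_{h+1}^\pi(s',\pi_{h+1}(s'))$ by \eqref{eq:bellman_primal}, the inductive hypothesis yields
\[
V_{h+1}^k(s') \;\ge\; Q_{h+1}^k(s',\pi_{h+1}(s')) \;\ge\; Q_{h+1}^\pi(s',\pi_{h+1}(s')) \;=\; V_{h+1}^\pi(s')
\]
for every $s' \in \cS$. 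I then fix an arbitrary $(s,a)$ and split into two cases according to the visitation count. If $N_h^{k-1}(s,a) \ge 1$, the update in Line \ref{line:lsvi_Q_update} actually fires and the decomposition \eqref{eq:lsvi_Q_diff_equiv} is valid. The precondition of Lemma \ref{lem:G1_G2_bound} is supplied exactly by the pointwise $V$-bound just derived, so that lemma gives $G_1 \ge 0$ and $G_2 \ge 0$, and hence $Q_h^k(s,a) - Q_h^\pi(s,a) = G_1 + G_2 \ge 0$.

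The remaining case $N_h^{k-1}(s,a) = 0$ is not covered by Lemma \ref{lem:G1_G2_bound}, because the inner loop that begins on Line 8 of Algorithm \ref{alg:alg_lsvi} is entered only when $N_h(s,a) \ge 1$. In this case $Q_h^k(s,a)$ still carries the initial value $H-h+1$, while a step-aware rerun of the argument in Lemma \ref{lem:simple_bound} (applied over the $H-h+1$ remaining rewards rather than all $H$ rewards, using the cancellation of $\log$ and $\exp$ in the worst case and treating the two signs of $\beta$ symmetrically) gives $Q_h^\pi(s,a) \le H-h+1$; thus the desired inequality holds trivially. The only real delicacy in the proof is precisely this case split between visited and unvisited pairs: the non-linear Bellman analysis encapsulated in Lemma \ref{lem:G1_G2_bound} has no content on state-action pairs the algorithm has not yet seen, so those pairs must be dispatched separately through the initialization and the horizon-aware bound on $Q^\pi$.
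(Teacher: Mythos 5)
Your proof is correct and follows essentially the same route as the paper's: backward induction on $h$ with the inductive hypothesis quantified over all $(s,a)$, upgrading the $Q$-level comparison at step $h+1$ to the pointwise $V$-dominance $V_{h+1}^k \ge V_{h+1}^\pi$ needed as the precondition of Lemma \ref{lem:G1_G2_bound}, and dispatching unvisited pairs via the initialization $Q_h^k(s,a)=H-h+1 \ge Q_h^\pi(s,a)$. If anything, your write-up is slightly more careful than the paper's (which states the induction hypothesis for a fixed $(s,a)$ even though the $V$-dominance requires it for all state-action pairs, and contains a typo writing $Q_{h+1}^*$ for $Q_{h+1}^\pi$), but the argument is the same.
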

\begin{proof}
	For the purpose of the proof, we set $Q_{H+1}^{\pi}(s,a)=Q_{H+1}^{*}(s,a)=0$
	for all $(s,a)\in\cS\times\cA$. We fix a tuple $(k,s,a)\in[K]\times\cS\times\cA$
	and use strong induction on $h$. The base case for $h=H+1$ is satisfied
	since $(Q_{H+1}^{k}-Q_{H+1}^{\pi})(s,a)=0$ for $k\in[K]$ by definition.
	Now we fix an $h\in[H]$ and assume that $0\le(Q_{h+1}^{k}-Q_{h+1}^{*})(s,a)$.
	Moreover, by the induction assumption we have 
	\begin{equation}
	V_{h+1}^{k}(s)=\max_{a'\in\cA}Q_{h+1}^{k}(s,a')\ge\max_{a'\in\cA}Q_{h+1}^{\pi}(s,a')\ge V_{h+1}^{\pi}(s).\label{eq:lsvi_V_dominance}
	\end{equation}
	We also assume that $(s,a)$ satisfies $N_{h}^{k-1}(s,a)\ge1$, since
	otherwise $Q_{h}^{k}(s,a)=H-h+1\ge Q_{h}^{\pi}(s,a)$ and we are done.
	This assumption and Equation \eqref{eq:lsvi_V_dominance} together
	imply $G_{2}\ge0$ by Lemma \ref{lem:G1_G2_bound}. We also have $G_{1}\ge0$
	on the event of Lemma \ref{lem:G1_G2_bound}. Therefore, it follows
	that $(Q_{h}^{k}-Q_{h}^{\pi})(s,a)\ge0$ by Equation \eqref{eq:lsvi_Q_diff_equiv}.
	The induction is completed and so is the proof. 
\end{proof}
Lemma \ref{lem:lsvi_Q^k >=00003D Q^pi} leads to an immediate and
important corollary.
\begin{lem}
	\label{lem:lsvi_V^k_h >=00003D V^pi_h} For any $\delta\in(0,1]$,
	with probability at least $1-\delta/2$, we have $V_{h}^{k}(s)\ge V_{h}^{\pi}(s)$
	for all $(k,h,s)\in[K]\times[H]\times\cS$.
\end{lem}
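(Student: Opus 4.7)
The plan is to deduce this lemma as an essentially immediate consequence of Lemma~\ref{lem:lsvi_Q^k >=00003D Q^pi}, so the main work has already been done in establishing $Q^k_h \ge Q^\pi_h$ pointwise. The probability statement $1-\delta/2$ comes entirely from the event in Lemma~\ref{lem:G1_G2_bound}, which by a union bound over $(k,h,s,a) \in [K]\times[H]\times\cS\times\cA$ (absorbed into the $\iota = \log(2dT/\delta)$ factor) controls $G_1$ simultaneously for all tuples. So there is no further concentration argument needed here; I only need to propagate the $Q$-dominance to the $V$-functions.

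First, I would condition on the event of Lemma~\ref{lem:G1_G2_bound}, which occurs with probability at least $1-\delta/2$. On this event, Lemma~\ref{lem:lsvi_Q^k >=00003D Q^pi} gives $Q_h^k(s,a) \ge Q_h^\pi(s,a)$ for every $(k,h,s,a) \in [K]\times[H]\times\cS\times\cA$. Fixing $(k,h,s)$ and taking the maximum over $a \in \cA$ on both sides, I use two facts: (i) by Line~\ref{line:LSVI_value} of Algorithm~\ref{alg:alg_lsvi}, $V_h^k(s) = \max_{a' \in \cA} Q_h^k(s,a')$; and (ii) by the Bellman equation~\eqref{eq:bellman_primal}, $V_h^\pi(s) = Q_h^\pi(s, \pi_h(s)) \le \max_{a' \in \cA} Q_h^\pi(s, a')$. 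Chaining these yields
\[
V_h^k(s) \;=\; \max_{a'\in\cA} Q_h^k(s,a') \;\ge\; \max_{a'\in\cA} Q_h^\pi(s,a') \;\ge\; Q_h^\pi(s,\pi_h(s)) \;=\; V_h^\pi(s),
\]
which is the claim. The edge case $h = H+1$ is handled by the initialization $V_{H+1}^k \equiv 0 \equiv V_{H+1}^\pi$.

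There is no genuine obstacle here; the only thing to be careful about is that the probability budget has already been spent inside Lemma~\ref{lem:G1_G2_bound}, so no additional union bound over $(k,h,s)$ is required, and the $1-\delta/2$ probability is inherited verbatim. I would keep the proof to just a few lines, essentially the display above preceded by a one-sentence invocation of Lemma~\ref{lem:lsvi_Q^k >=00003D Q^pi}.
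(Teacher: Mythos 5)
Your proposal is correct and follows exactly the paper's own route: condition on the event of Lemma~\ref{lem:G1_G2_bound}, invoke Lemma~\ref{lem:lsvi_Q^k >=00003D Q^pi} for the pointwise dominance $Q_h^k \ge Q_h^\pi$, and then pass to the value functions via the maximization chain in Equation~\eqref{eq:lsvi_V_dominance}. Your additional remarks about the probability budget being spent entirely inside Lemma~\ref{lem:G1_G2_bound} and the $h=H+1$ edge case are accurate and consistent with the paper.
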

\begin{proof}
	The result follows from Lemma \ref{lem:lsvi_Q^k >=00003D Q^pi} and
	Equation \eqref{eq:lsvi_V_dominance}. 
\end{proof}

\subsection{Supporting lemmas}

We first present a concentration result.
\begin{lem}
	\label{lem:LSVI_concen}Define 
	\[
	\bar{\cV}_{h+1}\coloneqq\left\{ \bar{V}_{h+1}:\cS\to\real\mid\forall s\in\cS,\ \bar{V}_{h+1}(s)\in[\min\{e^{\beta(H-h)},1\},\max\{e^{\beta(H-h)},1\}]\right\} .
	\]
	There exists a universal constant $c>0$ such that with probability
	$1-\delta$, we have 
	\[
	\left|e^{\beta[r_{h}(s_{h}^{k},a_{h}^{k})+\bar{V}(s_{h+1}^{k})]}-\E_{s'\sim P_{h}(\cdot\mid s_{h}^{k},a_{h}^{k})}e^{\beta[r_{h}(s_{h}^{k},a_{h}^{k})+\bar{V}(s')]}\right|\le c\left|e^{\beta H}-1\right|\sqrt{\frac{S\iota}{N_{h}^{k}(s,a)}}
	\]
	for all $(k,h,s,a)\in[K]\times[H]\times\cS\times\cA$ and all $\bar{V}\in\bar{\cV}_{h+1}$.
\end{lem}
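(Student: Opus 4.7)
The proof of Lemma~\ref{lem:LSVI_concen} follows the standard recipe for uniform concentration of empirical averages over a data-dependent function class, namely Azuma--Hoeffding for a fixed function combined with a sup-norm covering of $\bar{\cV}_{h+1}$. The content that is actually used in Lemma~\ref{lem:G1_G2_bound} is the concentration of the empirical mean
\[
\widehat{\mu}_N(\bar{V}) \;\coloneqq\; \frac{1}{N}\sum_{\tau=1}^{N} e^{\beta[r_h(s,a)+\bar{V}(s_{h+1}^{\tau})]}
\]
around its conditional expectation $\mu(\bar{V}) \coloneqq \E_{s'\sim P_h(\cdot\mid s,a)}[e^{\beta[r_h(s,a)+\bar{V}(s')]}]$ at rate $|e^{\beta H}-1|\sqrt{S\iota/N}$, uniformly over $(k,h,s,a)$, over visit counts $N\in[T]$, and over $\bar{V}\in\bar{\cV}_{h+1}$.

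First, I would fix $(h,s,a)$ and a \emph{deterministic} $\bar{V}\in\bar{\cV}_{h+1}$, and enumerate visits to $(s,a)$ at step $h$ as $\tau=1,2,\ldots$. Since $r_h(s,a)+\bar{V}(s')$ (suitably interpreted, using that the thresholding in Line~\ref{line:lsvi_Q_update} forces $V_{h+1}^k\in[0,H-h]$) takes values in an interval of length at most $H$, each summand $X_\tau \coloneqq e^{\beta[r_h(s,a)+\bar{V}(s_{h+1}^{\tau})]}$ lies in an interval of length at most $|e^{\beta H}-1|$. The differences $X_\tau - \E[X_\tau\mid\cF_{\tau-1}]$ form a bounded martingale difference sequence, so Azuma--Hoeffding gives $|\widehat{\mu}_N(\bar{V})-\mu(\bar{V})| \lesssim |e^{\beta H}-1|\sqrt{\log(1/\delta')/N}$ for any fixed $N$ with failure probability $\delta'$; a peeling union bound over dyadic scales of $N\in[T]$ costs only an extra $\log T$ factor absorbed into $\iota$.

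Second, I would extend uniformly over $\bar{V}\in\bar{\cV}_{h+1}$ via a sup-norm covering argument. Each $\bar{V}\in\bar{\cV}_{h+1}$ is determined by its $S$ coordinate values, each lying in an interval of length $|e^{\beta(H-h)}-1|\le|e^{\beta H}-1|$, so there is a sup-norm $\epsilon$-net $\cN_\epsilon$ with $|\cN_\epsilon|\le(|e^{\beta H}-1|/\epsilon)^S$. Applying the per-function bound to every element of $\cN_\epsilon$, every $(h,s,a)$, and every dyadic visit count, and taking a union bound with individual failure probability $\delta/(|\cN_\epsilon|\cdot SAT\log T)$, the logarithm becomes $\log(1/\delta')\lesssim S\log(|e^{\beta H}-1|/\epsilon)+\log(SAT/\delta)$; choosing $\epsilon$ polynomially small (e.g.\ $1/T$) makes this $\asymp S\iota$, which supplies exactly the $\sqrt{S\iota/N}$ factor in the claim. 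Finally, for a general $\bar{V}$, choose $\bar{V}'\in\cN_\epsilon$ with $\|\bar{V}-\bar{V}'\|_\infty\le\epsilon$; by Fact~\ref{fact:exp_lip}, the map $y\mapsto e^{\beta[r+y]}$ is Lipschitz with constant at most $|\beta|e^{|\beta|H}$ on the relevant range, so both $|\widehat{\mu}_N(\bar{V})-\widehat{\mu}_N(\bar{V}')|$ and $|\mu(\bar{V})-\mu(\bar{V}')|$ are $O(|\beta|e^{|\beta|H}\epsilon)$, which is negligible for the chosen $\epsilon$.

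The main obstacle I expect is the data-dependence of $V_{h+1}^k$ at the point of application in Lemma~\ref{lem:G1_G2_bound}: $V_{h+1}^k$ is constructed from the very samples $\{s_{h+1}^{\tau}\}$ that appear in $\widehat{\mu}_N$, which would invalidate any direct martingale argument applied to $\bar{V}=V_{h+1}^k$. The covering argument is precisely what sidesteps this difficulty---it replaces the random $V_{h+1}^k$ by a deterministic neighbor in $\cN_\epsilon$ and pays the union-bound cost $|\cN_\epsilon|$, which is exactly the source of the $\sqrt{S}$ factor in the final bound. One must also verify that $V_{h+1}^k$ indeed lies in $\bar{\cV}_{h+1}$ almost surely, which again follows from the range-thresholding in Line~\ref{line:lsvi_Q_update}.
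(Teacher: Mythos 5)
Your proposal is correct and matches the paper's approach: the paper proves this lemma by a one-line citation to Lemma 12 of Bai et al.\ (2020), whose argument is precisely the Azuma--Hoeffding bound for a fixed function combined with a union bound over visit counts, over $(h,s,a)$, and over a sup-norm $\epsilon$-net of the $S$-dimensional value-function class, with the $\sqrt{S}$ factor arising from $\log|\cN_\epsilon|\asymp S\log(\cdot)$. You also correctly identify that the covering step is what handles the data-dependence of $V_{h+1}^k$, which is the whole point of stating the lemma uniformly over $\bar{\cV}_{h+1}$.
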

\begin{proof}
	The proof follows the same reasoning as \citet[Lemma 12]{bai2020provable}.
\end{proof}
The next few lemmas help control $\sum_{k\in[K]}(\fmap_{h}^{k})^{\top}(\Lambda_{h}^{k})^{-1}\fmap_{h}^{k}$.

\begin{lem}[{\citet[Lemma D.2]{jin2019provably}}]
	\emph{\label{lem:lsvi_quad_Lambda_pilot_bound}  }Let $\{\fmap_{t}\}_{t\ge0}$
	be a bounded sequence in $\real^{d}$ satisfying $\sup_{t\ge0}\norm[\fmap_{t}]{}\le1$.
	Let $\Lambda_{0}\in\real^{d\times d}$ be a positive definite matrix
	with $\lambda_{\min}(\Lambda_{0})\ge1$. For any $t\ge0$, we define
	$\Lambda_{t}\coloneqq\Lambda_{0}+\sum_{i\in[t]}\fmap_{i}\fmap_{i}^{\top}$.
	Then, we have 
	\[
	\log\left[\frac{\det(\Lambda_{t})}{\det(\Lambda_{0})}\right]\le\sum_{i\in[t]}\fmap_{i}^{\top}\Lambda_{i-1}^{-1}\fmap_{i}\le2\log\left[\frac{\det(\Lambda_{t})}{\det(\Lambda_{0})}\right].
	\]
\end{lem}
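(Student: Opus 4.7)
The plan is to establish the two-sided bound via the matrix determinant lemma combined with elementary bounds on $\log(1+x)$. First I would apply the rank-one update identity $\det(\Lambda_{i-1}+\fmap_i\fmap_i^\top) = \det(\Lambda_{i-1})\bigl(1+\fmap_i^\top\Lambda_{i-1}^{-1}\fmap_i\bigr)$ to each step $i\in[t]$. Taking logarithms and summing, the left-hand side telescopes and yields the identity
\[
\log\!\left[\frac{\det(\Lambda_t)}{\det(\Lambda_0)}\right] = \sum_{i\in[t]} \log\!\bigl(1+\fmap_i^\top\Lambda_{i-1}^{-1}\fmap_i\bigr).
\]

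To convert this identity into the two-sided bound stated in the lemma, I would invoke the elementary inequalities $\tfrac{1}{2}x \le \log(1+x) \le x$ valid for all $x\in[0,1]$, and apply them termwise to $x_i \coloneqq \fmap_i^\top\Lambda_{i-1}^{-1}\fmap_i$. The prerequisite is to verify that every $x_i$ lies in $[0,1]$: nonnegativity is immediate from positive definiteness of $\Lambda_{i-1}^{-1}$, and for the upper bound I would use the assumption $\lambda_{\min}(\Lambda_0)\ge1$ together with the monotonicity $\Lambda_{i-1}\succeq\Lambda_0$ (which holds since each summand $\fmap_j\fmap_j^\top$ is positive semidefinite) to conclude $\lambda_{\max}(\Lambda_{i-1}^{-1})\le 1$; combined with $\|\fmap_i\|\le 1$, this gives $x_i\le 1$.

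The argument is deterministic and purely linear-algebraic, so no probabilistic or MDP-specific structure enters. The only mild subtlety is pairing the correct elementary log inequality with the right side of the conclusion so as to secure the factor of $2$ on the right-hand side; the chain $\log(1+x)\ge \tfrac{x}{1+x}\ge \tfrac{x}{2}$ on $x\in[0,1]$ does this cleanly, and the upper bound $\log(1+x)\le x$ (valid for all $x>-1$) yields the other direction.
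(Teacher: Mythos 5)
Your proof is correct, and it is the standard argument behind this result: the paper itself does not prove the lemma but imports it by citation from Jin et al.\ (Lemma D.2), whose proof is exactly the telescoping via the matrix determinant lemma followed by the bounds $\tfrac{x}{2}\le\log(1+x)\le x$ on $[0,1]$, with $x_i=\fmap_i^\top\Lambda_{i-1}^{-1}\fmap_i\le 1$ justified by $\Lambda_{i-1}\succeq\Lambda_0\succeq I$ and $\|\fmap_i\|\le 1$. All steps check out, including the verification that each $x_i\in[0,1]$, which is the one place such arguments sometimes go wrong.
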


\begin{lem}
	\label{lem:lsvi_quad_Lambda_bound}Recall the definitions of $\fmap_{h}^{k}$
	and $\Lambda_{h}^{k}$. For any $h\in[H]$, we have 
	\[
	\sum_{k\in[K]}(\fmap_{h}^{k})^{\top}(\Lambda_{h}^{k})^{-1}\fmap_{h}^{k}\le2d\iota,
	\]
	where $\iota=\log(2dT/\delta)$
\end{lem}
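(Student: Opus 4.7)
The plan is to exploit the very simple structure of $\Lambda_{h}^{k}$ and $\fmap$ in this setting, and then either carry out a direct harmonic-sum bound or reduce to Lemma~\ref{lem:lsvi_quad_Lambda_pilot_bound}. The first observation to make is that $\fmap(s,a) \in \real^{SA}$ is just the canonical basis vector indexed by $(s,a)$, while $\Lambda_{h}^{k}$ is diagonal with $(s,a)$-th entry $\max\{N_{h}^{k-1}(s,a),1\}$. Hence the quadratic form collapses to
\[
(\fmap_{h}^{k})^{\top}(\Lambda_{h}^{k})^{-1}\fmap_{h}^{k} \;=\; \frac{1}{\max\{N_{h}^{k-1}(s_{h}^{k},a_{h}^{k}),1\}},
\]
which is just a reciprocal of a visit count.

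From here I see two natural routes. The direct route is to regroup the sum by state-action pair: for each $(s,a)$ visited $n_{s,a}\coloneqq N_{h}^{K}(s,a)$ times at step $h$, the sequence of terms contributed is $1,1,\tfrac{1}{2},\tfrac{1}{3},\ldots,\tfrac{1}{n_{s,a}-1}$ (the first visit has $N_{h}^{k-1}=0$ which is clipped to $1$, and the second visit has $N_{h}^{k-1}=1$). Bounding this harmonic-like sum by $2+\ln n_{s,a}$, summing over $(s,a)$, and using $n_{s,a}\le K \le T$ together with $|\cS\times\cA|=d$ gives $\sum_{k}(\fmap_{h}^{k})^{\top}(\Lambda_{h}^{k})^{-1}\fmap_{h}^{k}\lesssim d(1+\log T)\le 2d\iota$.

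The second route is to directly invoke Lemma~\ref{lem:lsvi_quad_Lambda_pilot_bound}. Setting $\tilde\Lambda_{h}^{k}\coloneqq I+\sum_{\tau\in[k-1]}\fmap_{h}^{\tau}(\fmap_{h}^{\tau})^{\top}$, a diagonal matrix with $(s,a)$-th entry $1+N_{h}^{k-1}(s,a)$, the elementary inequality $1+N\le 2\max\{N,1\}$ for integer $N\ge0$ yields $\tilde\Lambda_{h}^{k}\preceq 2\Lambda_{h}^{k}$ and hence $(\Lambda_{h}^{k})^{-1}\preceq 2(\tilde\Lambda_{h}^{k})^{-1}$. Since $\|\fmap_{h}^{k}\|=1$ and $\lambda_{\min}(I)=1$, Lemma~\ref{lem:lsvi_quad_Lambda_pilot_bound} with $\Lambda_{0}=I$ then gives
\[
\sum_{k\in[K]}(\fmap_{h}^{k})^{\top}(\Lambda_{h}^{k})^{-1}\fmap_{h}^{k}\;\le\;4\log\frac{\det(\tilde\Lambda_{h}^{K+1})}{\det(I)}.
\]
Because $\tilde\Lambda_{h}^{K+1}$ is diagonal with entries bounded by $1+K\le T$, one has $\det(\tilde\Lambda_{h}^{K+1})\le T^{d}$, hence $\log\det(\tilde\Lambda_{h}^{K+1})\le d\log T\le d\iota$, yielding the stated order.

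There is no real obstacle in this argument; the main care needed is to track the constant factor (the reduction loses a factor of $2$, and the log-determinant lemma loses another factor of $2$, so one must absorb these into $\iota = \log(2dT/\delta)$ rather than $\log T$ to land exactly at $2d\iota$). The direct harmonic-sum approach gives the sharpest constant with the least fuss, which is what I would write up.
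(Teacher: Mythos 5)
Your proposal is correct, and your second route is essentially the paper's own proof: the paper defines $\Gamma_{h}^{k}=I+\sum_{\tau\in[k-1]}\fmap_{h}^{\tau}(\fmap_{h}^{\tau})^{\top}$ (your $\tilde\Lambda_{h}^{k}$) and invokes Lemma~\ref{lem:lsvi_quad_Lambda_pilot_bound} with a determinant bound $\det(\Gamma_{h}^{K+1})\le(1+K)^{d}$. One point in your favor: the paper passes from $\Lambda_{h}^{k}\preceq\Gamma_{h}^{k}$ directly to $(\fmap_{h}^{k})^{\top}(\Lambda_{h}^{k})^{-1}\fmap_{h}^{k}\le(\fmap_{h}^{k})^{\top}(\Gamma_{h}^{k})^{-1}\fmap_{h}^{k}$, which reverses the Loewner ordering (since $1+N\ge\max\{N,1\}$ one actually has $(\Gamma_{h}^{k})^{-1}\preceq(\Lambda_{h}^{k})^{-1}$, and the inequality fails already at $N=1$). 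Your comparison $\Gamma_{h}^{k}\preceq 2\Lambda_{h}^{k}$, hence $(\Lambda_{h}^{k})^{-1}\preceq 2(\Gamma_{h}^{k})^{-1}$, is the correct repair, at the price of a factor of $2$ that must be absorbed into the constant, exactly as you note. Your first route --- collapsing the quadratic form to $1/\max\{N_{h}^{k-1}(s_{h}^{k},a_{h}^{k}),1\}$ and bounding the resulting per-pair harmonic sums by $2+\log K$ --- is not what the paper does, but it is more elementary, avoids the determinant machinery entirely (which is overkill in the tabular setting where $\fmap$ is a canonical basis vector), and yields the cleanest constants; it is a perfectly valid, arguably preferable, write-up.
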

\begin{proof}
	Define $\Gamma_{h}^{k}\coloneqq\lambda\IdMat+\sum_{\tau\in[k-1]}\fmap_{h}^{\tau}(\fmap_{h}^{\tau})^{\top}$
	with $\lambda=1$. It is not hard to see that by the definition of
	$\Lambda_{h}^{k}$ we have $\Lambda_{h}^{k}\preceq\Gamma_{h}^{k}$
	for $h\in[H]$. Since $\lambda_{\min}(\Gamma_{h}^{k})\ge1$ and $\norm[\fmap_{h}^{k}]{}\le1$
	for all $(k,h)\in[K]\times[H]$, by Lemma \ref{lem:lsvi_quad_Lambda_pilot_bound}
	we have for any $h\in[H]$ that 
	\[
	\sum_{k\in[K]}(\fmap_{h}^{k})^{\top}(\Lambda_{h}^{k})^{-1}\fmap_{h}^{k}\le\sum_{k\in[K]}(\fmap_{h}^{k})^{\top}(\Gamma_{h}^{k})^{-1}\fmap_{h}^{k}\le2\log\left[\frac{\det(\Gamma_{h}^{k+1})}{\det(\Gamma_{h}^{1})}\right].
	\]
	Furthermore, note that $\norm[\Gamma_{h}^{k+1}]{}=\norm[\lambda\IdMat+\sum_{\tau\in[k]}\fmap_{h}^{k}(\fmap_{h}^{k})^{\top}]{}\le\lambda+k$.
	This implies 
	\[
	\sum_{k\in[K]}(\fmap_{h}^{k})^{\top}(\Lambda_{h}^{k})^{-1}\fmap_{h}^{k}\le2d\log\left[\frac{\lambda+k}{\lambda}\right]\le2d\iota,
	\]
	as desired.
\end{proof}

\section{Proof of Theorem \ref{thm:regret_LSVI}\label{sec:proof_regret_lsvi}}

Define $\delta_{h}^{k}\coloneqq V_{h}^{k}(s_{h}^{k})-V_{h}^{\pi_{k}}(s_{h}^{k})$,
and $\zeta_{h+1}^{k}\coloneqq\E_{s'\sim P_{h}(\cdot\,|\,s_{h}^{k},a_{h}^{k})}[V_{h+1}^{k}(s')-V_{h+1}^{\pi_{k}}(s')]-\delta_{h+1}^{k}$.
For any $(k,h)\in[K]\times[H]$, we have 
\begin{align}
\delta_{h}^{k} & =(Q_{h}^{k}-Q_{h}^{\pi_{k}})(s_{h}^{k},a_{h}^{k})\nonumber \\
& \le c_{1}\cdot\frac{e^{\left|\beta\right|H}-1}{\left|\beta\right|}\cdot\sqrt{S\iota}\sqrt{\fmap(s_{h}^{k},a_{h}^{k})^{\top}(\Lambda_{h}^{k})^{-1}\fmap(s_{h}^{k},a_{h}^{k})}\nonumber \\
& \quad+e^{\left|\beta\right|H}\cdot\E_{s'\sim P_{h}(\cdot\,|\,s_{h}^{k},a_{h}^{k})}[V_{h+1}^{k}(s')-V_{h+1}^{\pi_{k}}(s')]\nonumber \\
& =c_{1}\cdot\frac{e^{\left|\beta\right|H}-1}{\left|\beta\right|}\cdot\sqrt{S\iota}\sqrt{\fmap(s_{h}^{k},a_{h}^{k})^{\top}(\Lambda_{h}^{k})^{-1}\fmap(s_{h}^{k},a_{h}^{k})}\nonumber \\
& \quad+e^{\left|\beta\right|H}(\delta_{h+1}^{k}+\zeta_{h+1}^{k}).\label{eq:lsvi_delta_recursion}
\end{align}
In the above equation, the first step holds by the construction of
Algorithm \ref{alg:alg_lsvi} and the definition of $V_{h}^{\pi_{k}}$
in Equation \eqref{eq:bellman_primal}; the second step is a consequence
of combining Equation \eqref{eq:lsvi_Q_diff_equiv}  as well as Lemmas
\ref{lem:G1_G2_bound} and \ref{lem:lsvi_V^k_h >=00003D V^pi_h};
the last step follows from the definitions of $\delta_{h}^{k}$
and $\zeta_{h+1}^{k}$. 

Noting that $V_{H+1}^{k}(s)=V_{H+1}^{\pi_{k}}(s)=0$ and the fact
that $\delta_{h+1}^{k}+\zeta_{h+1}^{k}\ge0$ implied by Lemma \ref{lem:lsvi_V^k_h >=00003D V^pi_h},
we can continue by expanding the recursion in Equation \eqref{eq:lsvi_delta_recursion}
and get 
\begin{align}
\delta_{1}^{k} & \le\sum_{h\in[H]}e^{(\left|\beta\right|H)h}\zeta_{h+1}^{k}\nonumber \\
& \quad+c_{1}\cdot\frac{e^{\left|\beta\right|H}-1}{\left|\beta\right|}\cdot\sum_{h\in[H]}e^{(\left|\beta\right|H)(h-1)}\sqrt{S\iota}\sqrt{\fmap(s_{h}^{k},a_{h}^{k})^{\top}(\Lambda_{h}^{k})^{-1}\fmap(s_{h}^{k},a_{h}^{k})}.\label{eq:lsvi_delta1}
\end{align}
Therefore, we have 
\begin{align}
\reg(K) & =\sum_{k\in[K]}\left[(V_{1}^{*}-V_{1}^{\pi_{k}})(s_{1}^{k})\right]\nonumber \\
& \le\sum_{k\in[K]}\delta_{1}^{k}\nonumber \\
& \le e^{\left|\beta\right|H^{2}}\sum_{k\in[K]}\sum_{h\in[H]}\zeta_{h+1}^{k}\nonumber \\
& \quad+c_{1}\cdot\frac{e^{\left|\beta\right|H}-1}{\left|\beta\right|}\cdot e^{\left|\beta\right|H^{2}}\cdot\sqrt{S\iota}\sum_{k\in[K]}\sum_{h\in[H]}\sqrt{\fmap(s_{h}^{k},a_{h}^{k})^{\top}(\Lambda_{h}^{k})^{-1}\fmap(s_{h}^{k},a_{h}^{k})},\label{eq:lsvi_regret_interm}
\end{align}
where the second step holds by Lemma \ref{lem:lsvi_V^k_h >=00003D V^pi_h}
with $\pi$ therein set to the optimal policy, and in the last step
we applied Equation \eqref{eq:lsvi_delta1} along with the Cauchy-Schwarz
inequality. 

We proceed to control the two terms in Equation \eqref{eq:lsvi_regret_interm}.
Since the construction of $V_{h}^{k}$ is independent of the new observation
$s_{h}^{k}$ in episode $k$, we have that $\{\zeta_{h+1}^{k}\}$
is a martingale difference sequence satisfying $\left|\zeta_{h}^{k}\right|\le2H$
for all $(k,h)\in[K]\times[H]$. By the Azuma-Hoeffding inequality,
we have for any $t>0$, 
\[
\P\left(\sum_{k\in[K]}\sum_{h\in[H]}\zeta_{h+1}^{k}\ge t\right)\le\exp\left(-\frac{t^{2}}{2T\cdot H^{2}}\right).
\]
Hence, with probability $1-\delta/2$, there holds 
\begin{equation}
\sum_{k\in[K]}\sum_{h\in[H]}\zeta_{h+1}^{k}\le\sqrt{2TH^{2}\cdot\log(2/\delta)}\le2H\sqrt{T\iota},\label{eq:lsvi_mtg_bound}
\end{equation}
where $\iota=\log(2dT/\delta)$. For the second term in Equation \eqref{eq:lsvi_regret_interm},
we apply Lemma \ref{lem:lsvi_quad_Lambda_bound} and the Cauchy-Schwarz
inequality to obtain 
\begin{align}
& \quad\sum_{k\in[K]}\sum_{h\in[H]}\sqrt{\fmap(s_{h}^{k},a_{h}^{k})^{\top}(\Lambda_{h}^{k})^{-1}\fmap(s_{h}^{k},a_{h}^{k})}\nonumber \\
& \le\sum_{h\in[H]}\sqrt{K}\sqrt{\sum_{k\in[H]}\fmap(s_{h}^{k},a_{h}^{k})^{\top}(\Lambda_{h}^{k})^{-1}\fmap(s_{h}^{k},a_{h}^{k})}\nonumber \\
& \le H\sqrt{2dK\iota}.\label{eq:lsvi_sum_quad_Lambda_bound}
\end{align}
Plugging Equations \eqref{eq:lsvi_mtg_bound} and \eqref{eq:lsvi_sum_quad_Lambda_bound}
back to Equation \eqref{eq:lsvi_regret_interm} yields 
\begin{align*}
\reg(K) & \le e^{\left|\beta\right|H^{2}}\cdot2H\sqrt{T\iota}+c_{1}\cdot\frac{e^{\left|\beta\right|H}-1}{\left|\beta\right|}\cdot e^{\left|\beta\right|H^{2}}\cdot H\sqrt{2dSK\iota^{2}}\\
& \le(c_{1}+2)\cdot\frac{e^{\left|\beta\right|H}-1}{\left|\beta\right|}\cdot e^{\left|\beta\right|H^{2}}\cdot\sqrt{2dHST\iota^{2}},
\end{align*}
where the last step holds since $\frac{e^{\left|\beta\right|H}-1}{\left|\beta\right|}\ge H$.
The proof is completed in view of Fact \ref{fact:exp_factor} and
the identity $d=SA$.

\section{Proof warmup for Theorem \ref{thm:regret_Q_learn}}

Recall the learning rates $\{\alpha_t\}$ defined in Equation \eqref{eq:learn_rate}.
Define the quantities
\begin{equation}
\alpha_{t}^{0}\coloneqq\prod_{j=1}^{t}(1-\alpha_{j}),\qquad\alpha_{t}^{i}\coloneqq\alpha_{i}\prod_{j=i+1}^{t}(1-\alpha_{j})\label{eq:learn_rate_prod}
\end{equation}
for integers $i,t\ge1$. By convention, we set $\alpha_{t}^{0}=1$
and $\sum_{i\in[t]}\alpha_{t}^{i}=0$ if $t=0$, and $\alpha_{t}^{i}=\alpha_{i}$
if $t<i+1$. Define the shorthand $\iota\coloneqq\log(SAT/\delta)$ for $\delta\in(0,1]$.

The following fact describes some key properties of the learning rates $\{\alpha_t\}$.
\begin{fact}
	\label{fact:learn_rate_prop}The following properties hold for $\alpha_{t}^{i}$.
	\begin{enumerate}[label={(\alph*)},ref={\thefact(\alph*)}]
		\item \label{fact:learn_rate_sum_div_sqrti}$\frac{1}{\sqrt{t}}\le\sum_{i\in[t]}\frac{\alpha_{t}^{i}}{\sqrt{i}}\le\frac{2}{\sqrt{t}}$
		for every integer $t\ge1$.
		\item \label{fact:learn_rate_max_L2} $\max_{i\in[t]}\alpha_{t}^{i}\le\frac{2H}{t}$
		and $\sum_{i\in[t]}(\alpha_{t}^{i})^{2}\le\frac{2H}{t}$ for every
		integer $t\ge1$.
		\item \label{fact:learn_rate_infinite_sum}$\sum_{t=i}^{\infty}\alpha_{t}^{i}=1+\frac{1}{H}$
		for every integer $i\ge1$.
		\item \label{fact:learn_rate_binary}$\sum_{i\in[t]}\alpha_{t}^{i}=1$ and
		$\alpha_{t}^{0}=0$ for every integer $t\ge1$, and $\sum_{i\in[t]}\alpha_{t}^{i}=0$
		and $\alpha_{t}^{0}=1$ for $t=0$.
	\end{enumerate}
\end{fact}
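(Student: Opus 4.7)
The plan is to establish all four properties from the explicit closed-form expression of $\alpha_t^i$. Substituting $\alpha_j = \frac{H+1}{H+j}$ and $1-\alpha_j = \frac{j-1}{H+j}$ into the definition \eqref{eq:learn_rate_prod} and telescoping the product, I would first derive, for $t \ge i \ge 1$,
\[
\alpha_t^i \;=\; \frac{H+1}{H+i} \prod_{j=i+1}^{t} \frac{j-1}{H+j} \;=\; (H+1)\,\frac{(t-1)!/(i-1)!}{(H+t)!/(H+i)!},
\]
which makes every subsequent calculation mechanical. In particular, from the ratio $\alpha_t^{i+1}/\alpha_t^i = (H+i)/i \ge 1$, I would note that $\alpha_t^i$ is nondecreasing in $i$ for each fixed $t$, with the maximum attained at $i=t$, where $\alpha_t^t = \alpha_t = (H+1)/(H+t)$.

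For part (d), the factor $j=1$ in $\alpha_t^0 = \prod_{j=1}^t (1-\alpha_j)$ vanishes, giving $\alpha_t^0 = 0$ for all $t\ge 1$; the empty-product case $t=0$ gives $\alpha_0^0 = 1$. The identity $\sum_{i\in[t]} \alpha_t^i = 1$ then follows by a one-line induction on $t$ using the recursion $\alpha_t^i = (1-\alpha_t)\,\alpha_{t-1}^i$ for $i<t$ and $\alpha_t^t = \alpha_t$. Part (b) is then immediate: monotonicity plus (d) yields $\sum_i (\alpha_t^i)^2 \le \max_i \alpha_t^i \cdot \sum_i \alpha_t^i = \alpha_t \le 2H/t$ for $t\ge 1$ and $H\ge 1$, and the max bound is the direct estimate on $\alpha_t$ itself.

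Part (a) I would prove by induction on $t$, using the same recursion $\alpha_t^i = (1-\alpha_t)\alpha_{t-1}^i$ for $i<t$ and $\alpha_t^t = \alpha_t$. The induction step reduces to showing
\[
(1-\alpha_t)\cdot\frac{C}{\sqrt{t-1}} + \frac{\alpha_t}{\sqrt{t}} \;\in\; \left[\frac{1}{\sqrt{t}},\,\frac{2}{\sqrt{t}}\right]
\]
for $C \in \{1,2\}$ with $\alpha_t = (H+1)/(H+t)$, which is a routine algebraic check exploiting $\sqrt{t-1}/\sqrt{t} \ge 1 - 1/t$.

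The most technical piece is part (c), which requires evaluating an infinite sum rather than bounding it. The clean way is to observe that the explicit formula yields a telescoping relation: writing
\[
\alpha_t^i \;=\; \frac{H+1}{H+i}\cdot \frac{i(i+1)\cdots(t-1)}{(H+i+1)\cdots(H+t)},
\]
one can verify that $\alpha_t^i = (H+1)\bigl[f(t-1)-f(t)\bigr]$ for an appropriately chosen $f$ arising from the partial-fraction identity $\frac{1}{(H+t-1)\cdots(H+t+\text{stuff})}$; telescoping gives a finite closed form and taking $t\to\infty$ produces the constant $1 + 1/H$. The main obstacle is finding the right telescoping form cleanly; an alternative, and probably the one I would actually use, is the recursion $\sum_{t=i}^\infty \alpha_t^i = \alpha_i + (1-\alpha_{i+1})\sum_{t=i+1}^\infty \alpha_t^{i+1}$ combined with the ansatz that the sum is a constant $c$ independent of $i$, which forces $c = \alpha_i + (1-\alpha_{i+1})c$, and plugging in the explicit $\alpha$'s yields $c = 1 + 1/H$. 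This algebraic shortcut avoids any delicate infinite-series manipulation.
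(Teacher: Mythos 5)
The paper itself does not prove parts (a)--(c); it simply cites Lemma 4.1 of Jin et al.\ (2018) and disposes of (d) by ``direct calculation,'' so your self-contained derivation is necessarily a different route. Your arguments for (d), (b), and (a) are sound and essentially reproduce the standard proofs behind the cited lemma: the factor $1-\alpha_1=0$ kills $\alpha_t^0$ for $t\ge1$; the recursion $\alpha_t^i=(1-\alpha_t)\alpha_{t-1}^i$ (for $i<t$) with $\alpha_t^t=\alpha_t$ gives $\sum_{i\in[t]}\alpha_t^i=1$ by induction; the ratio $\alpha_t^{i+1}/\alpha_t^i=(H+i)/i\ge1$ correctly identifies $\max_i\alpha_t^i=\alpha_t=(H+1)/(H+t)\le 2H/t$, from which the $\ell_2$ bound follows; and the induction for (a) closes because $2\sqrt{t(t-1)}\le 2t-1\le H+2t-1$.

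Part (c), however, contains a genuine error as written. The recursion you propose, $\sum_{t=i}^{\infty}\alpha_t^i=\alpha_i+(1-\alpha_{i+1})\sum_{t=i+1}^{\infty}\alpha_t^{i+1}$, is incorrect: for $t\ge i+1$ one has $\alpha_t^i=\frac{\alpha_i(1-\alpha_{i+1})}{\alpha_{i+1}}\,\alpha_t^{i+1}$, so the correct coefficient is $\frac{\alpha_i(1-\alpha_{i+1})}{\alpha_{i+1}}=\frac{i}{H+i}$, not $1-\alpha_{i+1}=\frac{i}{H+i+1}$. Solving your equation $c=\alpha_i+(1-\alpha_{i+1})c$ in fact gives $c=\alpha_i/\alpha_{i+1}=(H+i+1)/(H+i)$, which depends on $i$ and is not $1+1/H$; the claim that ``plugging in the explicit $\alpha$'s yields $c=1+1/H$'' therefore does not go through. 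With the corrected coefficient the ansatz does produce $c=(H+1)/H$, but even then the argument is logically incomplete: positing that the sum is a constant and solving for it does not prove the sum equals that constant; one needs a uniqueness step such as $c_i-c=\frac{i}{H+i}(c_{i+1}-c)$ together with a uniform bound on $c_N$ and the fact that $\prod_{j=i}^{N-1}\frac{j}{H+j}\to0$. The cleaner fix is the telescoping route you mention first: setting $P_t\coloneqq\prod_{j=i+1}^{t}(1-\alpha_j)$, one checks $(t+H)P_t-(t+1+H)P_{t+1}=HP_t$, the boundary term $(T+1+H)P_{T+1}$ vanishes as $T\to\infty$ since $P_{T+1}\lesssim T^{-(H+1)}$, hence $\sum_{t\ge i}P_t=(H+i)/H$ and $\sum_{t\ge i}\alpha_t^i=\alpha_i\cdot\frac{H+i}{H}=1+\frac{1}{H}$. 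I would commit to that identity rather than the shortcut.
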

\begin{proof}
	The first three facts can be found in \citet[Lemma 4.1]{jin2018q},
	and the last one follows from direct calculation in view of Equation~\eqref{eq:learn_rate_prod}.
\end{proof}

We also present a lemma that controls the deviation of the exponentiated value function from its expectation.
\begin{lem}
	\label{lem:weighted_sum_mtg_bonus_bound} There exists a universal
	constant $c>0$ such that for any $(k,h,s,a)\in[K]\times[H]\times\cS\times\cA$
	and $k_{1},\ldots,k_{t}<k$ with $t=N_{h}^{k}(s,a)$, we have 
	\begin{align*}
	& \quad\left|\frac{1}{\beta}\sum_{i\in[t]}\alpha_{t}^{i}\left[e^{\beta[r_{h}(s,a)+V_{h+1}^{*}(s_{h+1}^{k_{i}})]}-\E_{s'\sim P_{h}(\cdot\,|\,s,a)}e^{\beta[r_{h}(s,a)+V_{h+1}^{*}(s')]}\right]\right|\\
	& \le\frac{c\left|e^{\beta H}-1\right|}{\left|\beta\right|}\sqrt{\frac{H\iota}{t}}.
	\end{align*}
	with probability at least $1-\delta$, and 
	\[
	\frac{1}{\left|\beta\right|}\sum_{i\in[t]}\alpha_{t}^{i}b_{i}\in\left[\frac{c\left|e^{\beta H}-1\right|}{\left|\beta\right|}\sqrt{\frac{H\iota}{t}},\frac{2c\left|e^{\beta H}-1\right|}{\left|\beta\right|}\sqrt{\frac{H\iota}{t}}\right].
	\]
\end{lem}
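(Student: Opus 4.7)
The lemma contains two claims—a deterministic sandwich on the weighted bonus sum and a concentration inequality for a weighted centered sum—and I handle them in turn. The first is immediate: substituting $b_i=c\,|e^{\beta H}-1|\sqrt{H\iota/i}$ from Line \ref{line:qlearn_bonus_def} factors the bonus sum as $c\,|e^{\beta H}-1|\sqrt{H\iota}\cdot\sum_{i\in[t]}\alpha_t^i/\sqrt{i}$, and Fact \ref{fact:learn_rate_sum_div_sqrti} sandwiches the remaining sum between $1/\sqrt{t}$ and $2/\sqrt{t}$; dividing by $|\beta|$ yields the claimed two-sided bound.

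For the concentration claim, I fix $(s,a,h)$, and for any putative value $\tau\in[T]$ of $t=N_h^k(s,a)$, let $k_1<\cdots<k_\tau$ be the first $\tau$ episodes that visit $(s,a)$ at step $h$. Define
\[
X_i\coloneqq e^{\beta[r_h(s,a)+V^{*}_{h+1}(s_{h+1}^{k_i})]}-\E_{s'\sim P_h(\cdot\,|\,s,a)}\!\left[e^{\beta[r_h(s,a)+V^{*}_{h+1}(s')]}\right],\qquad i\in[\tau].
\]
Crucially, $V^{*}_{h+1}$ is a fixed function of the MDP and does not depend on the algorithm's history. Conditional on the natural filtration $\cF_{i-1}$ at the moment of the $i$-th visit, the next state $s_{h+1}^{k_i}$ is therefore a fresh draw from $P_h(\cdot\,|\,s,a)$, so $\{X_i\}$ forms a martingale-difference sequence with $\E[X_i\mid\cF_{i-1}]=0$, and $|X_i|\le|e^{\beta H}-1|$ follows from $r_h(s,a)+V^{*}_{h+1}(\cdot)\in[0,H]$. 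For fixed $\tau$ the weights $\alpha_\tau^i$ are deterministic, so Azuma–Hoeffding combined with the second-moment bound $\sum_{i\in[\tau]}(\alpha_\tau^i)^2\le 2H/\tau$ from Fact \ref{fact:learn_rate_max_L2} gives $\bigl|\sum_{i\in[\tau]}\alpha_\tau^i X_i\bigr|\lesssim|e^{\beta H}-1|\sqrt{H\iota/\tau}$ with probability at least $1-\delta/(SAHT)$. Union-bounding over $(s,a,h,\tau)\in\cS\times\cA\times[H]\times[T]$ costs only a $\log$ factor that is absorbed into $\iota=\log(SAT/\delta)$, and dividing by $|\beta|$ produces the stated inequality with a universal constant $c$ chosen large enough to match the constant used in the bonus definition.

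\textbf{Main obstacle.} The only subtle point is that $t=N_h^k(s,a)$ and the visit indices $k_1,\ldots,k_t$ are themselves random, so Azuma–Hoeffding cannot be applied directly to the sum as written. The standard remedy, used in \citet{jin2018q}, is to index time by visit count rather than by episode and union-bound over all possible values of $\tau$, which requires carefully checking that $\{X_i\}$ remains a martingale-difference sequence under the visit-count filtration. The structural fact that makes this work is the determinism of $V^{*}_{h+1}$: conditioning on the full history up to the $(i-1)$-th visit leaves the conditional distribution of $e^{\beta[r_h(s,a)+V^{*}_{h+1}(s_{h+1}^{k_i})]}$ unchanged, which is precisely what is needed for the centered sum to have the martingale-difference structure.
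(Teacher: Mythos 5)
Your proposal is correct and follows essentially the same route as the paper: both claims are handled identically — the bonus sandwich via Fact \ref{fact:learn_rate_sum_div_sqrti}, and the concentration via an Azuma--Hoeffding bound on the visit-indexed martingale-difference sequence (using $|X_i|\le|e^{\beta H}-1|$ and Fact \ref{fact:learn_rate_max_L2}) with a union bound over the possible values of $\tau$ and over $(h,s,a)$. The paper implements the "random stopping count" fix with an indicator $\indic(k_i\le K)$ rather than your first-$\tau$-visits skeleton, but these are interchangeable, and you correctly identify the determinism of $V^{*}_{h+1}$ as the reason no uniform-covering argument is needed here.
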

\begin{proof}
	For any $(k,h,s,a)\in[K]\times[H]\times\cS\times\cA$, define 
	\[
	\psi(i,k,h,s,a)\coloneqq e^{\beta[r_{h}(s,a)+V_{h+1}^{*}(s_{h+1}^{k_{i}})]}-\E_{s'\sim P_{h}(\cdot\,|\,s,a)}e^{\beta[r_{h}(s,a)+V_{h+1}^{*}(s')]}
	\]
	Let us fix a tuple $(k,h,s,a)\in[K]\times[H]\times\cS\times\cA$.
	It can be seen that $\{\indic(k_{i}\le K)\cdot\psi(i,k,h,s,a)\}_{i\in[\tau]}$
	for $\tau\in[K]$ is a martingale difference sequence. By the Azuma-Hoeffding
	inequality and a union bound over $\tau\in[K]$, we have with probability
	at least $1-\delta/(HSA)$, for all $\tau\in[K]$, 
	\begin{align*}
	& \left|\sum_{i\in[\tau]}\alpha_{\tau}^{i}\cdot\indic(k_{i}\le K)\cdot\psi(i,k,h,s,a)\right|\\
	& \le\frac{c\left|e^{\beta H}-1\right|}{2}\sqrt{\iota\sum_{i\in[\tau]}(\alpha_{\tau}^{i})^{2}}\le c\left|e^{\beta H}-1\right|\sqrt{\frac{H\iota}{\tau}}
	\end{align*}
	where $c>0$ is some universal constant, the first step holds since
	$r_{h}(s,a)+V_{h+1}^{*}(s')\in[0,H]$ for $s'\in\cS$, and the last
	step follows from Fact \ref{fact:learn_rate_max_L2}. Since the above
	equation holds for all $\tau\in[K]$, it also holds for $\tau=t=N_{h}^{k}(s,a)\le K$.
	Note that $\indic(k_{i}\le K)=1$ for all $i\in[N_{h}^{k}(s,a)]$.
	Therefore, applying another union bound over $(h,s,a)\in[H]\times\cS\times\cA$,
	we have that the following holds for all $(k,h,s,a)\in[K]\times[H]\times\cS\times\cA$
	and with probability at least $1-\delta$:
	\begin{align}
	\left|\sum_{i\in[t]}\alpha_{\tau}^{i}\cdot\psi(i,k,h,s,a)\right| & \le c\left|e^{\beta H}-1\right|\sqrt{\frac{H\iota}{t}},\label{eq:mds_weighted_bound}
	\end{align}
	where $t=N_{h}^{k}(s,a)$. Using the fact that $r_{h}+V_{h+1}^{*}\in[0,H]$,
	we have 
	\begin{align*}
	& \quad\left|\frac{1}{\beta}\sum_{i\in[t]}\alpha_{t}^{i}\left[\E_{s'\sim\hat{P}_{h}^{k_{i}}(\cdot\,|\,s,a)}e^{\beta[r_{h}(s,a)+V_{h+1}^{*}(s')]}-\E_{s'\sim P_{h}(\cdot\,|\,s,a)}e^{\beta[r_{h}(s,a)+V_{h+1}^{*}(s')]}\right]\right|\\
	& =\left|\frac{1}{\beta}\sum_{i\in[t]}\alpha_{t}^{i}\cdot\psi(i,k,h,s,a)\right|\le\frac{c\left|e^{\beta H}-1\right|}{\left|\beta\right|}\sqrt{\frac{H\iota}{t}}.
	\end{align*}
	
	To prove the result for $\frac{1}{\left|\beta\right|}\sum_{i\in[t]}\alpha_{t}^{i}b_{i}$,
	we recall the definition of $\{b_{t}\}$ in Line \ref{line:qlearn_bonus_def}
	of Algorithm \ref{alg:Q_learn} and compute 
	\begin{align*}
	\frac{1}{\left|\beta\right|}\sum_{i\in[t]}\alpha_{t}^{i}b_{i} & =\frac{c\left|e^{\beta H}-1\right|}{\left|\beta\right|}\sum_{i\in[t]}\alpha_{t}^{i}\sqrt{\frac{H\iota}{i}}\\
	& \in\left[\frac{c\left|e^{\beta H}-1\right|}{\left|\beta\right|}\sqrt{\frac{H\iota}{t}},\frac{2c\left|e^{\beta H}-1\right|}{\left|\beta\right|}\sqrt{\frac{H\iota}{t}}\right]
	\end{align*}
	where the last step holds by Fact \ref{fact:learn_rate_sum_div_sqrti}.
\end{proof}

We fix a tuple $(k,h,s,a)\in[K]\times[H]\times\cS\times\cA$ with
$k_{i}\le k$ being the episode in which $(s,a)$ is taken the $i$-th
time at step $h$. Let us define 
\begin{align*}
q_{1}^{+} & \coloneqq\begin{cases}
\alpha_{t}^{0}e^{\beta(H-h+1)}+\sum_{i\in[t]}\alpha_{t}^{i}\left[e^{\beta[r_{h}(s,a)+V_{h+1}^{k_{i}}(s_{h+1}^{k_{i}})]}+b_{i}\right], & \text{if }\beta>0,\\
\alpha_{t}^{0}e^{\beta(H-h+1)}+\sum_{i\in[t]}\alpha_{t}^{i}\left[e^{\beta[r_{h}(s,a)+V_{h+1}^{k_{i}}(s_{h+1}^{k_{i}})]}-b_{i}\right], & \text{if }\beta<0,
\end{cases}\\
q_{1} & \coloneqq\begin{cases}
\min\{e^{\beta(H-h+1)},q_{1}^{+}\}, & \text{if }\beta>0,\\
\max\{e^{\beta(H-h+1)},q_{1}^{+}\}, & \text{if }\beta<0,
\end{cases}
\end{align*}
and 
\begin{align*}
q_{2}^{+} & \coloneqq\begin{cases}
\alpha_{t}^{0}e^{\beta(H-h+1)}+\sum_{i\in[t]}\alpha_{t}^{i}\left[e^{\beta[r_{h}(s,a)+V_{h+1}^{*}(s_{h+1}^{k_{i}})]}+b_{i}\right], & \text{if }\beta>0,\\
\alpha_{t}^{0}e^{\beta(H-h+1)}+\sum_{i\in[t]}\alpha_{t}^{i}\left[e^{\beta[r_{h}(s,a)+V_{h+1}^{*}(s_{h+1}^{k_{i}})]}-b_{i}\right], & \text{if }\beta<0,
\end{cases}\\
q_{2} & \coloneqq\begin{cases}
\min\{e^{\beta(H-h+1)},q_{2}^{+}\}, & \text{if }\beta>0,\\
\max\{e^{\beta(H-h+1)},q_{2}^{+}\}, & \text{if }\beta<0,
\end{cases}\\
q_{2}' & \coloneqq\alpha_{t}^{0}e^{\beta(H-h+1)}+\sum_{i\in[t]}\alpha_{t}^{i}\left[e^{\beta[r_{h}(s,a)+V_{h+1}^{*}(s_{h+1}^{k_{i}})]}\right],
\end{align*}
and 
\[
q_{3}\coloneqq\alpha_{t}^{0}e^{\beta\cdot Q_{h}^{*}(s,a)}+\sum_{i\in[t]}\alpha_{t}^{i}\left[\E_{s'\sim P_{h}(\cdot\,|\,s,a)}e^{\beta[r_{h}(s,a)+V_{h+1}^{*}(s')]}\right].
\]

We have a simple fact on $q_{2}$ and $q_{2}'$. 
\begin{fact}
	\label{fact:q_2_prime}If $\beta>0$, we have $q_{2}'\le q_{2}$;
	if $\beta<0$, we have $q_{2}'\ge q_{2}$.
\end{fact}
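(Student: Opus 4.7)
\textbf{Proof plan for Fact \ref{fact:q_2_prime}.} The plan is to compare $q_2'$ directly with the two quantities inside the $\min$ or $\max$ that defines $q_2$, namely $e^{\beta(H-h+1)}$ and $q_2^+$. The key observations are that (i) by Fact \ref{fact:learn_rate_binary} the weights $\alpha_t^0, \alpha_t^1, \dots, \alpha_t^t$ always form a convex combination, i.e.\ $\alpha_t^0 + \sum_{i\in[t]}\alpha_t^i = 1$; (ii) by Lemma \ref{lem:simple_bound} and the reward bound, $r_h(s,a) + V_{h+1}^*(s_{h+1}^{k_i}) \in [0, H-h+1]$; and (iii) the bonuses $b_i$ in Line \ref{line:qlearn_bonus_def} are nonnegative.

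First consider $\beta>0$. From the definition of $q_2^+$ and nonnegativity of $b_i$, we immediately get $q_2' \le q_2^+$. For the other comparison, since $r_h + V_{h+1}^*(s_{h+1}^{k_i}) \le H-h+1$ and $\beta > 0$, the exponential $u \mapsto e^{\beta u}$ is increasing, so each term $e^{\beta[r_h + V_{h+1}^*(s_{h+1}^{k_i})]} \le e^{\beta(H-h+1)}$. Since $q_2'$ is a convex combination of $e^{\beta(H-h+1)}$ (with weight $\alpha_t^0$) and these terms (with weights $\alpha_t^i$), convexity gives $q_2' \le e^{\beta(H-h+1)}$. Combining the two inequalities yields $q_2' \le \min\{e^{\beta(H-h+1)}, q_2^+\} = q_2$.

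Next consider $\beta<0$. Again by nonnegativity of $b_i$, the definition of $q_2^+$ now gives $q_2' \ge q_2^+$. Since $r_h + V_{h+1}^*(s_{h+1}^{k_i}) \le H-h+1$ and $\beta < 0$, the exponential $u \mapsto e^{\beta u}$ is decreasing, so $e^{\beta[r_h + V_{h+1}^*(s_{h+1}^{k_i})]} \ge e^{\beta(H-h+1)}$; taking the same convex combination as above yields $q_2' \ge e^{\beta(H-h+1)}$. Hence $q_2' \ge \max\{e^{\beta(H-h+1)}, q_2^+\} = q_2$.

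The argument is essentially a convexity/monotonicity check, so there is no real obstacle; the only thing to be careful about is distinguishing the $t=0$ case (where $\alpha_t^0 = 1$, $\sum_i \alpha_t^i = 0$, and $q_2' = e^{\beta(H-h+1)}$ trivially) from the $t\ge 1$ case (where $\alpha_t^0 = 0$ and $\sum_i\alpha_t^i = 1$), but Fact \ref{fact:learn_rate_binary} handles both uniformly as a convex combination.
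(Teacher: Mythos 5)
Your proof is correct and follows essentially the same route as the paper's: both arguments combine the convex-combination property of the weights $\alpha_t^0,\alpha_t^i$ (Fact \ref{fact:learn_rate_binary}), the bound $r_h+V_{h+1}^*\in[0,H-h+1]$, and the nonnegativity of the bonuses to compare $q_2'$ with both $e^{\beta(H-h+1)}$ and $q_2^+$ separately before taking the $\min$/$\max$. Your write-up is if anything slightly more careful, since you spell out the $\beta<0$ case (which the paper omits as symmetric) and state the threshold as $e^{\beta(H-h+1)}$ throughout.
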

\begin{proof}
	We focus on the case of $\beta>0$. Note that $r_{h}(s,a)+V_{h+1}^{*}(s_{h+1}^{k_{i}})\in[0,H-h+1]$,
	which implies $e^{\beta[r_{h}(s,a)+V_{h+1}^{*}(s_{h+1}^{k_{i}})]}\le e^{\beta(H-h+1)}$.
	We also have $\alpha_{t}^{0},\sum_{i\in[t]}\alpha_{t}^{i}\in\{0,1\}$
	with $\alpha_{t}^{0}+\sum_{i\in[t]}\alpha_{t}^{i}=1$ by Fact \ref{fact:learn_rate_binary}.
	These together imply that $q_{2}'\le e^{\beta H}$ and $q_{2}'-q_{2}^{+}=-\sum_{i\in[t]}\alpha_{t}^{i}b_{i}\le0$
	by definition of $b_{i}$ in Line \ref{line:qlearn_bonus_def} of
	Algorithm \ref{alg:Q_learn}. Therefore, $q_{2}'\le\min\{e^{\beta(H-h+1)},q_{2}^{+}\}=q_{2}$.
	The case of $\beta<0$ can be proved in a similar way and thus omitted.
\end{proof}
Next, we establish a representation of the performance difference
$(Q_{h}^{k}-Q_{h}^{*})(s,a)$ using the quantities $q_{1}$ and $q_{3}$.
\begin{lem}
	\label{lem:recursion_Q} \emph{For any $(k,h,s,a)\in[K]\times[H]\times\cS\times\cA$,
		let $t=N_{h}^{k}(s,a)$ and suppose $(s,a)$ was previously taken
		at step $h$ of episodes $k_{1},\ldots,k_{t}<k$. We have 
		\[
		(Q_{h}^{k}-Q_{h}^{*})(s,a)=\frac{1}{\beta}\log\{q_{1}\}-\frac{1}{\beta}\log\{q_{3}\}.
		\]
	}
\end{lem}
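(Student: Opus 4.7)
The plan is to split the identity into the two separate equalities $\frac{1}{\beta}\log q_1 = Q_h^k(s,a)$ and $\frac{1}{\beta}\log q_3 = Q_h^*(s,a)$, and verify each by a direct computation from the definitions: the latter by invoking the Bellman optimality equation \eqref{eq:bellman_optimal_primal}, and the former by unrolling the Q-learning update from Algorithm \ref{alg:Q_learn}. Throughout, the properties of the learning-rate weights $\{\alpha_t^i\}$ collected in Fact \ref{fact:learn_rate_prop}, in particular the binary behavior of $\alpha_t^0$ and $\sum_{i\in[t]}\alpha_t^i$ recorded in Fact \ref{fact:learn_rate_binary}, play a central role.

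I would first handle $q_3$. If $t = 0$, Fact \ref{fact:learn_rate_binary} gives $\alpha_0^0 = 1$ with an empty sum, so the definition of $q_3$ collapses to $e^{\beta\cdot Q_h^*(s,a)}$. If $t\ge 1$, the same fact gives $\alpha_t^0 = 0$ and $\sum_{i\in[t]}\alpha_t^i = 1$, whence $q_3 = \E_{s'\sim P_h(\cdot\,|\,s,a)}\, e^{\beta[r_h(s,a) + V_{h+1}^*(s')]}$, which again equals $e^{\beta\cdot Q_h^*(s,a)}$ by \eqref{eq:bellman_optimal_primal}. Either way, $\frac{1}{\beta}\log q_3 = Q_h^*(s,a)$.

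For $q_1$, the plan is to induct on the visit count $t$. The base case $t = 0$ is immediate: the initial value $Q_h^k(s,a) = H - h + 1$ matches $q_1 = e^{\beta(H-h+1)}$ obtained from the same binary behavior of $\alpha_0^0$ together with the empty sum. For the inductive step, I would peel off one iteration of the update from Lines \ref{line:qlearn_interm_value}--\ref{line:qlearn_Q_update} of Algorithm \ref{alg:Q_learn}, and combine it with the inductive hypothesis using the algebraic relations $\alpha_t^i = (1-\alpha_t)\alpha_{t-1}^i$ for $0 \le i < t$, $\alpha_t^t = \alpha_t$, and $\alpha_t^0 = (1-\alpha_t)\alpha_{t-1}^0$. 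This produces the one-step recursion
\[
q_1^{(t)+} = (1-\alpha_t)\, q_1^{(t-1)+} + \alpha_t\bigl[e^{\beta[r_h(s,a)+V_{h+1}^{k_t}(s_{h+1}^{k_t})]}\pm b_t\bigr],
\]
after which the outer min/max with $e^{\beta(H-h+1)}$ present in Line \ref{line:qlearn_Q_update} produces the claimed form of $q_1^{(t)}$.

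The main obstacle I anticipate is reconciling the per-step thresholding embedded in the algorithm with the single outer thresholding in the definition of $q_1$. The update applies $\min\{e^{\beta(H-h+1)},\cdot\}$ (or $\max$ for $\beta<0$) at every visit, while $q_1$ clips only the fully unrolled sum $q_1^+$. Carrying the induction through cleanly therefore requires showing that these two clipping conventions yield the same value of $e^{\beta Q_h^k(s,a)}$; I expect this to reduce to a short case analysis on whether the incremental argument exceeds the threshold, leveraging that $Q_h^k(s,a)$ stays within $[0,H-h+1]$ so that the exponentiated estimates never leave the interval $[\min\{1,e^{\beta(H-h+1)}\},\max\{1,e^{\beta(H-h+1)}\}]$ on which the monotonicity of $\log$ and $\exp$ can be invoked.
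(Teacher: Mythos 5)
Your decomposition is exactly the one the paper uses: it also splits the identity into $\frac{1}{\beta}\log\{q_{3}\}=Q_{h}^{*}(s,a)$, proved via the Bellman optimality equation \eqref{eq:bellman_optimal_primal} combined with Fact \ref{fact:learn_rate_binary} precisely as you describe (including the $t=0$ versus $t\ge1$ dichotomy), and $\frac{1}{\beta}\log\{q_{1}\}=Q_{h}^{k}(s,a)$. The difference is that the paper dispatches the second identity in a single sentence---it ``follows from Line \ref{line:qlearn_Q_update} of Algorithm \ref{alg:Q_learn}''---whereas you propose to actually carry out the unrolling by induction on the visit count using $\alpha_{t}^{i}=(1-\alpha_{t})\alpha_{t-1}^{i}$ and $\alpha_{t}^{t}=\alpha_{t}$. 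That part of your plan is sound and is more careful than the paper.

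However, the obstacle you flag at the end is real, and your anticipated resolution does not go through: the per-visit thresholding in Line \ref{line:qlearn_Q_update} and the single outer thresholding in the definition of $q_{1}$ are \emph{not} equivalent, so no case analysis will recover the claimed equality. Concretely, take $\beta>0$, write $M=e^{\beta(H-h+1)}$ and $u_{i}=e^{\beta[r_{h}(s,a)+V_{h+1}^{k_{i}}(s_{h+1}^{k_{i}})]}+b_{i}$. Since $\alpha_{1}=1$, after the first visit the algorithm stores $\min\{M,u_{1}\}$; if $u_{1}=M+\epsilon$ with $\epsilon$ small (quite plausible, as $b_{1}$ is a large constant), it stores $M$, and after the second visit it outputs $\min\{M,(1-\alpha_{2})M+\alpha_{2}u_{2}\}=(1-\alpha_{2})M+\alpha_{2}u_{2}$ whenever $u_{2}<M$. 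Meanwhile $q_{1}=\min\{M,(1-\alpha_{2})u_{1}+\alpha_{2}u_{2}\}=(1-\alpha_{2})u_{1}+\alpha_{2}u_{2}$ for $\epsilon$ small enough, which exceeds the algorithm's value by $(1-\alpha_{2})\epsilon>0$. What \emph{does} survive is a one-sided statement: since $x\mapsto\min\{M,(1-\alpha_{t})x+\alpha_{t}u_{t}\}$ is monotone and dominated by the unclipped map, the algorithm's exponentiated iterate is at most $q_{1}$ for $\beta>0$ (and at least $q_{1}$ for $\beta<0$), hence $Q_{h}^{k}(s,a)\le\frac{1}{\beta}\log\{q_{1}\}$ in both cases. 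To be clear, this is a gap in the paper's own one-line justification as much as in your proposal---the subsequent Lemma \ref{lem:bound_Q^k-Q^*} uses both directions of the identity---and the standard repair (as in the risk-neutral analysis of Jin et al.\ that this algorithm is modeled on) is to apply the truncation once to $V_{h}$ rather than to $Q_{h}$ at every visit, after which your induction closes with no clipping to reconcile.
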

\begin{proof}
	The Bellman optimality equation \eqref{eq:bellman_optimal_primal}
	implies 
	\[
	e^{\beta\cdot Q_{h}^{*}(s,a)}=e^{\beta\cdot r_{h}(s,a)}\left[\E_{s'\sim P_{h}(\cdot\,|\,s,a)}e^{\beta\cdot V_{h+1}^{*}(s')}\right].
	\]
	By Fact \ref{fact:learn_rate_binary}, we have 
	\begin{align*}
	e^{\beta\cdot Q_{h}^{*}(s,a)} & =\alpha_{t}^{0}e^{\beta\cdot Q_{h}^{*}(s,a)}+\sum_{i\in[t]}\alpha_{t}^{i}e^{\beta\cdot r_{h}(s,a)}\left[\E_{s'\sim P_{h}(\cdot\,|\,s,a)}e^{\beta\cdot V_{h+1}^{*}(s')}\right]=q_{3}
	\end{align*}
	for each integer $t\ge0$, and therefore 
	\begin{align}
	Q_{h}^{*}(s,a) & =\frac{1}{\beta}\log\left\{ q_{3}\right\} .\label{eq:Q^*_h(s,a)_rewrite}
	\end{align}
	We finish the proof by combining Equation \eqref{eq:Q^*_h(s,a)_rewrite}
	and the fact that $Q_{h}^{k}(s,a)=\frac{1}{\beta}\log\{q_{1}\}$,
	which follows from Line \ref{line:qlearn_Q_update} of Algorithm \ref{alg:Q_learn}.
\end{proof}
We define the quantities 
\begin{equation}
\begin{aligned}G_{1} & \coloneqq\frac{1}{\beta}\log\{q_{1}\}-\frac{1}{\beta}\log\{q_{2}\},\\
G_{2} & \coloneqq\frac{1}{\beta}\log\{q_{2}\}-\frac{1}{\beta}\log\{q_{3}\},
\end{aligned}
\label{eq:Q_G1_G2_def}
\end{equation}
It is not hard to see that $(Q_{h}^{k}-Q_{h}^{*})(s,a)=G_{1}+G_{2}$
by Lemma \ref{lem:recursion_Q}. The next lemma establishes upper
and lower bounds for $(Q_{h}^{k}-Q_{h}^{*})(s,a)$.

\begin{lem}
	\label{lem:bound_Q^k-Q^*}For all $(k,h,s,a)\in[K]\times[H]\times\cS\times\cA$
	such that $t=N_{h}^{k}(s,a)\ge1$, let 
	\[
	\gamma_{t}\coloneqq2\sum_{i\in[t]}\alpha_{t}^{i}b_{i}\cdot\begin{cases}
	\frac{1}{\left|\beta\right|}, & \text{if }\beta>0,\\
	\frac{e^{-\beta H}}{\left|\beta\right|}, & \text{if }\beta<0,
	\end{cases}
	\]
	and with probability at least $1-\delta$ we have 
	\begin{align*}
	0\le(Q_{h}^{k}-Q_{h}^{*})(s,a) & \le\alpha_{t}^{0}He^{\left|\beta\right|H}+\sum_{i\in[t]}\alpha_{t}^{i}e^{\left|\beta\right|H}\left[V_{h+1}^{k_{i}}(s_{h+1}^{k_{i}})-V_{h+1}^{*}(s_{h+1}^{k_{i}})\right]+2\gamma_{t},
	\end{align*}
	where $k_{1},\ldots,k_{t}<k$ are the episodes in which $(s,a)$ was
	taken at step $h$, and $\gamma_{t}\le\frac{4c(e^{\left|\beta\right|H}-1)}{\left|\beta\right|}\sqrt{\frac{H\iota}{t}}$.
\end{lem}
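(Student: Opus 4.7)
The plan is to combine Lemma~\ref{lem:recursion_Q}'s decomposition $(Q_h^k - Q_h^*)(s,a) = G_1 + G_2$ with a backward induction on $h$. The induction hypothesis I carry is that $V_{h+1}^{k_i}(s) \ge V_{h+1}^*(s)$ for all $s \in \cS$ and all relevant $i$; the base case at step $H+1$ is trivial from $V_{H+1} \equiv 0$. Since $t \ge 1$, Fact~\ref{fact:learn_rate_binary} gives $\alpha_t^0 = 0$, so the $\alpha_t^0 H e^{|\beta|H}$ term in the claimed bound vanishes here (it is kept in the statement for uniformity with future induction uses where it may be needed).

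For the concentration contribution $G_2$, I compare $q_2$ with $q_3$ through the intermediate $q_2'$ introduced in the warmup. By construction, $q_2^+ - q_2' = \pm \sum_{i \in [t]} \alpha_t^i b_i$ with a sign determined by $\beta$, while $q_2' - q_3 = \sum_{i \in [t]} \alpha_t^i \bigl[ e^{\beta[r_h(s,a)+V_{h+1}^*(s_{h+1}^{k_i})]} - \E_{s' \sim P_h(\cdot|s,a)} e^{\beta[r_h(s,a) + V_{h+1}^*(s')]} \bigr]$ (the initial term of $q_3$ drops out since $\alpha_t^0 = 0$). On the high-probability event of Lemma~\ref{lem:weighted_sum_mtg_bonus_bound}, the weighted-bonus sum strictly dominates this martingale, which together with Fact~\ref{fact:q_2_prime} yields $q_2 \ge q_3$ for $\beta > 0$ and $q_2 \le q_3$ for $\beta < 0$, i.e.\ $G_2 \ge 0$ in both regimes. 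To upper-bound $G_2$, I observe that $q_1, q_2, q_3$ each lie in the interval with endpoints $1$ and $e^{\beta(H-h+1)}$ (each is either a convex combination of quantities in this interval or explicitly thresholded back into it), and apply Fact~\ref{fact:log_lip} with log-Lipschitz constant $g = 1$ when $\beta > 0$ and $g = e^{\beta H}$ when $\beta < 0$. This produces exactly the two cases of $\gamma_t$.

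For the propagation term $G_1$, the quantities $q_1$ and $q_2$ differ only in $V_{h+1}^{k_i}$ versus $V_{h+1}^*$ inside the exponentials. The induction hypothesis forces $q_1 \ge q_2$ (resp.\ $q_1 \le q_2$) when $\beta > 0$ (resp.\ $\beta < 0$), so in both regimes $G_1 \ge 0$, closing the induction on $(Q_h^k - Q_h^*)(s,a) \ge 0$. For the upper bound I apply Fact~\ref{fact:log_lip} with the same $g$ as above, and then apply Fact~\ref{fact:exp_lip} termwise with $b = \beta$ to convert $e^{\beta[r + V^{k_i}]} - e^{\beta[r + V^*]}$ into $(V_{h+1}^{k_i} - V_{h+1}^*)$ scaled by $|\beta| e^{|\beta|H}$ when $\beta > 0$ or by $|\beta|$ when $\beta < 0$. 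Multiplying the log- and exp-Lipschitz constants yields the uniform factor $e^{|\beta|H}$ on the $V$-difference term in both regimes. Summing the bounds on $G_1$ and $G_2$ gives the advertised upper bound, and the explicit estimate $\gamma_t \le \frac{4c(e^{|\beta|H}-1)}{|\beta|}\sqrt{H\iota/t}$ follows from the bound on $\sum_i \alpha_t^i b_i$ recorded in Lemma~\ref{lem:weighted_sum_mtg_bonus_bound}.

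The main obstacle is the sign bookkeeping for $\beta$: determining whether to use $\min$ or $\max$ in defining $q_1, q_2$, whether the $q_i$ lie in $[1, e^{\beta(H-h+1)}]$ or the reversed interval, and consequently whether the log-Lipschitz constant is $1$ or $e^{\beta H}$. This asymmetry between the two regimes is the very source of the extra factor $e^{-\beta H}$ appearing in $\gamma_t$ only when $\beta < 0$. A secondary delicate point is verifying that the weighted bonus $\frac{1}{|\beta|}\sum_i \alpha_t^i b_i$ uniformly dominates the weighted martingale error across all $(k,h,s,a)$; this is precisely what Lemma~\ref{lem:weighted_sum_mtg_bonus_bound} guarantees and is what pins down the high-probability event on which the entire two-sided bound holds.
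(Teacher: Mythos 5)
Your proposal is correct and follows essentially the same route as the paper's proof: the same decomposition $(Q_h^k-Q_h^*)(s,a)=G_1+G_2$ from Lemma~\ref{lem:recursion_Q}, an induction establishing $V_{h+1}^{k_i}\ge V_{h+1}^*$ to get the sign of $G_1$ (the paper phrases it as strong induction on $k$ and $h$, which is equivalent to your backward induction on $h$ quantified over all episodes), the event of Lemma~\ref{lem:weighted_sum_mtg_bonus_bound} together with Fact~\ref{fact:q_2_prime} for $G_2$, and the combined log-/exp-Lipschitz constants from Facts~\ref{fact:log_lip} and~\ref{fact:exp_lip} with the same $\beta$-dependent choices of $g$. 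Your observation that $\alpha_t^0=0$ for $t\ge1$ is a harmless refinement the paper does not exploit, and the case $N_{h+1}^{k_i}(s,a)=0$ in the induction is trivially covered by the initialization $Q_{h+1}^{k_i}(s,a)=H-h$.
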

\begin{proof}
	We prove the lower bound for $(Q_{h}^{k}-Q_{h}^{*})(s,a)$ and then
	use it to prove the upper bound.
	
	\vspace{1em}
	
	\noindent\textbf{Lower bound for $Q^{k}-Q^{*}$.}
	
	\noindent For the purpose of the proof, we set $Q_{H+1}^{k}(s,a)=Q_{H+1}^{*}(s,a)=0$
	for all $(k,s,a)\in[K]\times\cS\times\cA$. We fix a $(s,a)\in\cS\times\cA$
	and use strong induction on $k$ and $h$. Without loss of generality,
	we assume that there exists a $(k,h)$ such that $(s,a)=(s_{h}^{k},a_{h}^{k})$
	(that is, $(s,a)$ has been taken at some point in Algorithm \ref{alg:Q_learn}),
	since otherwise $Q_{h}^{k}(s,a)=H-h+1\ge Q_{h}^{*}(s,a)$ for all
	$(k,h)\in[K]\times[H]$ and we are done. The base case for $k=1$
	and $h=H+1$ is satisfied since $(Q_{H+1}^{k'}-Q_{H+1}^{*})(s,a)=0$
	for $k'\in[K]$ by definition. We fix a $(k,h)\in[K]\times[H]$ and
	assume that $0\le(Q_{h+1}^{k_{i}}-Q_{h+1}^{*})(s,a)$ for each $k_{1},\ldots,k_{t}<k$
	(here $t=N_{h}^{k}(s,a)$). Then we have for $i\in[t]$ that 
	\[
	V_{h+1}^{k_{i}}(s)=\max_{a'\in\cA}Q_{h+1}^{k_{i}}(s,a')\ge\max_{a'\in\cA}Q_{h+1}^{*}(s,a')=V_{h+1}^{*}(s).
	\]
	Recall the quantities $G_{1}$ and $G_{2}$ defined in Equation \eqref{eq:Q_G1_G2_def}.
	The above equation implies $G_{1}\ge0$. We also have $G_{2}\ge0$
	by the fact $Q_{h}^{*}(s,a)\le H$ and on the event of Lemma \ref{lem:weighted_sum_mtg_bonus_bound}.
	Therefore, it follows that $(Q_{h}^{k}-Q_{h}^{*})(s,a)=G_{1}+G_{2}\ge0$.
	The induction is completed and we have proved that $0\le(Q_{h}^{k}-Q_{h}^{*})(s,a)$
	for all $(k,h,s,a)\in[K]\times[H]\times\cS\times\cA$. 
	
	\vspace{1em}
	
	\noindent\textbf{Upper bound for $Q^{k}-Q^{*}$.}
	
	\noindent Let us fix a $(k,h,s,a)\in[K]\times[H]\times\cS\times\cA$.
	Since $0\le(Q_{h}^{k}-Q_{h}^{*})(s,a)$, we have for $i\in[t]$ that
	\[
	V_{h+1}^{k_{i}}(s)=\max_{a'\in\cA}Q_{h+1}^{k_{i}}(s,a')\ge\max_{a'\in\cA}Q_{h+1}^{*}(s,a')=V_{h+1}^{*}(s).
	\]
	
	\textbf{Case $\beta>0$.} We have 
	\begin{align*}
	G_{1} & =\frac{1}{\beta}\log\{q_{1}\}-\frac{1}{\beta}\log\{q_{2}\}\\
	& \le\frac{1}{\beta}(q_{1}-q_{2})\\
	& \le\frac{1}{\beta}(q_{1}^{+}-q_{2}')\\
	& \le\frac{1}{\beta}\sum_{i\in[t]}\alpha_{t}^{i}\left[e^{\beta[r_{h}(s,a)+V_{h+1}^{k_{i}}(s_{h+1}^{k_{i}})]}-e^{\beta[r_{h}(s,a)+V_{h+1}^{*}(s_{h+1}^{k_{i}})]}\right]+\frac{1}{\beta}\sum_{i\in[t]}\alpha_{t}^{i}b_{i}\\
	& \le e^{\left|\beta\right|H}\sum_{i\in[t]}\alpha_{t}^{i}\left[(V_{h+1}^{k_{i}}-V_{h+1}^{*})(s_{h+1}^{k_{i}})\right]+\gamma_{t},
	\end{align*}
	where the second step holds by Fact \ref{fact:log_lip} with $g=1$
	and the fact that $V_{h+1}^{k_{i}}(s)\ge V_{h+1}^{*}(s)$ and by noticing
	that $\alpha_{t}^{0},\sum_{i\in[t]}\alpha_{t}^{i}\in\{0,1\}$ with
	$\alpha_{t}^{0}+\sum_{i\in[t]}\alpha_{t}^{i}=1$ by Fact \ref{fact:learn_rate_binary}
	(so that $q_{1}\ge q_{2}$), the third step holds since by definition
	$q_{1}^{+}\ge q_{1}$ and by Fact \ref{fact:q_2_prime} $q_{2}'\le q_{2}$,
	and the last step holds by Fact \ref{fact:exp_lip} and the fact that
	$H\ge r_{h}(s,a)+V_{h+1}^{k_{i}}(s)\ge r_{h}(s,a)+V_{h+1}^{*}(s)\ge0$.
	For $G_{2}$, we have 
	\begin{align*}
	G_{2} & =\frac{1}{\beta}\log\{q_{2}\}-\frac{1}{\beta}\log\{q_{3}\}\\
	& \le\frac{1}{\beta}(q_{2}-q_{3})\\
	& \le\frac{1}{\beta}(q_{2}^{+}-q_{3})\\
	& =\frac{\alpha_{t}^{0}}{\beta}\left[e^{\beta H}-e^{\beta\cdot Q_{h}^{*}(s,a)}\right]+\frac{1}{\beta}\sum_{i\in[t]}\alpha_{t}^{i}b_{i}\\
	& \quad+\frac{1}{\beta}\sum_{i\in[t]}\alpha_{t}^{i}\left[e^{\beta[r_{h}(s,a)+V_{h+1}^{*}(s_{h+1}^{k_{i}})]}-\E_{s'\sim P_{h}(\cdot\,|\,s,a)}e^{\beta[r_{h}(s,a)+V_{h+1}^{*}(s')]}\right]\\
	& \le\alpha_{t}^{0}He^{\left|\beta\right|H}+\gamma_{t},
	\end{align*}
	In the above, the second step holds by Fact \ref{fact:log_lip} with
	$g=1$ and 
	\[
	\sum_{i\in[t]}\alpha_{t}^{i}b_{i}\ge\left|\sum_{i\in[t]}\alpha_{t}^{i}\left[e^{\beta[r_{h}(s,a)+V_{h+1}^{*}(s_{h+1}^{k_{i}})]}-\E_{s'\sim P_{h}(\cdot\,|\,s,a)}e^{\beta[r_{h}(s,a)+V_{h+1}^{*}(s')]}\right]\right|
	\]
	on the event of Lemma \ref{lem:weighted_sum_mtg_bonus_bound} (so
	that $q_{2}\ge q_{3}$); the third step holds by Fact \ref{fact:q_2_prime};
	the last step holds by Fact \ref{fact:exp_lip} and $Q_{h}^{*}(s,a)\in[0,H]$
	and on the event of Lemma \ref{lem:weighted_sum_mtg_bonus_bound}. 
	
	\textbf{Case $\beta<0$.}\emph{ }We have 
	\begin{align*}
	G_{1} & =\frac{1}{(-\beta)}\log\{q_{2}\}-\frac{1}{(-\beta)}\log\{q_{1}\}\\
	& \le\frac{e^{-\beta H}}{(-\beta)}(q_{2}-q_{1})\\
	& \le\frac{e^{-\beta H}}{(-\beta)}(q_{2}'-q_{1}^{+})\\
	& =\frac{e^{-\beta H}}{(-\beta)}\sum_{i\in[t]}\alpha_{t}^{i}\left[e^{\beta[r_{h}(s,a)+V_{h+1}^{*}(s_{h+1}^{k_{i}})]}-e^{\beta[r_{h}(s,a)+V_{h+1}^{k_{i}}(s_{h+1}^{k_{i}})]}\right]+\frac{e^{-\beta H}}{(-\beta)}\sum_{i\in[t]}\alpha_{t}^{i}b_{i}\\
	& \le e^{\left|\beta\right|H}\sum_{i\in[t]}\alpha_{t}^{i}\left[(V_{h+1}^{k_{i}}-V_{h+1}^{*})(s_{h+1}^{k_{i}})\right]+\gamma_{t},
	\end{align*}
	where the second step holds by Fact \ref{fact:log_lip} with $g=e^{\beta H}$
	and the fact that $V_{h+1}^{k_{i}}(s)\ge V_{h+1}^{*}(s)$ (so that
	$q_{2}\ge q_{1}$), the third step holds since $q_{2}'\ge q_{2}$
	by Fact \ref{fact:q_2_prime} and $q_{1}^{+}\le q_{1}$ by definition,
	and the last step holds by Fact \ref{fact:exp_lip} and the fact that
	$H\ge r_{h}(s,a)+V_{h+1}^{k_{i}}(s)\ge r_{h}(s,a)+V_{h+1}^{*}(s)\ge0$.
	For $G_{2}$, we have 
	\begin{align*}
	G_{2} & =\frac{1}{(-\beta)}\log\{q_{3}\}-\frac{1}{(-\beta)}\log\{q_{2}\}\\
	& \le\frac{e^{-\beta H}}{(-\beta)}(q_{3}-q_{2})\\
	& \le\frac{e^{-\beta H}}{(-\beta)}(q_{3}-q_{2}^{+})\\
	& =\frac{e^{-\beta H}}{(-\beta)}\alpha_{t}^{0}\left[e^{\beta\cdot Q_{h}^{*}(s,a)}-e^{\beta H}\right]+\frac{e^{-\beta H}}{(-\beta)}\sum_{i\in[t]}\alpha_{t}^{i}b_{i}\\
	& \quad+\frac{e^{-\beta H}}{(-\beta)}\sum_{i\in[t]}\alpha_{t}^{i}\left[\E_{s'\sim P_{h}(\cdot\,|\,s,a)}e^{\beta[r_{h}(s,a)+V_{h+1}^{*}(s')]}-e^{\beta[r_{h}(s,a)+V_{h+1}^{*}(s_{h+1}^{k_{i}})]}\right]\\
	& \le e^{-\beta H}\alpha_{t}^{0}\left[H-Q_{h}^{*}(s,a)\right]+\frac{2e^{-\beta H}}{(-\beta)}\sum_{i\in[t]}\alpha_{t}^{i}b_{i}\\
	& \le\alpha_{t}^{0}He^{\left|\beta\right|H}+\gamma_{t}.
	\end{align*}
	where the second step holds by Fact \ref{fact:log_lip} given $q_{3}\ge q_{2}$,
	the second to the last step holds by Fact \ref{fact:exp_lip}, the
	fact that $Q_{h}^{*}(s,a)\le H$ and on the event of Lemma \ref{lem:weighted_sum_mtg_bonus_bound},
	and the last step holds by the definition of $\gamma_{t}$.
	
	Combining the bounds of $G_{1}$ and $G_{2}$ with the identity $(Q_{h}^{k}-Q_{h}^{*})(s,a)=G_{1}+G_{2}$
	yields the upper bound for $(Q_{h}^{k}-Q_{h}^{*})(s,a)$. The proof
	is completed in view of Lemma \ref{lem:weighted_sum_mtg_bonus_bound}
	and the definition of $\gamma_{t}$ that imply
	\[
	\gamma_{t}\le\frac{4c(e^{\left|\beta\right|H}-1)}{\left|\beta\right|}\sqrt{\frac{H\iota}{t}}.
	\]
\end{proof}

\section{Proof of Theorem \ref{thm:regret_Q_learn}\label{sec:proof_regret_Q_learn}}

We first introduce some notations. Let $\cG$ be a discrete space.
Define the shorthand 
\begin{equation}
\lse_{\beta}(P,f)\coloneqq\frac{1}{\tmp}\log\left\{ \E_{x\sim P}\left[\exp\left(\beta\cdot f(x)\right)\right]\right\} ,\label{eq:lse_def}
\end{equation}
for a probability distribution $P$ supported on $\cG$ and function
$f:\cG\to\real$. We record a useful lemma that shows $\lse_{\beta}(\cdot,\cdot)$
is Lipschitz continuous in the second argument. 
\begin{lem}
	\label{lem:lse_lip_f}Let $\cG$ be a discrete space and $\bar{f}\ge0$
	be a non-negative number. Let the functions $f,f':\real^{d}\mapsto[0,\bar{f}]$
	be such that $f(x)\ge f'(x)$ for all $x\in\real^{d}$. Also let $P$
	be a probability distribution supported on $\cG$. We have 
	\[
	\lse_{\beta}(P,f)-\lse_{\beta}(P,f')\le e^{\left|\beta\right|\bar{f}}\cdot\E_{x\sim P}[f(x)-f'(x)].
	\]
\end{lem}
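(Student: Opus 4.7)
The plan is to prove the inequality by a case split on the sign of $\beta$, in each case composing the two Lipschitz-type facts the paper has already recorded: Fact~\ref{fact:log_lip} to handle the outer $\tfrac{1}{\beta}\log(\cdot)$ and Fact~\ref{fact:exp_lip} to handle the inner $e^{\beta(\cdot)}$. The overall strategy is simply to strip away the $\log$ (producing a difference of expectations of exponentials), then strip away the $\exp$ (producing a difference of the original functions), tracking the multiplicative factors that accumulate at each step. The hypothesis $f \ge f'$ guarantees that each of these differences has a definite sign, which lets us invoke the one-sided facts as stated.

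For the case $\beta>0$, I would first note that $\E_{x\sim P}e^{\beta f(x)} \ge \E_{x\sim P} e^{\beta f'(x)}\ge 1$ because $f\ge f'\ge 0$. Then Fact~\ref{fact:log_lip} with $g=1$ applied to the outer logarithms gives
\[
\lse_{\beta}(P,f)-\lse_{\beta}(P,f') \le \frac{1}{\beta}\,\E_{x\sim P}\bigl[e^{\beta f(x)}-e^{\beta f'(x)}\bigr].
\]
Since $f(x)\le \bar f$ and $f(x)\ge f'(x)\ge 0$, Fact~\ref{fact:exp_lip} with $b=\beta$, $u=\bar f$ turns the integrand into $\beta e^{\beta\bar f}(f(x)-f'(x))$, and the $\beta$'s cancel to yield the claimed bound $e^{|\beta|\bar f}\,\E_{x\sim P}[f(x)-f'(x)]$.

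For the case $\beta<0$, the inequality inside the expectations flips: $\E_{x\sim P}e^{\beta f'(x)}\ge \E_{x\sim P}e^{\beta f(x)}\ge e^{\beta\bar f}$, since $f\ge f'$ and $f\le \bar f$. Rewriting $\lse_\beta(P,f)-\lse_\beta(P,f')$ as $\tfrac{1}{|\beta|}[\log\E e^{\beta f'}-\log\E e^{\beta f}]$, I would apply Fact~\ref{fact:log_lip} with $g=e^{\beta\bar f}$ to obtain a factor of $e^{-\beta\bar f}=e^{|\beta|\bar f}$ outside, and then apply the second half of Fact~\ref{fact:exp_lip} (the $b<0$ branch, with $x=f(x)$, $y=f'(x)$) to bound the integrand by $(-\beta)(f(x)-f'(x))$. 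The $(-\beta)$ cancels against $\tfrac{1}{|\beta|}$, leaving exactly $e^{|\beta|\bar f}\,\E_{x\sim P}[f(x)-f'(x)]$.

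There is no real obstacle here; the only thing to be careful about is verifying the lower-bound hypotheses ($g$ in Fact~\ref{fact:log_lip} and the range conditions in Fact~\ref{fact:exp_lip}) are satisfied in each case, which is why the proof naturally bifurcates by $\sgn(\beta)$. Because both cases collapse to the same final constant $e^{|\beta|\bar f}$, no further work is needed to unify them.
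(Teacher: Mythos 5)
Your proposal is correct and follows essentially the same route as the paper's own proof: a case split on the sign of $\beta$, applying Fact~\ref{fact:log_lip} with $g=1$ (for $\beta>0$) or $g=e^{\beta\bar f}$ (for $\beta<0$) to the outer logarithm, then Fact~\ref{fact:exp_lip} to the inner exponential, with the stated cancellations of $\beta$. No gaps; nothing further is needed.
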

The proof is given in Appendix \ref{sec:proof_lse_lip_f}.

Define $\hat{P}_{h}^{k}(\cdot\,|\,s,a)$ to be the delta function
centered at $s_{h+1}^{k}$ for all $(k,h,s,a)\in[K]\times[H]\times\cS\times\cA$,
and this means $\E_{s'\sim\hat{P}_{h}^{k}(\cdot\,|\,s,a)}[f(s')]=f(s_{h+1}^{k})$
for any function $f:\cS\to\real$. Also define 
\[
\delta_{h}^{k}\coloneqq(V_{h}^{k}-V_{h}^{\pi_{k}})(s_{h}^{k})\quad\text{and}\quad\phi_{h}^{k}\coloneqq(V_{h}^{k}-V_{h}^{*})(s_{h}^{k}).
\]
Also define 
\[
\xi_{h+1}^{k}\coloneqq[(P_{h}-\hat{P}_{h}^{k})(V_{h+1}^{*}-V_{h+1}^{\pi_{k}})](s_{h}^{k},a_{h}^{k}).
\]
Note that For each $(k,h)\in[K]\times[H]$, we have 
\begin{align}
\delta_{h}^{k} & =(Q_{h}^{k}-Q_{h}^{\pi_{k}})(s_{h}^{k},a_{h}^{k})\nonumber \\
& =(Q_{h}^{k}-Q_{h}^{*})(s_{h}^{k},a_{h}^{k})+(Q_{h}^{*}-Q_{h}^{\pi_{k}})(s_{h}^{k},a_{h}^{k})\nonumber \\
& \le\alpha_{t}^{0}He^{\left|\beta\right|H}+\sum_{i\in[t]}\alpha_{t}^{i}e^{\left|\beta\right|H}\phi_{h+1}^{k_{i}}+2\gamma_{t}\nonumber \\
& \quad+[\lse(P_{h}(\cdot\,|\,s_{h}^{k},a_{h}^{k}),V_{h+1}^{*})-\lse(P_{h}(\cdot\,|\,s_{h}^{k},a_{h}^{k}),V_{h+1}^{\pi_{k}})]\nonumber \\
& \le\alpha_{t}^{0}He^{\left|\beta\right|H}+\sum_{i\in[t]}\alpha_{t}^{i}e^{\left|\beta\right|H}\phi_{h+1}^{k_{i}}+2\gamma_{t}+e^{\left|\beta\right|H}[P_{h}(V_{h+1}^{*}-V_{h+1}^{\pi_{k}})](s_{h}^{k},a_{h}^{k})\nonumber \\
& =\alpha_{t}^{0}He^{\left|\beta\right|H}+\sum_{i\in[t]}\alpha_{t}^{i}e^{\left|\beta\right|H}\phi_{h+1}^{k_{i}}+2\gamma_{t}+e^{\left|\beta\right|H}(\delta_{h+1}^{k}-\phi_{h+1}^{k}+\xi_{h+1}^{k}),\label{eq:qlearn_regret_decomp}
\end{align}
where the third step holds by Lemma \ref{lem:bound_Q^k-Q^*} and the
Bellman equations \eqref{eq:bellman_primal} and \eqref{eq:bellman_optimal_primal},
the fourth step holds by Lemma \ref{lem:lse_lip_f} and the fact that
$0\le V_{h+1}^{\pi_{k}}(s)\le V_{h+1}^{*}(s)\le H$ for all $s\in\cS$,
and the last step follows by defintion that $\delta_{h+1}^{k}-\phi_{h+1}^{k}=(V_{h+1}^{*}-V_{h+1}^{\pi_{k}})(s_{h+1}^{k})=[\hat{P}_{h}^{k}(V_{h+1}^{*}-V_{h+1}^{\pi_{k}})](s_{h}^{k},a_{h}^{k})$
and the definition of $\xi_{h+1}^{k}$.

We now compute $\sum_{k\in[K]}\delta_{h}^{k}$ for a fixed $h\in[H]$.
Denote by $n_{h}^{k}\coloneqq N_{h}^{k}(s_{h}^{k},a_{h}^{k})$ and
we have
\[
\sum_{k\in[K]}\alpha_{n_{h}^{k}}^{0}He^{\left|\beta\right|H}=He^{\left|\beta\right|H}\sum_{k\in[K]}\indic\{n_{h}^{k}=0\}\le He^{\left|\beta\right|H}SA.
\]
Then we turn to control the second term in Equation \eqref{eq:qlearn_regret_decomp}
summed over $k\in[K]$, that is, 
\[
\sum_{k\in[K]}\sum_{i\in[t]}\alpha_{t}^{i}e^{\left|\beta\right|H}\phi_{h+1}^{k_{i}}=e^{\left|\beta\right|H}\sum_{k\in[K]}\sum_{i\in[n_{h}^{k}]}\alpha_{n_{h}^{k}}^{i}\phi_{h+1}^{k_{i}(s_{h}^{k},a_{h}^{k})},
\]
where $k_{i}(s_{h}^{k},a_{h}^{k})$ denotes the episode in which $(s_{h}^{k},a_{h}^{k})$
was taken at step $h$ for the $i$-th time. We re-group the above
summation in a different way. For every $k'\in[K]$, the term $\phi_{h+1}^{k'}$
appears in the summand with $k>k'$ if and only if $(s_{h}^{k},a_{h}^{k})=(s_{h}^{k'},a_{h}^{k'})$.
The first time it appears we have $n_{h}^{k}=n_{h}^{k'}+1$, the second
time it appears we have $n_{h}^{k}=n_{h}^{k'}+2$, and etc. Therefore,
\[
e^{\left|\beta\right|H}\sum_{k\in[K]}\sum_{i\in[n_{h}^{k}]}\alpha_{n_{h}^{k}}^{i}\phi_{h+1}^{k_{i}(s_{h}^{k},a_{h}^{k})}\le e^{\left|\beta\right|H}\sum_{k'\in[K]}\phi_{h+1}^{k'}\sum_{t\ge n_{h}^{k'}+1}\alpha_{t}^{n_{h}^{k'}}\le e^{\left|\beta\right|H}\left(1+\frac{1}{H}\right)\sum_{k'\in[K]}\phi_{h+1}^{k'},
\]
where the last step follows Fact \ref{fact:learn_rate_infinite_sum}.
Collecting the above results and plugging them into Equation \eqref{eq:qlearn_regret_decomp},
we have 
\begin{align}
\sum_{k\in[K]}\delta_{h}^{k} & \le He^{\left|\beta\right|H}SA+e^{\left|\beta\right|H}\left(1+\frac{1}{H}\right)\sum_{k\in[K]}\phi_{h+1}^{k}\nonumber \\
& \quad+e^{\left|\beta\right|H}\sum_{k\in[K]}(\delta_{h+1}^{k}-\phi_{h+1}^{k})+\sum_{k\in[K]}(2\gamma_{n_{h}^{k}}+e^{\left|\beta\right|H}\xi_{h+1}^{k})\nonumber \\
& \le He^{\left|\beta\right|H}SA+e^{\left|\beta\right|H}\left(1+\frac{1}{H}\right)\sum_{k\in[K]}\delta_{h+1}^{k}\nonumber \\
& \quad+\sum_{k\in[K]}(2\gamma_{n_{h}^{k}}+e^{\left|\beta\right|H}\xi_{h+1}^{k}),\label{eq:qlearn_regret_interm}
\end{align}
where the last step holds since $\delta_{h+1}^{k}\ge\phi_{h+1}^{k}$
(due to the fact that $V_{h+1}^{*}(s)\ge V_{h+1}^{\pi_{k}}(s)$ for
all $x\in\cS$). Since it holds that 
\[
\left[e^{\left|\beta\right|H}\left(1+\frac{1}{H}\right)\right]^{H}\le e^{\left|\beta\right|H^{2}+1},
\]
we can expand the quantity $\sum_{k\in[K]}\delta_{1}^{k}$ recursively
in the form of Equation \eqref{eq:qlearn_regret_interm}, apply Holder's
inequality and use the fact that $\delta_{H+1}^{k}=0$ to get 
\begin{equation}
\sum_{k\in[K]}\delta_{1}^{k}\le e^{\left|\beta\right|H^{2}+1}\left[H^{2}e^{\left|\beta\right|H}SA+\sum_{h\in[H]}\sum_{k\in[K]}(2\gamma_{n_{h}^{k}}+e^{\left|\beta\right|H}\xi_{h+1}^{k})\right].\label{eq:qlearn_regret_interm_unrolled}
\end{equation}
By the pigeonhole principle, for any $h\in[H]$ we have
\begin{align}
\sum_{k\in[K]}\gamma_{n_{h}^{k}} & \lesssim\frac{e^{\left|\beta\right|H}-1}{\left|\beta\right|}\sum_{k\in[K]}\sqrt{\frac{H\iota}{n_{h}^{k}}}\nonumber \\
& =\frac{e^{\left|\beta\right|H}-1}{\left|\beta\right|}\sum_{(s,a)\in\cS\times\cA}\sum_{n\in[N_{h}^{K}(s,a)]}\sqrt{\frac{H\iota}{n}}\nonumber \\
& \lesssim\frac{e^{\left|\beta\right|H}-1}{\left|\beta\right|}\sqrt{HSAK\iota}\nonumber \\
& =\frac{e^{\left|\beta\right|H}-1}{\left|\beta\right|}\sqrt{SAT\iota},\label{eq:qlearn_bonus_bound}
\end{align}
where the third step holds since $\sum_{(s,a)\in\cS\times\cA}N_{h}^{K}(s,a)=K$
and the RHS of the second step is maximized when $N_{h}^{K}(s,a)=K/(SA)$
for all $(s,a)\in\cS\times\cA$. Finally, the Azuma-Hoeffding inequality
implies that with probability at least $1-\delta$, we have 
\begin{equation}
\left|\sum_{h\in[H]}\sum_{k\in[K]}\xi_{h+1}^{k}\right|\lesssim H\sqrt{T\iota}.\label{eq:qlearn_mtg_bound}
\end{equation}
Putting together Equations \eqref{eq:qlearn_bonus_bound} and \eqref{eq:qlearn_mtg_bound}
and plugging them into \eqref{eq:qlearn_regret_interm_unrolled},
we have 
\begin{align*}
\sum_{k\in[K]}\delta_{1}^{k} & \lesssim e^{\left|\beta\right|(H^{2}+H)}\cdot H^{2}SA\\
& \quad+e^{\left|\beta\right|H^{2}}\cdot\frac{e^{\left|\beta\right|H}-1}{\left|\beta\right|}\sqrt{H^{2}SAT\iota}\\
& \quad+e^{\left|\beta\right|(H^{2}+H)}\cdot H\sqrt{T\iota}.\\
& \le e^{\left|\beta\right|(H^{2}+H)}\cdot H^{2}SA\\
& \quad+e^{\left|\beta\right|(H^{2}+H)}\cdot\frac{e^{\left|\beta\right|H}-1}{\left|\beta\right|}\sqrt{H^{2}SAT\iota}
\end{align*}
The proof is completed in view of Fact \ref{fact:exp_factor} and
when $T$ is sufficiently large.

\subsection{Proof of Lemma \ref{lem:lse_lip_f}\label{sec:proof_lse_lip_f}}

We have the following two cases.

\textbf{Case $\beta>0$.} We have 
\begin{align*}
\lse_{\beta}(P,f)-\lse_{\beta}(P,f') & \le\frac{1}{\beta}\E_{x\sim P}\left[e^{\beta\cdot f(x)}-e^{\beta\cdot f'(x)}\right]\\
& \le\frac{1}{\beta}\E_{x\sim P}\left[\beta e^{\beta\bar{f}}(f(x)-f'(x))\right]\\
& =e^{\beta\bar{f}}\cdot\E_{x\sim P}[f(x)-f'(x)],
\end{align*}
where the first step holds by Fact \ref{fact:log_lip} with $g=1$
and the fact that $e^{\beta\cdot f(x)}\ge e^{\beta\cdot f'(x)}\ge1$,
and the second holds by Fact \ref{fact:exp_lip} with $u=\bar{f}$
and the fact that $f(x)\ge f'(x)$.

\textbf{Case $\beta<0$.} We have 
\begin{align*}
\lse_{\beta}(P,f)-\lse_{\beta}(P,f') & =-\left[\lse_{\beta}(P,f')-\lse_{\beta}(P,f)\right]\\
& \le\frac{\exp(-\beta\bar{f})}{(-\beta)}\E_{x\sim P}\left[\exp(\beta\cdot f'(x))-\exp(\beta\cdot f(x))\right]\\
& \le\frac{\exp(-\beta\bar{f})}{(-\beta)}\E_{x\sim P}\left[(-\beta)(f(x)-f'(x))\right]\\
& =\exp(-\beta\bar{f})\cdot\E_{x\sim P}[f(x)-f'(x)],
\end{align*}
where the second step holds by Fact \ref{fact:log_lip} with $g=e^{\beta\bar{f}}$
given that $x\in[e^{\beta\bar{f}},1]$, and the third step holds by
Fact \ref{fact:exp_lip} and the fact $1\ge e^{\beta\cdot f'(x)}\ge e^{\beta\cdot f(x)}>0$.

\section{Proof of Theorem \ref{thm:regret_lower_bound}\label{sec:proof_regret_lower_bound}}

For each $ \rho \in [0,1] $, let $ \text{Ber}(\rho) $ denote the Bernoulli distribution with parameter $ \rho $. Before diving into the proof, let us record two important results. 
\begin{lem}
	\label{lem:subopt_plays}Let $p,p'\in(0,1)$ and $p>p'$. Define $D\coloneqq D_{\textup{KL}}(\textup{Ber}(p')\|\textup{Ber}(p))$
	to be the KL divergence between $\textup{Ber}(p')$ and $\textup{Ber}(p)$.
	For any policy $\pi$ and a positive integer $K$, let $K_{0}\coloneqq K_{0}(K,\text{\ensuremath{\pi}})$
	be the number of times that the sub-optimal arm is pulled in the $K$-round
	two-arm bandit problem (with $\textup{Ber}(p')$ and $\textup{Ber}(p)$
	being the two arms) when executing policy $\pi$. When $K$ is sufficiently
	large, we have 
	\[
	\E K_{0}\gtrsim\frac{\log K}{D}.
	\]
\end{lem}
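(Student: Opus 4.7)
\textbf{Plan for Lemma~\ref{lem:subopt_plays}.} My approach is the classical Lai--Robbins change-of-measure argument. First I would construct an alternative two-arm bandit instance $\nu'$ in which the distribution of the currently suboptimal arm is replaced by $\textup{Ber}(p'')$ for some $p''>p$, while the other arm remains $\textup{Ber}(p)$. Under $\nu'$, what was previously the suboptimal arm becomes strictly better than the other, so the roles of optimal and suboptimal flip. Let $P$ and $P'$ denote the laws of the $K$-round trajectory produced by $\pi$ on $\nu=(\textup{Ber}(p'),\textup{Ber}(p))$ and on $\nu'$, respectively.

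Next, by the chain rule for KL divergence (the standard divergence decomposition for bandits), I have
\[
D_{\textup{KL}}(P\,\|\,P') \;=\; \E[K_0]\cdot D_{\textup{KL}}(\textup{Ber}(p')\,\|\,\textup{Ber}(p'')),
\]
because the two instances differ only in the law of the arm that is pulled $K_0$ times in expectation under $P$. By choosing $p''$ close enough to $p$ from above, the per-pull KL on the right can be made arbitrarily close to $D=D_{\textup{KL}}(\textup{Ber}(p')\,\|\,\textup{Ber}(p))$, so it suffices to prove $\E[K_0]\gtrsim \log K / D_{\textup{KL}}(\textup{Ber}(p')\,\|\,\textup{Ber}(p''))$.

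Third, I would apply the Bretagnolle--Huber inequality to the event $A=\{K_0\ge K/2\}$:
\[
P(A)+P'(A^c)\;\ge\;\tfrac{1}{2}\exp\bigl(-D_{\textup{KL}}(P\,\|\,P')\bigr).
\]
Under $\nu$, Markov's inequality gives $P(A)\le 2\E[K_0]/K$. Under $\nu'$ the event $A^c$ says the arm that is now \emph{optimal} is pulled more than $K/2$ times, hence the suboptimal arm of $\nu'$ is pulled fewer than $K/2$ times, and Markov again gives $P'(A^c)\le 2\E_{\nu'}[K_0']/K$, where $K_0'$ counts suboptimal pulls under $\nu'$. Inverting Bretagnolle--Huber then yields
\[
\E[K_0]\cdot D_{\textup{KL}}(\textup{Ber}(p')\,\|\,\textup{Ber}(p''))\;\gtrsim\;\log K - \log\bigl(\E[K_0]+\E_{\nu'}[K_0']\bigr).
\]
Finally, invoking the regime of interest in Theorem~\ref{thm:regret_lower_bound}, where only algorithms with $\tilde O(\sqrt T)$ regret are considered, both $\E[K_0]$ and $\E_{\nu'}[K_0']$ are at most $\tilde O(\sqrt K)$, so the correction term is $o(\log K)$. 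Sending $p''\to p$ and rearranging completes the proof.

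\textbf{Main obstacle.} The subtle point is the universal quantifier over $\pi$: a trivial policy that always pulls the (unknown) optimal arm has $\E[K_0]=0$ on this particular instance, so the bound cannot literally hold for \emph{every} policy without further structure. The change-of-measure device circumvents this precisely by forcing $\pi$ to be evaluated simultaneously on $\nu$ and on the confusingly-close alternative $\nu'$, so that low $\E[K_0]$ on $\nu$ is incompatible with low $\E_{\nu'}[K_0']$ on $\nu'$. I expect the main care in writing out the proof to lie in (i) stating precisely the uniform-goodness assumption needed on $\pi$ (implicit in restricting attention to the $\tilde O(\sqrt T)$-regret algorithms targeted by the lower bound), and (ii) verifying cleanly that the correction term $\log(\E[K_0]+\E_{\nu'}[K_0'])$ is absorbed by $\log K$ for sufficiently large $K$ so that the stated $\log K / D$ rate is recovered.
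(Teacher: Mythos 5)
Your proposal is correct and coincides with the argument behind the result the paper relies on: the paper does not prove Lemma~\ref{lem:subopt_plays} itself but simply cites the proof of Theorem 16.2 of Lattimore and Szepesv\'ari, which is exactly the divergence-decomposition plus Bretagnolle--Huber change-of-measure argument you outline. You are also right to flag the quantifier over $\pi$: as literally stated the lemma fails for the policy that always pulls the better arm, and the implicit hypothesis is that $\pi$ is uniformly good across instances (e.g., attains $\tilde{O}(\sqrt{T})$ regret on every instance, as in the surrounding Theorem~\ref{thm:regret_lower_bound}); this is precisely what makes $\E_{\nu'}[K_0']$ controllable and is glossed over in the paper's statement. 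One small quantitative slip: with $\E[K_0]+\E_{\nu'}[K_0']=\tilde{O}(\sqrt{K})$ the correction term $\log\bigl(\E[K_0]+\E_{\nu'}[K_0']\bigr)$ is $\tfrac12\log K+O(\log\log K)$ rather than $o(\log K)$, but since $\log(K/4)-\tfrac12\log K-O(\log\log K)\gtrsim\log K$ the stated $\E K_0\gtrsim\log K/D$ conclusion still follows, only with a worse constant.
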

\begin{proof}
	This is an intermediate result in the proof of \citet[Theorem 16.2]{lattimore2018bandit}.
\end{proof}
\begin{lem}
	\label{lem:Bern_KL}Let $p,p'\in(0,1)$ be such that $p>p'$. We have
	$D_{\textup{KL}}(\textup{Ber}(p')\|\textup{Ber}(p))\le\frac{(p-p')^{2}}{p(1-p)}$. 
\end{lem}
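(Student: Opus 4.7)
The plan is to prove the inequality by direct computation starting from the definition of the Bernoulli KL divergence and invoking the elementary inequality $\log y \le y - 1$ for all $y > 0$ (equivalently, $-\log x \le 1/x - 1$).

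First, I would write the KL divergence explicitly as
\[
D_{\textup{KL}}(\textup{Ber}(p') \| \textup{Ber}(p)) = p' \log\frac{p'}{p} + (1-p')\log\frac{1-p'}{1-p} = -p'\log\frac{p}{p'} - (1-p')\log\frac{1-p}{1-p'}.
\]
Since $p > p'$ implies $p/p' > 1$ and $(1-p)/(1-p') < 1$, the first term is negative and the second positive, so the inequality requires an upper bound. To get one, apply $-\log x \le 1/x - 1$ to each logarithm: this yields
\[
-\log\frac{p}{p'} \le \frac{p'}{p} - 1 = \frac{p' - p}{p}, \qquad -\log\frac{1-p}{1-p'} \le \frac{1-p'}{1-p} - 1 = \frac{p - p'}{1-p}.
\]
Substituting these bounds gives
\[
D_{\textup{KL}}(\textup{Ber}(p') \| \textup{Ber}(p)) \le \frac{p'(p'-p)}{p} + \frac{(1-p')(p-p')}{1-p} = (p-p')\left[\frac{1-p'}{1-p} - \frac{p'}{p}\right].
\]

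Finally, the bracketed quantity simplifies: placing it over the common denominator $p(1-p)$, the numerator becomes $p(1-p') - p'(1-p) = p - p'$, so the whole expression collapses to $(p-p')^2/[p(1-p)]$, which is exactly the desired bound. There is no real obstacle here beyond careful bookkeeping of signs, since $p > p'$ ensures both factors of $(p - p')$ combine with positive coefficients, and the inequality $-\log x \le 1/x - 1$ applied at the right places automatically produces the chi-squared-type expression on the right-hand side.
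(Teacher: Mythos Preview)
Your proof is correct and essentially identical to the paper's: both apply the inequality $\log(1+x)\le x$ (equivalently $-\log x\le 1/x-1$) to each of the two logarithmic terms in the Bernoulli KL divergence, arriving at the same intermediate expression $p'(p'-p)/p+(1-p')(p-p')/(1-p)$, and then simplify to $(p-p')^{2}/[p(1-p)]$. The only cosmetic difference is that the paper simplifies by substituting $\Delta=p-p'$ and expanding, whereas you factor out $(p-p')$ and clear the bracket over a common denominator.
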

The proof is provided in Appendix \ref{sec:proof_Bern_KL}.
We consider two cases: $\beta>0$ and $\beta<0$.

\subsection{Case $\beta>0$}

Consider a two-arm bandit problem with $K$ rounds, where the reward for pulling arm
$i\in\{1,2\}$ is given by the scaled $ \text{Ber}(p_i) $ random variable 
\[
X_{i}=\begin{cases}
H & \text{w.p. }p_{i},\\
0 & \text{w.p. }1-p_{i},
\end{cases}
\]
where $H\ge1$ specifies the range of the reward, and the parameters $p_{1}>p_{2}$ are to be specified later. Let $\Delta:=p_{1}-p_{2}>0$.

By Lemma \ref{lem:Bern_KL}, we have 
\begin{equation}
D_{\text{KL}}(X_{2}\|X_{1})\ge\frac{\Delta^{2}}{p_{1}(1-p_{1})}.\label{eq:KL_bound}
\end{equation}
It then follows from Lemma \ref{lem:subopt_plays} that 
\begin{equation}
\E K_{0}\gtrsim\frac{\log K\cdot p_{1}(1-p_{1})}{\Delta^{2}}\label{eq:EK0}
\end{equation}
Let us choose 
\[
\Delta =  C\sqrt{\frac{\log K\cdot p_{1}(1-p_{1})}{K}}
\]
for an universal constant $ C >0$. 
Note that under this choice we have $K\ge\E K_{0}$ as should be expected.
Now, we set $p_{2}=e^{-\beta H}$. Since $p_{1}(1-p_{1})\le\frac{1}{4}$,
we have $\Delta\lesssim\sqrt{\frac{\log K}{K}}$. By choosing $K$
and $H$ large enough, we can ensure $\Delta\le e^{-\beta H}$ and
$p_{1}=p_{2}+\Delta\le\frac{3}{4}$.

Define $X_{i}^{k}$ to be the outcome of arm $X_{i}$ (if pulled)
in round $k$, and $Y^{k}$ to be the outcome of the arm actually
pulled in round $k$. Then, conditional on $K_{0}$, we have 
\begin{align}
\textup{Regret}(K) & =\frac{1}{\beta}\log\left[\mathbb{E}\exp\left(\beta\sum_{k\in[K]}X_{1}^{k}\right)\right]-\frac{1}{\beta}\log\left[\mathbb{E}\exp\left(\beta\sum_{k\in[K]}Y^{k}\right)\right]\nonumber \\
& \overset{(i)}{=}\frac{1}{\beta}\log\left[\prod_{k=1}^{K}\mathbb{E}\exp\left(\beta X_{1}^{k}\right)\right]-\frac{1}{\beta}\log\left[\prod_{k=1}^{K}\mathbb{E}\exp\left(\beta Y^{k}\right)\right]\nonumber \\
& \ge\frac{1}{\beta}\log\left[\prod_{k=1}^{K}\mathbb{E}\exp\left(\beta X_{1}^{k}\right)\right]-\frac{1}{\beta}\log\left[\prod_{k=1}^{K}\mathbb{E}\exp\left(\beta X_{2}^{k}\right)\right]\nonumber \\
& =\frac{K}{\beta}\log\left[\mathbb{E}\exp\left(\beta X_{1}\right)\right]-\frac{K}{\beta}\log\left[\mathbb{E}\exp\left(\beta X_{2}\right)\right]\nonumber \\
& \ge\frac{K_{0}}{\beta}\log\left[\mathbb{E}\exp\left(\beta X_{1}\right)\right]-\frac{K_{0}}{\beta}\log\left[\mathbb{E}\exp\left(\beta X_{2}\right)\right],\label{eq:regret_K0}
\end{align}
where step $(i)$ holds because of the independence among $\{X_{1}^{k}\}$
and independence among $\{Y^{k}\}$. Taking expectation over $K_{0}$
on both sides of Equation (\ref{eq:regret_K0}), we have 
\begin{align*}
\E [\textup{Regret}(K)] & \ge\frac{\E K_{0}}{\beta}\left(\log\mathbb{E}e^{\beta X_{1}}-\log\mathbb{E}e^{\beta X_{2}}\right)\\
& =\frac{\E K_{0}}{\beta}\log\left(\frac{p_{1}e^{\beta H}+(1-p_{1})}{p_{2}e^{\beta H}+(1-p_{2})}\right)\\
& =\frac{\E K_{0}}{\beta}\log\left(1+\frac{\Delta(e^{\beta H}-1)}{p_{2}e^{\beta H}+(1-p_{2})}\right)\\
& \overset{(i)}{\ge}\frac{\E K_{0}}{\beta}\log\left(1+\frac{\Delta(e^{\beta H}-1)}{1+1}\right)\\
& \overset{(ii)}{\ge}\frac{\E K_{0}}{\beta}\cdot\frac{1}{4}\Delta(e^{\beta H}-1)\\
& \overset{(iii)}{\gtrsim}\frac{1}{\beta}\cdot\frac{\log K\cdot p_{1}(1-p_{1})}{\Delta}\cdot(e^{\beta H}-1)\\
& \gtrsim\frac{1}{\beta}\cdot\sqrt{K\log K\cdot p_{1}(1-p_{1})}\cdot(e^{\beta H}-1)\\
& \overset{(iv)}{\gtrsim}\frac{1}{\beta}\cdot\sqrt{K\log K}\cdot(e^{\beta H/2}-1)\\
& \gtrsim\frac{1}{\beta}\cdot\sqrt{T\log T}\cdot(e^{\beta H/2}-1),
\end{align*}
where step $(i)$ holds since $p_{2}=e^{-\beta H}$, step $(ii)$
holds since $\Delta\le e^{-\beta H}$ and $\log(1+x)\ge\frac{x}{2}$
for $x\in[0,1]$, step $(iii)$ holds by Equation (\ref{eq:EK0}),
 step $(iv)$ holds since $e^{-\beta H}=p_{2}\le p_{1}\le\frac{3}{4}$
by construction, and the last step holds since $\frac{1}{\beta}(e^{\beta H/2}-1) \gtrsim H$ implied by Fact \ref{fact:exp_growth} below.
\begin{fact} \label{fact:exp_growth}
	For any $G>0$, the function
	\[
	f_{G}(x) = \frac{e^{Gx}-1}{x}, \quad x > 0
	\]
	is increasing and satisfies $\lim_{x\to0} f_{G}(x) = G$.
\end{fact}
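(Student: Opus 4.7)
The fact is an elementary real-analysis statement, so the plan is short. I will handle the two claims separately via the power-series representation of $e^{Gx}$, which makes both assertions transparent.

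\textbf{Step 1 (Series expansion).} The plan is to write, for $x>0$,
\[
f_G(x) \;=\; \frac{e^{Gx}-1}{x} \;=\; \frac{1}{x}\sum_{n=1}^{\infty}\frac{(Gx)^n}{n!} \;=\; \sum_{n=1}^{\infty}\frac{G^n}{n!}\,x^{n-1} \;=\; G \;+\; \sum_{n=2}^{\infty}\frac{G^n}{n!}\,x^{n-1}.
\]
Since the series converges on all of $\mathbb{R}$ (in particular uniformly on compacts), this identity is valid for every $x>0$.

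\textbf{Step 2 (Limit at $0$).} From the above display, the tail $\sum_{n\geq 2} G^n x^{n-1}/n!$ is bounded in absolute value by $x \sum_{n\geq 2} G^n/n! \cdot \max(1, x^{n-2})$ on a neighborhood of $0$; in particular it tends to $0$ as $x\to 0$. Hence $\lim_{x\to 0} f_G(x) = G$, proving the second claim.

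\textbf{Step 3 (Monotonicity).} Because $G>0$, each term $G^n x^{n-1}/n!$ for $n\geq 2$ is a strictly increasing function of $x$ on $(0,\infty)$. The constant term $G$ is also non-decreasing. Therefore $f_G$, being a sum of non-decreasing functions with at least one strictly increasing summand, is strictly increasing on $(0,\infty)$.

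\textbf{Alternative (if a derivative-based argument is preferred).} One can equivalently differentiate to get $f_G'(x) = [e^{Gx}(Gx-1)+1]/x^2$, let $g(u)\coloneqq e^u(u-1)+1$ with $u=Gx$, and note $g(0)=0$ and $g'(u)=ue^u>0$ for $u>0$, so $g(u)>0$ and hence $f_G'(x)>0$. Either route is essentially immediate; no step poses a real obstacle, so the proof itself will fit comfortably in a few lines.
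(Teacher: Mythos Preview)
Your proof is correct. The paper states this fact without proof (treating it as elementary calculus), so there is no paper argument to compare against; either your power-series expansion or the derivative computation you give in the alternative is a perfectly valid one-line justification.
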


Finally, note that the aforementioned $K$-round two-arm bandit model 
is a special case of an $K$-episode $(H+2)$-horizon MDP with the
per-step reward in $[0,1]$, illustrated in Figure~\ref{fig:bandit_to_MDP}. The MDP is equipped with $\cA=\{a_{1},a_{2}\}$,
$\cS=\{s_{1},s_{2},s_{3}\}$, where state $s_{1}$ is the initial
state, and states $s_{2}$ and $s_{3}$ are absorbing regardless of
actions taken. The states satisfy that $r_{h}(s_{2},a)=1,r_{h}(s_{1},a)=r_{h}(s_{3},a)=0$
for all $h\in[H+2]$ and $a\in\cA$. At the initial state $s_{1}$,
we may choose to take action $a_{1}$ or $a_{2}$. If $a_{1}$ is
taken at state $s_{1}$, then we transition to $s_{2}$ with probability
$p_{1}$ and to $s_{3}$ with probability $1-p_{1}$. If $a_{2}$
is taken at state $s_{1}$, then we transition to $s_{2}$ with probability
$p_{2}$ and to $s_{3}$ with probability $1-p_{2}$. 
\begin{figure}
	\centering
	\includegraphics[scale=.8]{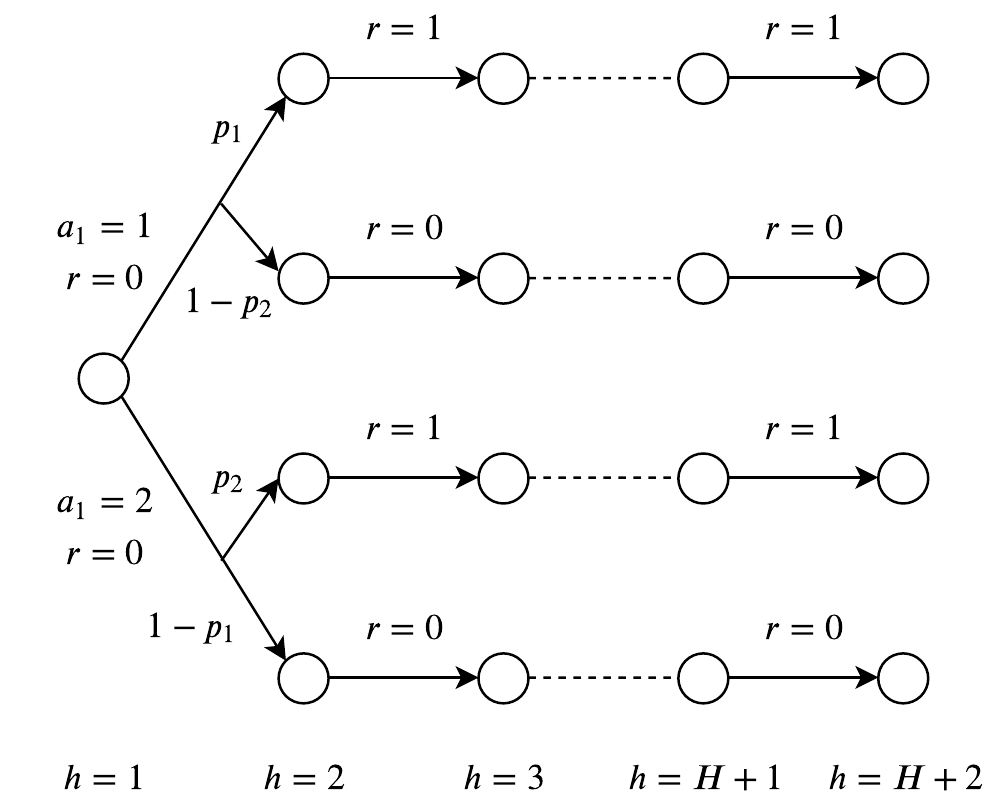}
	\caption{From bandit model to MDP.}
	\label{fig:bandit_to_MDP}
\end{figure}
\subsection{Case $\beta<0$}

The proof of the case $\beta<0$ is similar to that of the case $\beta>0$.
For $\beta<0$, consider a 2-arm bandit model with $K$ rounds, where
the reward for pulling arm
$i\in\{1,2\}$ is given by the scaled $\text{Ber}(1-p_{i})  $ random variable 
\[
X_{i}=\begin{cases}
0 & \text{w.p. }p_{i},\\
H & \text{w.p. }1-p_{i}.
\end{cases}
\]
Let $p_{2}=e^{\beta H}>p_{1}$ and $\Delta\coloneqq p_{1}-p_{2}<0$.
Note that Equations (\ref{eq:KL_bound}) and (\ref{eq:EK0}) remain
valid (by invoking Lemmas \ref{lem:Bern_KL} and \ref{lem:subopt_plays}
with $p=1-p_{1}$ and $p'=1-p_{2}$). Therefore, we choose 
\[
\Delta=-C\sqrt{\frac{\log K\cdot p_{1}(1-p_{1})}{K}}
\]
for some universal constant $C>0$. Since $p_{1}(1-p_{1})\le\frac{1}{4}$,
we have $\Delta\gtrsim-\sqrt{\frac{\log K}{K}}$. By choosing $H$
large enough, we have $1-p_{1}\ge1-p_{2}=1-e^{\beta H}\ge\frac{1}{4}$.
And by choosing $K$ large enough, we can ensure $\Delta\ge-\frac{1}{2}e^{\beta H}$
so that $p_{1}=p_{2}+\Delta\ge\frac{1}{2}e^{\beta H}$. 

Taking the expectation over $K_{0}$ on both sides of Equation (\ref{eq:regret_K0}),
we have 
\begin{align*}
\E [\textup{Regret} (K)] & =\frac{\E K_{0}}{\beta}\left(\log\mathbb{E}e^{\beta X_{1}}-\log\mathbb{E}e^{\beta X_{2}}\right)\\
& =\frac{\E K_{0}}{\beta}\log\left(\frac{(1-p_{1})e^{\beta H}+p_{1}}{(1-p_{2})e^{\beta H}+p_{2}}\right)\\
& =\frac{\E K_{0}}{\beta}\log\left(1+\frac{\Delta(1-e^{\beta H})}{(1-p_{2})e^{\beta H}+p_{2}}\right).\\
& \overset{(i)}{\ge}\frac{\E K_{0}}{\beta}\cdot\frac{\Delta(1-e^{\beta H})}{(1-p_{2})e^{\beta H}+p_{2}}\\
& \overset{(ii)}{\ge}\frac{\E K_{0}}{\beta}\cdot\frac{\Delta(1-e^{\beta H})}{2e^{\beta H}}\\
& \overset{(iii)}{\gtrsim}\frac{1}{(-\beta)}\cdot\frac{\log K\cdot p_{1}(1-p_{1})}{(-\Delta)}\cdot(e^{-\beta H}-1)\\
& =\frac{1}{(-\beta)}\cdot\sqrt{K\log K\cdot p_{1}(1-p_{1})}\cdot(e^{-\beta H}-1)\\
& \overset{(iv)}{=}\frac{1}{(-\beta)}\cdot\sqrt{K\log K}\cdot(e^{-\beta H/2}-1)\\
& \gtrsim\frac{1}{(-\beta)}\cdot\sqrt{T\log T}\cdot(e^{-\beta H/2}-1).
\end{align*}
In the above, step $(i)$ holds since $\beta<0$ and $\log(1+x)\le x$
for all $x>-1$; step $(ii)$ holds since $p_{2}=e^{\beta H}$ and
$\Delta,\beta<0$; step $(iii)$ holds by Equation (\ref{eq:EK0});
step $(iv)$ holds since $p_{1}\ge\frac{1}{2}e^{\beta H}$ and $1-p_{1}\ge\frac{1}{4}$
by construction; 
and the last step holds since $\frac{1}{(-\beta)}(e^{-\beta H/2}-1) \gtrsim H$ implied by Fact \ref{fact:exp_growth}.

It is not hard to see that the two-arm bandit model discussed above 
is also a special case of an $K$-episode $(H+2)$-horizon MDP with the
per-step reward in $[0,1]$, similar to the case $\beta>0$.

\subsection{Proof of Lemma \ref{lem:Bern_KL}\label{sec:proof_Bern_KL}}

Recall that $\Delta\coloneqq p-p'$. The KL divergence can be upper
bounded as follows:
\begin{align*}
D_{\text{KL}}(\text{Ber}(p')\Vert\text{Ber}(p)) & =p'\log\left(\frac{p'}{p}\right)+(1-p')\log\left(\frac{1-p'}{1-p}\right)\\
& =p'\log\left(1+\frac{p'-p}{p}\right)+(1-p')\log\left(1+\frac{p-p'}{1-p}\right)\\
& \overset{(i)}{\le}p'\cdot\frac{p'-p}{p}+(1-p')\cdot\frac{p-p'}{1-p}\\
& =(\Delta-p)\cdot\frac{\Delta}{p}+(1-p+\Delta)\cdot\frac{\Delta}{1-p}\\
& =\frac{\Delta^{2}}{p}+\frac{\Delta^{2}}{1-p}\\
& =\frac{\Delta^{2}}{p(1-p)},
\end{align*}
where step $(i)$ holds since $\log(1+x)\le x$ for all $x>-1$. The
proof is completed.

\end{document}